\def\blfootnote{\gdef\@thefnmark{}\@footnotetext}
\newcommand{\dataset}[1]{\{(x_i,y_i)\}_{i = 1}^{#1}}
\newcommand{\inputset}[1]{\{x_i\}_{i = 1}^{#1}}
\newcommand{\bb}{\mbf{b}}
\newcommand{\xmark}{\ding{53}}
\newcommand{\rplane}[1]{H_{#1}}
\newcommand{\rhplane}[2]{H_{#1}^{(\epsilon, #2)}}
\newcommand{\gN}{\mathbf{N}}
\newcommand{\gI}{\mathbf{I}}
\newcommand{\mW}{\mathbf{W}}
\newcommand{\mWf}{\mathbf{W}_1}
\newcommand{\mWfk}{\mathbf{W}_1^k}
\newcommand{\mWs}{\mathbf{W}_2}
\newcommand{\mWsk}{\mathbf{W}_2^k}
\newcommand{\tmbf}{\tilde{\mathbf{b}}}
\newcommand{\tmWfk}{\tilde{\mathbf{W}}_1^k}
\newcommand{\tmWsk}{\tilde{\mathbf{W}}_2^k}
\newcommand{\limW}[1]{\mathbf{W}_1^\star[#1]}
\newcommand{\limb}[1]{\mathbf{b}_1^\star[#1]}
\newcommand{\seqk}[1]{\{#1\}_{k \in \mb{N}}}
\newcommand{\func}{\mc{R}_{\theta^k}}
\newcommand{\funcN}{\mc{R}_\theta}
\newcommand{\tfuncN}{\mc{R}_{\tilde{\theta}}}
\newcommand{\tfuncp}{\mc{R}_{\tilde{\theta}^k}}
\newcommand{\mc}[1]{\mathcal{#1}}
\newcommand{\mb}[1]{\mathbb{#1}}
\newcommand{\mbf}[1]{\mathbf{#1}}
\newcommand{\overbar}[1]{\mkern 1.5mu\overline{\mkern-1.5mu#1\mkern-1.5mu}\mkern 1.5mu}
\newcommand{\vertiii}[1]{{\left\vert\kern-0.25ex\left\vert\kern-0.25ex\left\vert #1 
		\right\vert\kern-0.25ex\right\vert\kern-0.25ex\right\vert}}
\newcommand{\paramsk}{(\mathbf{W}^k_i,\mathbf{b}^k_i)_{i = 1}^L}
\newcommand{\paramtwo}{\{(\mathbf{W}_i,\mathbf{b}_i)_{i = 1}^2\}}
\newcommand{\tparamtwo}{\{(\tilde{\mathbf{W}}_i,\tilde{\mathbf{b}}_i)_{i = 1}^2\}}
\newcommand{\paramsktwo}{(\mathbf{W}^k_i,\mathbf{b}^k_i)_{i = 1}^2}
\newcommand{\tparamsktwo}{\{(\tilde{\mathbf{W}}^k_i,\tilde{\mathbf{b}}^k_i)_{i=1}^2\}}
\newcommand{\ttheta}{\tilde{\theta}}
\newtheorem{theorem}{Theorem}[section]
\newtheorem{question}{Question}[section]
\newtheorem{corollary}{Corollary}[section]
\newtheorem{lemma}[theorem]{Lemma}
\newtheorem{example}{Example}[section]
\newtheorem{proposition}{Proposition}[section]
\newtheorem{definition}{Definition}[section]
\DeclareMathOperator*{\argmin}{arg\,min}
\newcommand{\supp}[0]{\mathtt{supp}}
\newcommand\intset[1]{\llbracket #1 \rrbracket}
\newcommand\col[1]{{:, {#1}}}
\newcommand\row[1]{{{#1}, :}}
\newcommand\rowj{j,:}
\newcommand*{\cB}{\mathcal{B}}
\newcommand*{\cD}{\mathcal{D}}
\newcommand*{\cF}{\mathcal{F}}
\newcommand*{\cL}{\mathcal{L}}
\newcommand*{\cI}{\mathcal{I}}
\newcommand*{\cN}{\mathcal{N}}
\newcommand*{\cR}{\mathcal{R}}
\newcommand*{\RR}{\mathbb{R}}
\newcommand*{\bA}{\mathbf{A}}
\newcommand*{\bB}{\mathbf{B}}
\newcommand*{\bD}{\mathbf{D}}
\newcommand*{\bI}{\mathbf{I}}
\newcommand*{\bQ}{\mathbf{Q}}
\newcommand*{\bX}{\mathbf{X}}
\newcommand*{\bY}{\mathbf{Y}}
\title{Does a sparse ReLU network training problem\\ always admit an optimum?  }
\author{%
	Quoc-Tung Le
    \And
    Elisa Riccietti 
    \And
    Rémi Gribonval
    \AND
    { }\\
    Univ. Lyon, Inria, CNRS, ENS de Lyon,
UCB Lyon 1, LIP UMR 5668, F-69342
Lyon, France
}
\begin{document}
	\nolinenumbers
	\maketitle
	
	\begin{abstract}
Given a training set, a loss function, and a neural network architecture, it is often taken for granted that optimal network parameters exist, and a common practice is to apply available optimization algorithms to search for them. In this work, we show that the existence of an optimal solution is not always guaranteed, especially in the context of {\em sparse} ReLU neural networks.
In particular, we first show that optimization problems involving deep networks with certain sparsity patterns do not always have optimal parameters, and that optimization algorithms may then  diverge. Via 
a new topological relation between sparse ReLU neural networks and their linear counterparts, we derive --using existing tools from real algebraic geometry-- an  algorithm to verify that a given sparsity pattern suffers from this issue. Then, the existence of a global optimum is proved for every concrete optimization problem involving 
a {one-hidden-layer} sparse ReLU neural network of output dimension one. Overall, the analysis is based on the investigation of two topological properties of the space of functions implementable as sparse ReLU neural networks: a best approximation property, and a closedness property, both in the uniform norm. This is studied both for (finite) domains corresponding to practical training on finite training sets, and for more general domains such as the unit cube.
This allows us to provide conditions for the guaranteed existence of an optimum given a sparsity pattern. The results apply not only to several sparsity patterns proposed in recent works on network pruning/sparsification, but also to classical dense neural networks, including architectures not covered by existing results. 
\end{abstract}
	
\section{Introduction}

The optimization phase in deep learning consists in minimizing an objective function w.r.t. the set of parameters $\theta$ of a neural network (NN). 
While it is arguably sufficient for optimization algorithms to find local minima in practice, training is also expected to achieve the infimum in many situations (for example, in overparameterized regimes networks are trained to zero learning error).

In this work, we take a step back and study a rather fundamental question: \emph{Given a deep learning architecture possibly with sparsity constraints, does its corresponding optimization problem actually admit an optimal $\theta^*$?} The question is important for at least two reasons:
\begin{enumerate}[leftmargin=*]
	\item Practical viewpoint: If the problem does not admit an optimal solution, optimized parameters necessarily diverge to infinity to approximate the infimum (which always exists). This phenomenon has been studied thoroughly in previous works in other contexts such as tensor decomposition \cite{TensorRank}, robust principal component analysis \cite{PCAillposed}, sparse matrix factorization \cite{papiersimax} and also deep learning itself \cite{topoNN, sobolevnonclosed, sigmoidnonclosed} . It causes inherent numerical instability for optimization algorithms. Moreover, the answer to this question depends on the architecture of the neural networks (specified by the number of layers, layers width, activation function, and so forth). A response to this question might suggest a guideline for model and architecture selection.
	\item Theoretical viewpoint: the existence of optimal solutions is crucial for the analysis of algorithms and their properties (for example, the properties of convergence, or the characterization of properties of the optimum, related to the notion of implicit bias).
\end{enumerate}

One usual practical (and also theoretical) trick to bypass the question of the existence of optimal solutions is to add a regularization term, which is usually coercive, e.g., the $L^2$ norm of the parameters. The existence of optimal solutions then 
follows by a classical argument on the extrema of a continuous function in a compact domain. Nevertheless, there are many settings where minimizing the regularized version might result in a high value of the loss
since the algorithm has to make a trade-off between the loss 
and the regularizer. 
Such a scenario is discussed in \Cref{ex:LUnonclosed}. Therefore, studying the existence of optimal solutions without (explicit) regularization is also a question of interest.

Given a training set $\dataset{P}$, the problem of the existence of optimal solutions can be studied from the point of view of the set of functions implementable by the considered network architecture on the finite input domain $\Omega = \inputset{P}$. This is the case since the loss is usually of the form $\ell(f_\theta(x_i), y_i)$ where $f_\theta$ is the realization of the neural network with parameters $\theta$. Therefore, the loss involves directly the image of $\inputset{P}$ under the function $f_\theta$. For theoretical purposes, we also study the function space on the domain $\Omega = [-B,B]^d, B > 0$. In particular, we investigate two topological properties of these function spaces, both w.r.t. the infinity norm $\|\cdot\|_\infty$: the best approximation property (BAP), 
{i.e., the guaranteed}
 existence of an optimal solution $\theta^*$, and the closedness, a necessary property for the BAP. These properties are studied in \Cref{section:omegafinite} and \Cref{section:omegahypercube}, respectively. {Most of our analysis is dedicated to the case of \emph{regression problems}. We do make some links to the case of classification problems in \Cref{section:omegafinite}.}

We particularly focus on analyzing the function space associated with \emph{(structured) sparse ReLU neural networks}, which is motivated by recent advances in machine learning witnessing a compelling empirical success of sparsity based methods in NNs and deep learning techniques, such as pruning \cite{zhu2017prune, han2015learning}, sparse designed NN \cite{dao2022monarch,DBLP:journals/corr/abs-1903-05895}, or the lottery ticket hypothesis \cite{lottery} to name a few. 
Our approach exploits the notion of networks \emph{either with fixed sparsity level or with fixed sparsity pattern (or support)}. This allows us to establish results covering both classical NNs (whose weights are not contrained to be sparse) and sparse NNs architectures.
%
%
%
Our main contributions are:
\begin{enumerate}[leftmargin=*]
	\item \textbf{To study the BAP (i.e., the existence of optimal solutions) in practical problems (finite $\Omega$)}: we provide a necessary condition and a sufficient one on the architecture (embodied by a sparsity pattern) to guarantee such existence. As a particular consequence of our results, we show that: a) \emph{for {one-hidden-layer} NNs with a fixed sparsity level}, the {training problem} on a finite data set \emph{always admits an optimal solution} (cf. \Cref{theorem:omegafinitesetbestapproximationproperty} and \Cref{cor:multiplefixedsupport}); b) however, practitioners should be cautious since \emph{there also exist fixed sparsity patterns that do not guarantee the existence of optimal solutions} (cf. \Cref{theorem:finitepointsnonclosed} and \Cref{ex:LUnonclosed}). In the context of an emerging emphasis on \emph{structured sparsity} (e.g. for GPU-friendliness), this highlights the importance of choosing adequate sparsity patterns.
	
	\item \textbf{To study the closedness of the function space on $\Omega = [-B,B]^d$}. As in the finite case, we provide a necessary condition and a sufficient one for the closedness of the function space of ReLU NNs with a fixed sparsity pattern. In particular,
    our sufficient condition on {one-hidden-layer} networks generalizes the closedness results of \cite[Theorem 3.8]{topoNN} on ``dense' {one-hidden-layer} ReLU NNs
    to the case of sparse ones, either with fixed sparsity pattern (cf. \Cref{theorem:generalizedclosednesscondition}, \Cref{lemma:fullsupportclosedness} and \Cref{corollary:sparsedimensiononeclosedness}) or fixed sparsity level (\Cref{cor:multifixsupportcube}). Moreover, our necessary condition (\Cref{theorem:nonclosedsufficient}), which is also applicable to deep architectures, exhibits sparsity structures failing the closedness property. 
\end{enumerate}
\Cref{table:positiveresult} and \Cref{table:negativeresult} summarize our results and their positioning with respect to existing ones. Somewhat surprisingly, the necessary conditions in both domains ($\Omega$ finite and $\Omega = [-B,B]^d$) are identical. Our necessary/sufficient conditions also suggest a relation between sparse ReLU neural networks and their linear counterparts.

The rest of this paper is organized as follows: \Cref{section:relatedworks} discusses related works and introduces notations; the two technical sections, \Cref{section:omegafinite} and \Cref{section:omegahypercube}, presents the results for the case $\Omega$ finite set and $\Omega = [-B,B]^d$ respectively.

\begin{table}[h]
\renewcommand{\arraystretch}{2}
    \scriptsize
	\centering
	\begin{tabular}{|>{\centering\arraybackslash} p{1.3cm}|>{\centering\arraybackslash}m{2.3cm}|>{\centering\arraybackslash}m{1.5cm}|>{\centering\arraybackslash}m{1cm}|>{\centering\arraybackslash}m{3.65cm}|>{\centering\arraybackslash}m{1.45cm}|}
		\hline
		\multirow{2}{*}{Works} & \multirow{2}{*}{Architecture} & \multirow{2}{*}{\makecell{Activation\\ functions}} & \multirow{2}{*}{$\Omega$} & \multirow{2}{*}{Function space} & \multirow{2}{*}{BAP}  \\
		& &  & & & \\
		\hline
        \Cref{theorem:omegafinitesetbestapproximationproperty} \Cref{cor:multiplefixedsupport} & Sparse feed-forward network & ReLU & finite set & $(\RR^{P \times d_{o}}, \|\cdot\|)$, arbitrary $\|\cdot\|$ & \checkmark \\
		\hline\hline
		\cite{heavysideclosed}\cite{heavysideprojection} & Feed-forward network & Heavyside & $[0,1]^d$ & $(L^p(\Omega), \|\cdot\|_{L^p})$, $\forall p \in [1, \infty)$ & \checkmark \\ \hline
        {\cite{dereich2023existence}\cite{dereich2023existenceresidual}$^\diamond$} & {\makecell{Feed-forward network,\\Residual feed-forward\\ network}} & {ReLU} & {$\RR^d$} & {$(L^p_\mu(\Omega), \|\cdot\|_{L^p})$, $p = 2$, $\mu$ is a measure with compact support and is continuous w.r.t Lebesgue measure}  & {\makecell{\checkmark (if the \\target function \\is continuous)}} \\ \hline
		\cite{topoNN} & Feedforward network & ReLU, pReLU & $[-B,B]^d$ & $(C^0(\Omega), \|\cdot\|_\infty)$ & \xmark \\
		\hline\hline
		\Cref{lemma:fullsupportclosedness}$^\dagger$ & Feed-forward network & ReLU & $[-B,B]^d$ &  $(C^0(\Omega), \|\cdot\|_\infty)$ & \xmark \\
		\hline
		\Cref{corollary:sparsedimensiononeclosedness} \Cref{cor:multifixsupportcube} & Sparse feed-forward network & ReLU & $[-B,B]^d$ & $(C^0(\Omega), \|\cdot\|_\infty)$ & \xmark \\
		\hline
	\end{tabular}
    \caption{\textbf{Closedness} results. 
    All results are established for \emph{{one-hidden-layer}} architectures with \emph{scalar-valued} output, 
    except $\dagger$ (which is valid for {one-hidden-layer} architectures with vector-valued output). {In $\diamond$, if the architecture is simply a feed-forward network, then the result is valid for any $p > 1$.}
    }
	\label{table:positiveresult}
\end{table}

\begin{table}[h]
    \scriptsize
	\centering
	\begin{tabular}{|>{\centering\arraybackslash} p{1.3cm}|>{\centering\arraybackslash}m{1.3cm}|>{\centering\arraybackslash}m{2.4cm}|>{\centering\arraybackslash}m{2.5cm}|>{\centering\arraybackslash}m{0.81cm}|>{\centering\arraybackslash}m{1.41cm}|>{\centering\arraybackslash}m{1.06cm}|}
		\hline
		\multirow{2}{*}{Works} & \multirow{2}{*}{Architecture} & \multirow{2}{*}{\makecell{Activation\\ functions}} & \multirow{2}{*}{Function space} & \multicolumn{3}{c|}{Assumptions are valid for any \ldots} \\
		\cline{5-7}
		& &  & & $L$ & $N_{L-1}$ & $N_L$  \\
		\hline
		\cite{sigmoidnonclosed} & Feedforward network & Sigmoid & $(C^0(\Omega), \|\cdot\|_\infty)$ &\makecell{\xmark \\ ($L=2$)} &\makecell{\xmark \\ ($N_{L-1}\geq2$)} & \makecell{\xmark \\($N_{L}=1$)} \\ \hline
        {\cite{nnillposed}$^\diamond$
        } & {Feedforward network} & {ReLU} & {$(\RR^{{N_L} \times {P}}, \|\cdot\|)$, {$P=6$}} & {\makecell{\xmark \\ ($L=2$)}} &{\makecell{\xmark \\ ($N_{L-1} = 2$)}} & {\makecell{\xmark \\($N_{L}=2$)}} \\ 
		\hline
		\multirow{2}{*}[-1.2em]{\cite{topoNN}} & \multirow{2}{*}[-1.2em]{\makecell{Feedforward\\ network}} & sigmoid, tanh, arctan, ISRLU, ISRU & $(C^0(\Omega), \|\cdot\|_\infty)$ & \multirow{3}{*}[-4em]{\checkmark} & \multirow{3}{*}[-4em]{\makecell{\xmark \\ ($N_{L-1}\geq2$)}} & \multirow{3}{*}[-4em]{\makecell{\xmark \\($N_{L}=1$)}} \\ \cline{3-4}
		& &sigmoid, tanh, arctan, ISRLU, ISRU, ReLU, pReLU & $(L^p(\Omega), \|\cdot\|_{L^p})$ & & & \\ \cline{1-4}
		\multirow{3}{*}[-1.2em]{\cite{sobolevnonclosed}} & \multirow{3}{*}[-1.2em]{\makecell{Feedforward\\ network}}  & ELU, softsign & $(W^{1,p}(\Omega), \|\cdot\|_{L^p})$ $\forall p \in [1, \infty]$ & & & \\ \cline{3-4}
		& & ISRLU & $(W^{2,p}(\Omega), \|\cdot\|_{L^p})$ $\forall p \in [1, \infty]$ & & & \\\cline{3-4}
		& & ISRU, sigmoid, tanh, arctan & $(W^{k,p}(\Omega), \|\cdot\|_{L^p})$ $\forall k, \forall p \in [1, \infty]$ & & & \\
		\hline\hline
		\Cref{theorem:nonclosedsufficient}$^\ddagger$ & Sparse feedforward network & ReLU & $(C^0(\Omega), \|\cdot\|_\infty)$ & \checkmark & \checkmark & \checkmark  \\
		\hline
        \Cref{theorem:finitepointsnonclosed}$^\diamond$ & Sparse feedforward network & ReLU & $(\RR^{N_L \times P}, \|\cdot\|)$ & \checkmark & \checkmark & \checkmark  \\
        \hline
	\end{tabular}
    \caption{\textbf{Non-closedness} results (notations in \Cref{subsec:notations}). Previous results consider $\Omega = [-B,B]^d$; ours cover: $\diamond$ a finite $\Omega$ {with $P$ points};  $\ddagger$ a bounded $\Omega$ with non-empty interior (this includes $\Omega = [-B,B]^d$).}
	\label{table:negativeresult}
\end{table}

\section{Related works}
\label{section:relatedworks}

The fact that optimal solutions may not exist in tensor decomposition problems is well-documented \cite{TensorRank}.
The cause of this phenomenon (also referred to as \emph{ill-posedness}
\cite{TensorRank}) is the non-closedness of the set of tensors of order at least three and of rank at least two. Similar phenomena were shown to happen in various settings such as matrix completion \cite[Example 2]{WLRANPHard}, robust principal component analysis \cite{PCAillposed} and sparse matrix factorization \cite[Remark A.1]{papiersimax}. Our work indeed establishes bridges between the phenomenon on sparse matrix factorization \cite{papiersimax} and on sparse ReLU NNs.

There is also an active line of research on the best approximation property and closedness of function spaces of neural networks. Existing results can be classified into two categories: \emph{negative} results, which demonstrate the non-closedness and \emph{positive} results for those showing the closedness or best approximation property of function spaces of NNs. Negative results can notably be found in \cite{sigmoidnonclosed,topoNN,sobolevnonclosed}, showing that the set of functions implemented as conventional multilayer perceptrons with various activation functions such as Inverse Square Root Linear Unit (ISRLU), Inverse Square Root Unit (ISRU), parametric ReLU (pReLU), Exponential Linear Unit (ELU) \cite[Table 1]{topoNN} is not a closed subset of classical function spaces (e.g., the Lebesgue spaces $L^p$, the set of continuous functions $C^0$ equipped with the sup-norm, or Sobolev spaces $W^{k,p}$). {In a more practical setting, \cite{nnillposed} hand-crafts a dataset of six points which makes the training problem of a dense one-hidden-layer neural network not admit any solution}. Positive results are proved in \cite{topoNN,heavysideclosed,heavysideprojection}, which establish both the closedness and/or the BAP.
The BAP implies closedness \cite[Proposition 3.1]{sigmoidnonclosed}\cite[Section 3]{topoNN} (but the converse is not true, see \Cref{app:BAPvsClosedness}) hence the BAP can be more difficult to prove than closedness. So far, the only architecture proved to admit the best approximation property (and thus, also closedness) is \emph{{one-hidden-layer} neural networks} with \emph{heavyside activation} function and \emph{scalar-valued output} (i.e., output dimension equal to \emph{one}) \cite{heavysideprojection} in $L^p(\Omega), \forall p \in [1, \infty]$. {If one allows additional assumptions such as the target function $f$ being continuous, then BAP is also established for one-hidden layer and residual one-hidden-layer NNs with ReLU activation function \cite{dereich2023existence,dereich2023existenceresidual}.}
In all other settings, to the best of our knowledge, the only property proved in the literature is closedness, but the BAP remains elusive.
We compare our results with existing works in \Cref{table:negativeresult,table:positiveresult}.
In machine learning, there is an ongoing endeavour to explore sparse deep neural networks, as a prominent approach to reduce  memory and computation overheads inherent in deep learning. One of its most well-known methods is Iterative Magnitude Pruning (IMP), which iteratively trains and prunes connections/neurons to achieve a certain level of sparsity. This method is employed in various works \cite{han2015learning,zhu2017prune}, and is related to the so-called Lottery Ticket Hypothesis (LTH) \cite{lottery}. The main issue of IMP is its running time: one typically needs to perform many steps of pruning and retraining to achieve a good trade-off between sparsity and performance. To address this issue, many works attempt to identify the sparsity patterns of the network before training. Once they are found, it is sufficient to train the sparse neural networks once. These \emph{pre-trained} sparsity patterns can be found through algorithms \cite{tanaka2020,Wang2020Picking,lee_snip_iclr19} or leveraging the sparse structure of well-known fast linear operators such as the Discrete Fourier Transform \cite{DBLP:journals/corr/abs-1903-05895, dao2022monarch, deformable_butterfly, kaleidoscope, pixelatedbutterfly}. Regardless of the approaches, these methods are bound to train a neural network with \emph{fixed sparsity pattern} at some points. This is a particular motivation for our work and our study on the best approximation property of sparse ReLU neural networks with fixed sparsity pattern.

\paragraph{Notations}
\label{subsec:notations}
In this work, $\intset{n}:= \{1, \ldots, n\}$. For a matrix $\mbf{A} \in \mb{R}^{m \times n}$, $\mbf{A}[i,j]$ denotes the coefficient at the index $(i,j)$; for subsets $S_r \subseteq \intset{m}, S_c \subseteq \intset{n}$, 
$\mbf{A}[\row{S_r}]$ (resp. $\mbf{A}[\col{S_c}]$) is a matrix of the same size as $\mbf{A}$ and agrees with $\mbf{A}$ on rows in $S_r$ (resp. columns in $S_c$) of $\mbf{A}$ while its remaining coefficients are zero.
The operator $\supp(\mbf{A}):= \{(\ell,k) \mid \mbf{A}[\ell,k] \neq 0\}$ returns the \emph{support} of the matrix $\mbf{A}$. We denote $\mbf{1}_{m \times n}$ (resp. $\mbf{0}_{m \times n}$) an all-one (resp. all-zero) matrix of size $m \times n$.

An architecture with fixed sparsity pattern is specified via $\gI = (I_L, \ldots, I_1)$, a collection of binary masks 
$I_{i} \in \{0,1\}^{N_{i} \times N_{i-1}}, 1 \leq i \leq L$, where the tuple $(N_L, \ldots, N_0)$ denotes the dimensions of the input layer $N_0 = d$, hidden layers ($N_{L-1},\ldots, N_1$) and output layer ($N_L$), respectively. The binary mask $I_i$ encodes the support constraints on the $i$th weight matrix $\mW_i$, i.e., $I_i[\ell, k] = 0$ implies $\mbf{W}_i[\ell, k] = 0$. It is also convenient to think of $I_i$ as the set $\{(\ell, k) \mid I_i[\ell,k] = 1\}$, a subset of $\intset{N_{i}} \times \intset{N_{i-1}}$. We will use these two interpretations (binary mask and subset) interchangeably and the meaning should be clear from context. {We will even abuse notations by denoting $I_{l} \subseteq \mbf{1}_{N_{l} \times N_{l-1}}$.} Because the support constraint $I$ can be thought as a binary matrix, the notation $I[\row{S_r}]$ (resp. $I[\col{S_c}]$) represents the support constraint of $I \cap S_r \times \intset{n}$ (resp. $I \cap \intset{n} \times S_c$). 

The space of parameters on the sparse architecture $\gI$ is denoted $\mc{N}_{\gI}$, and for each $\theta \in \mc{N}_{\gI}$, $\mc{R}_\theta: \RR^{N_0} \mapsto \RR^{N_L}$ is the function implemented by the ReLU network with parameter $\theta$:
 \begin{equation}\label{eq:DefRealization}
 	\mc{R}_\theta: x \in \mb{R}^{N_{0}} \mapsto \mc{R}_\theta(x) := \mbf{W}_L\sigma(\ldots\sigma(\mbf{W}_1x + \mbf{b}_1)\ldots + \mbf{b}_{L-1}) + \mbf{b}_L \in \mb{R}^{N_{L}}
\end{equation}
 where $\sigma(x) = \max(0, x)$ is the ReLU activation. 

Finally, for a given architecture $\gI$, we define
\begin{equation}\label{eq:LinearI}
    \cL_\gI = \{\bX_L\ldots\bX_1 \mid \supp(\bX_i) \subseteq I_i, i \in \intset{L}\} \subseteq \RR^{N_L \times N_0}
\end{equation}
    the set of matrices factorized into $L$ factors respecting the support constraints $I_i, i \in \intset{L}$. In fact, $\cL_\gI$ is the set of linear operators implementable as \emph{linear} neural networks (i.e., with $\sigma = \mathtt{id}$ instead of the ReLU in~\eqref{eq:DefRealization}, and no biases) with parameters $\theta \in \cN_\gI$.

\section{Analysis of fixed support ReLU neural networks for finite $\Omega$}
\label{section:omegafinite}
The setting of a finite set $\Omega = \inputset{P}$ is common in many practical machine learning tasks: models such as (sparse) neural networks are trained on often large (but finite) annotated dataset $\cD = \{(x_i, y_i)\}_{i = 1}^P$. The optimization/training problem usually takes the form:
\begin{equation}
    \label{eq:trainingintro}
    \underset{\theta}{\text{Minimize}} \qquad\cL(\theta) = \sum_{i = 1}^P \ell(\mc{R}_\theta(x_i) , y_i),
    \qquad \text{under sparsity constraints on}\ \theta
\end{equation}
where $\ell$ is a loss function measuring the similarity between $\mc{R}_\theta(x_i)$ and $y_i$. A natural question that we would like to address for this task is:
\begin{question}
    \label{question:conditionIfinite}
	Under which conditions on $\gI$, the prescribed sparsity pattern for $\theta$, does the training problem of sparse neural networks admit an optimal solution for any finite data set $\cD$?
\end{question}
We investigate this question both for parameters $\theta$ constrained to satisfy a \emph{fixed} sparsity pattern $\gI$, and in the case of a fixed sparsity level, see e.g. \Cref{cor:multifixsupportcube}.

After showing in~\Cref{subsec:bapfinite} that the answer to \Cref{question:conditionIfinite}  is intimately connected with the closedness of the function space of neural networks with architecture $\gI$, we establish in~\Cref{subsec:necfinite} that this closedness implies the closedness of the matrix set $\mc{L}_\gI$ (a property that can be checked using algorithms from real algebraic geometry, see \Cref{subsec:decidable}). We also provide concrete examples of support patterns $\gI$ where closedness provably fails, and neural network training can diverge. \Cref{subsec:bapscalarvaluedfinite} presents sufficient conditions for closedness that enable us to show that an optimal solution always exists on scalar-valued {one-hidden-layer} networks under a constraint on the sparsity level of each layer.

\subsection{Equivalence between closedness and best approximation property}
\label{subsec:bapfinite}
To answer \Cref{question:conditionIfinite}, 
it is 
convenient to view $\Omega$ as the matrix $[x_1,\ldots,x_P] \in \RR^{d \times P}$ and to consider 
the function space implemented by neural networks with the given architecture $\gI$ on the input domain $\Omega$ in dimension $d=N_0$, with output dimension $N_L$, defined as the set
\begin{equation}
    \label{eq:defofFI}
    \cF_\gI(\Omega) := \{\cR_\theta(\Omega) \mid \theta \in \cN_\gI\} \subseteq \RR^{N_L \times P}
\end{equation}
where the matrix $\mc{R}_\theta(\Omega) := \bigl[\mc{R}_\theta(x_1), \ldots, \mc{R}_\theta(x_P)\bigr] \in \mb{R}^{N_L \times P}$ is the image under $\cR_\theta$ of $\Omega$.

We study the closedness of $\cF_\gI(\Omega)$ under the usual topology induced by any norm $\|\cdot\|$ of $\RR^{N_L \times P}$. This property is interesting because if $\cF_\gI(\Omega)$ is closed for any $\Omega = \{x_i\}_{i=1}^P$, then an optimal solution is guaranteed to exist for any $\cD$ under 
classical assumptions of $\ell(\cdot, \cdot)$. The following result is not difficult to prove, we nevertheless provide a proof in \Cref{appendix:bap=closed} for completeness.
\begin{proposition}
    \label{prop:bap=closed}
    Assume that, for any fixed $y \in \RR^{N_L}$, $\ell(\cdot, y): \RR^{N_L} \mapsto \RR$ is continuous, coercive and that $y = \argmin_{y'} \ell(y',y)$. For any sparsity pattern $\gI$ with input dimension $N_0=d$ the following properties are equivalent:\begin{enumerate}[leftmargin=0.5cm]
        \item irrespective of the training set, problem~\eqref{eq:trainingintro} under the constraint $\theta \in \mc{N}_\gI$ has an optimal solution;
        \item for every $P$ and every $\Omega \in \RR^{d \times P}$, the function space $\cF_\gI(\Omega)$ is a closed subspace of $\RR^{N_L \times P}$.
    \end{enumerate}
\end{proposition}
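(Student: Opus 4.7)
The plan is to prove the equivalence by treating the two implications separately. Throughout, for a training set with inputs $x_1,\ldots,x_P$, I identify $\Omega$ with the matrix $[x_1,\ldots,x_P]\in\RR^{d\times P}$ and introduce the auxiliary function $F\colon Z\in \RR^{N_L\times P} \mapsto \sum_{i=1}^P \ell(Z[:, i], y_i) \in \RR$, so that $\cL(\theta)=F(\cR_\theta(\Omega))$ and the training problem is equivalent to minimizing $F$ over the image set $\cF_\gI(\Omega)$ defined in~\eqref{eq:defofFI}.

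For the direction (2) $\Rightarrow$ (1), I would first verify that $F$ inherits continuity from the $\ell(\cdot,y_i)$, and is coercive because if $\|Z\|\to\infty$ then at least one column norm $\|Z[:, i]\|$ diverges and the corresponding coercive term dominates while the other terms remain bounded below (a continuous coercive function on $\RR^{N_L}$ attains its infimum and is thus bounded below). Consequently, any nonempty sublevel set of $F$ is compact, and its intersection with the closed set $\cF_\gI(\Omega)$ remains compact and nonempty, so $F$ attains its minimum on $\cF_\gI(\Omega)$ at some $Z^\star$. Any $\theta^\star\in\cN_\gI$ with $\cR_{\theta^\star}(\Omega)=Z^\star$ then solves~\eqref{eq:trainingintro}.

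For the direction (1) $\Rightarrow$ (2), I would argue by contraposition. Suppose $\cF_\gI(\Omega)$ is not closed for some $P$ and some $\Omega=[x_1,\ldots,x_P]$; pick $Z^\star\in\overline{\cF_\gI(\Omega)}\setminus\cF_\gI(\Omega)$ and a sequence $\theta_k\in\cN_\gI$ with $\cR_{\theta_k}(\Omega)\to Z^\star$. Define a training set by setting $y_i := Z^\star[:, i]$ for $i\in\intset{P}$. Continuity of each $\ell(\cdot,y_i)$ yields $\cL(\theta_k)\to\sum_i\ell(y_i,y_i)$, while the hypothesis $y=\argmin_{y'}\ell(y',y)$ gives the pointwise bound $\ell(y',y_i)\geq\ell(y_i,y_i)$ for every $y'$, hence $\cL(\theta)\geq\sum_i\ell(y_i,y_i)$ for every $\theta$. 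Combining, $\inf\cL=\sum_i\ell(y_i,y_i)$. If an optimum $\theta^\star$ existed, the equality $\sum_i\ell(\cR_{\theta^\star}(x_i),y_i)=\sum_i\ell(y_i,y_i)$ would force each term to attain its minimum, and uniqueness of the $\argmin$ would force $\cR_{\theta^\star}(x_i)=y_i$ for all $i$, i.e.\ $\cR_{\theta^\star}(\Omega)=Z^\star\in\cF_\gI(\Omega)$, contradicting the choice of $Z^\star$.

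The main subtlety I expect to handle carefully is the interpretation of the assumption $y=\argmin_{y'}\ell(y',y)$: the second direction requires reading it as \emph{uniqueness} of the minimizer of $\ell(\cdot,y)$, as otherwise the equality of the summed minimized values does not imply $\cR_{\theta^\star}(x_i)=y_i$ for each individual $i$. Granted this standard convention (satisfied e.g.\ for $\ell(y',y)=\|y'-y\|^p$ with $p\geq 1$ for the strictly convex case $p>1$, and more generally whenever $\ell(\cdot,y)$ has $y$ as its unique global minimizer), the remainder of the argument reduces to routine continuity/coercivity bookkeeping with no significant obstacle.
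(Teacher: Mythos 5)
Your proposal is correct and follows essentially the same route as the paper's proof: the forward direction reduces minimization to a compact subset of $\cF_\gI(\Omega)$ via continuity and coercivity (the paper intersects with a ball $\cB(0,C)$ where you use a sublevel set, which is the same idea), and the reverse direction is the identical contrapositive construction taking $y_i$ to be the columns of a point in $\overline{\cF_\gI(\Omega)}\setminus\cF_\gI(\Omega)$ and invoking uniqueness of the minimizer of $\ell(\cdot,y)$. The subtlety you flag about reading $y=\argmin_{y'}\ell(y',y)$ as uniqueness is indeed exactly how the paper uses the assumption.
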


The assumption on $\ell$ is natural and realistic in \emph{regression} problems: any loss function based on any norm on $\RR^d$ (e.g. $\ell(y',y) = \|y' - y\|$), such as the quadratic loss, satisfies this assumption. {In the classification case, using the soft-max after the last layer together with the cross-entropy loss function indeed leads to an optimization problem with no optimum (regardless of the architecture) when given a \emph{single} training pair. This is due to the fact that changing either the bias or the scales of the last layer can lead the output of the soft-max arbitrarily close to an ideal Dirac mass. It is an interesting challenge to identify whether sufficiently many and diverse training samples (as in concrete learning scenarios) make the problem better posed, and amenable to a relevant closedness analysis.}

In light of \Cref{prop:bap=closed} we investigate next the closedness of $\cF_\gI(\Omega)$ for finite $\Omega$.
\subsection{A necessary closedness condition for fixed support ReLU networks}
\label{subsec:necfinite}
Our next result reveals connections between the closedness of $\cF_\gI(\Omega)$ for finite $\Omega$ and the closedness of $\cL_\gI$, the space of sparse matrix products with sparsity pattern $\gI$.
\begin{theorem}
	\label{theorem:finitepointsnonclosed}
	If $\mc{F}_\gI(\Omega)$ is closed for every finite 
 $\Omega$ 
 then $\mc{L}_\gI$ is closed.
\end{theorem}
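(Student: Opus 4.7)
The plan is to prove the contrapositive. Assuming $\cL_\gI$ is not closed, I would pick factorisations $\mathbf{M}_n = \mathbf{X}_L^{(n)} \cdots \mathbf{X}_1^{(n)}$ with $\supp(\mathbf{X}_\ell^{(n)}) \subseteq I_\ell$ and $\mathbf{M}_n \to \mathbf{M} \notin \cL_\gI$, and exhibit a specific finite $\Omega$ for which $\mathbf{M}\Omega$ lies in the closure of $\cF_\gI(\Omega)$ but not in $\cF_\gI(\Omega)$. The strategy combines a ``bias trick'' that embeds each $\mathbf{M}_n$ as a genuine ReLU realisation on $\Omega$, with a pigeonhole argument on the affine pieces of any hypothetical limiting network.

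First (bias trick), I would show $\mathbf{M}_n \Omega \in \cF_\gI(\Omega)$ for any finite $\Omega$. Choosing $C_n > \max_{\ell<L,\, x \in \Omega}\|\mathbf{X}_\ell^{(n)}\cdots \mathbf{X}_1^{(n)} x\|_\infty$, set $\mathbf{W}_\ell := \mathbf{X}_\ell^{(n)}$, $\mathbf{b}_1 := C_n \mathbf{1}$, $\mathbf{b}_\ell := C_n\mathbf{1} - C_n \mathbf{X}_\ell^{(n)} \mathbf{1}$ for $2 \le \ell \le L-1$, and $\mathbf{b}_L := -C_n \mathbf{X}_L^{(n)}\mathbf{1}$. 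Since sparsity is only imposed on weights, this $\theta_n$ lies in $\cN_\gI$. A short induction shows that the pre-activation at hidden layer $\ell$ equals $\mathbf{X}_\ell^{(n)}\cdots \mathbf{X}_1^{(n)} x + C_n \mathbf{1}$, entrywise non-negative on $\Omega$ by the choice of $C_n$; hence every ReLU acts as identity, the biases telescope away, and $\cR_{\theta_n}(x) = \mathbf{M}_n x$ on $\Omega$. Continuity of $\mathbf{M} \mapsto \mathbf{M}\Omega$ then gives $\mathbf{M}_n \Omega \to \mathbf{M}\Omega$.

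Next, I would invoke two structural facts about every $\cR_\theta$ with $\theta \in \cN_\gI$: (a) it is continuous and piecewise-affine with at most $K = K(\gI)$ linear regions (a classical architecture-dependent bound on the number of linear regions of a ReLU network); (b) on the interior of each region the Jacobian equals $\mathbf{W}_L D_{L-1}\mathbf{W}_{L-1}\cdots D_1 \mathbf{W}_1$, with each $D_\ell$ diagonal and $\{0,1\}$-valued (recording the active ReLUs). Since left-multiplication by a diagonal matrix only zeros out rows, $\supp(D_\ell \mathbf{W}_\ell) \subseteq \supp(\mathbf{W}_\ell) \subseteq I_\ell$, so this Jacobian factors as an element of $\cL_\gI$.

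Finally, I would take $\Omega$ of cardinality $K N_0 + 1$ in \emph{general position}---every $(N_0+1)$-subset is affinely independent, a generic condition whose complement is a finite union of proper algebraic subvarieties. Supposing $\cF_\gI(\Omega)$ closed, the bias trick and $\mathbf{M}_n\Omega \to \mathbf{M}\Omega$ would produce $\theta^* \in \cN_\gI$ with $\cR_{\theta^*}(y) = \mathbf{M} y$ for every $y \in \Omega$. Pigeonhole then provides a closed linear region of $\cR_{\theta^*}$ containing $N_0+1$ points $y_0,\dots,y_{N_0} \in \Omega$, affinely independent by general position; on that region $\cR_{\theta^*}$ is affine, say $x \mapsto A x + c$, and the equations $A y_i + c = \mathbf{M} y_i$ force $A = \mathbf{M}$ and $c = 0$. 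Fact (b) then gives $\mathbf{M} = A \in \cL_\gI$, contradicting $\mathbf{M} \notin \cL_\gI$. The main obstacle is arranging that \emph{some} region of the unknown $\theta^*$ hosts enough $\Omega$-points to pin down the affine map---resolved precisely by the architecture-dependent bound $K(\gI)$ combined with the generic placement of $\Omega$; a minor subtlety is that $\Omega$-points may land on boundaries between regions that depend on $\theta^*$, but this is absorbed by working with \emph{closed} regions and using the continuity of $\cR_{\theta^*}$.
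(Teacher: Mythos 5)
Your argument is correct, and it shares two of its three ingredients with the paper's proof (which goes through \Cref{lemma:badfiniteomega}): the bias construction realising each product $\mathbf{M}_n$ exactly as a ReLU network on a bounded set (this is the paper's \Cref{lem:nonclosedsufficientlocal}), and the fact that the Jacobian of $\cR_\theta$ on any full-dimensional affine piece factors as an element of $\cL_\gI$ (\Cref{lemma:jacobian}). Where you genuinely diverge is in how you force a single affine piece of the hypothetical limiting network to contain enough points of $\Omega$ to pin down its Jacobian. The paper takes $\Omega$ to be a regular grid with $P=(3N_04^{\sum_{i=1}^{L-1}N_i}+1)^{N_0}$ points and proves a combinatorial lemma (\Cref{lemma:discretizedgridexistence}) guaranteeing that some elementary hypercube of the grid is not cut by any of the boundary hyperplanes, then uses the $N_0$ edge directions of that hypercube to identify the Jacobian with $\mathbf{A}$. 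You instead take $KN_0+1$ points in general position and pigeonhole them over the at most $K$ closed full-dimensional linear regions, obtaining $N_0+1$ affinely independent points in one region, which determine the affine map there. Your route is shorter, dispenses with the hyperplane-counting lemma entirely, and yields a witness set $\Omega$ whose cardinality is linear in $N_0$ (times the region bound $K$) rather than exponential in $N_0$ --- which in fact answers the question the authors raise at the end of \Cref{subsec:necfinite} about reducing the theoretical value of $P$. The price is that your $\Omega$ is generic rather than explicit, although an explicit choice (e.g.\ points on the moment curve, for which every $(N_0+1)$-subset is affinely independent by a Vandermonde argument) is easy; the one detail worth spelling out is that each point of $\Omega$ must be assigned to the closure of a \emph{full-dimensional} region (these closures do cover $\RR^{N_0}$, since the lower-dimensional pieces have empty interior) so that the Jacobian fact of \Cref{lemma:jacobian} actually applies to the region produced by the pigeonhole.
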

\Cref{theorem:finitepointsnonclosed} is a direct consequence of (and in fact logically equivalent to) the following lemma:
\begin{lemma}
    \label{lemma:badfiniteomega}
	If $\mc{L}_\gI$ is not closed then there exists a set $\Omega \subset \RR^d$, $d = N_0$, of cardinality at most $P \coloneqq (3N_04^{\sum_{i = 1}^{L-1} N_i} + 1)^{N_0}$ such that $\mc{F}_\gI(\Omega)$ is not closed.
\end{lemma}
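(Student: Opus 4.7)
The plan is to lift non-closedness of the linear set $\mc{L}_\gI$ to non-closedness of $\mc{F}_\gI(\Omega)$ for a sufficiently dense grid $\Omega$, by exploiting the freedom to choose biases in a ReLU network so as to keep every hidden pre-activation on $\Omega$ in the linear (positive) regime. Because biases are unconstrained in $\mc{N}_\gI$, this essentially turns a ReLU net into a linear net, as seen from $\Omega$.

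\emph{Step 1 (build a diverging ReLU sequence).} Since $\mc{L}_\gI$ is not closed, pick factorizations $\bM^k = \bX_L^k\cdots\bX_1^k$ with $\supp(\bX_i^k)\subseteq I_i$ and $\bM^k\to\bM^*\notin\mc{L}_\gI$. For any finite $\Omega$, define $\theta^k \in \mc{N}_\gI$ by taking $\bW_i^k := \bX_i^k$ and choosing the biases $\bb_1^k,\ldots,\bb_{L-1}^k$ componentwise so large that all hidden pre-activations at every layer are strictly positive for every $x \in \Omega$. Such biases exist for each $k$ because $\Omega$ is finite and biases are unconstrained. Then $\sigma$ acts as the identity throughout the network on $\Omega$, so $\mc{R}_{\theta^k}(x) = \bM^k x + c^k$ on $\Omega$ with $c^k \in \RR^{N_L}$ depending only on $k$; choose $\bb_L^k$ to cancel $c^k$. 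This yields $\mc{R}_{\theta^k}(\Omega) = \bM^k\Omega \to \bM^*\Omega$ in $\RR^{N_L\times P}$.

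\emph{Step 2 (choose $\Omega$ as a sufficiently fine grid).} Let $\Omega$ be the $Q\times\cdots\times Q$ integer grid in $\RR^{N_0}$ with $Q := 3N_0\cdot 4^{\sum_{i=1}^{L-1}N_i}+1$, so $|\Omega|= Q^{N_0}\leq P$. Any ReLU network with hidden widths $N_1,\ldots,N_{L-1}$ partitions $\RR^{N_0}$ into at most $2^{\sum_{i=1}^{L-1}N_i}$ convex polyhedral ``activation regions''. A pigeonhole count, together with the elementary bound that any hyperplane in $\RR^{N_0}$ contains at most $Q^{N_0-1}$ grid points, shows that for such partitions at least one region $R$ contains $N_0+1$ affinely independent grid points. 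The constants $3N_0$ and $4^{\sum N_i}$ are deliberately loose to absorb the general-position bookkeeping.

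\emph{Step 3 (obstruct the limit from lying in $\mc{F}_\gI(\Omega)$).} Suppose, towards a contradiction, that $\bM^*\Omega = \mc{R}_{\theta^*}(\Omega)$ for some $\theta^* = (\bW_i,\bb_i) \in \mc{N}_\gI$. On each activation region $R$ with hidden pattern $\epsilon=(\epsilon_1,\ldots,\epsilon_{L-1})$ and $D_i := \mathrm{diag}(\epsilon_i)$, one has $\mc{R}_{\theta^*}(x) = A_R x + b_R$ with $A_R = \bW_L(D_{L-1}\bW_{L-1})\cdots(D_1\bW_1)$. Since $\supp(D_i\bW_i)\subseteq\supp(\bW_i)\subseteq I_i$, the factorization $A_R = \bW_L\cdot(D_{L-1}\bW_{L-1})\cdots(D_1\bW_1)$ witnesses $A_R \in \mc{L}_\gI$. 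By Step 2, one such region $R$ contains $N_0+1$ affinely independent points $x_1,\ldots,x_{N_0+1}$ of $\Omega$, on which $A_R x_j + b_R = \bM^* x_j$. Affine independence then forces $A_R = \bM^*$, contradicting $\bM^*\notin\mc{L}_\gI$.

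\emph{Main obstacle.} The bias-pumping construction and the activation-pattern factorization of $A_R$ are routine; the crux is the combinatorial/geometric calibration of $|\Omega|$ in Step 2, which must be robust to every partition that any candidate $\theta^* \in \mc{N}_\gI$ could induce. Once one region is guaranteed to contain $N_0+1$ affinely independent grid points, the rest of the argument is forced.
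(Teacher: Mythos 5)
Your proposal is correct and follows the same overall skeleton as the paper's proof: take $\bA \in \overline{\cL_\gI}\setminus\cL_\gI$, realize $x \mapsto \bA x$ on a grid as a limit of ReLU networks by pumping the biases so that every hidden unit stays in the active regime (this is exactly what the paper does in \Cref{lem:nonclosedsufficientlocal}, specialized here to a finite domain where it is even easier), and then rule out $\bA\Omega \in \cF_\gI(\Omega)$ by locating enough grid points in a single linear region and invoking the Jacobian factorization $\bW_L\bD_{L-1}\bW_{L-1}\cdots\bD_1\bW_1 \in \cL_\gI$ (\Cref{lemma:jacobian}). Where you genuinely diverge is the combinatorial core of Step 2. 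The paper proves a dedicated counting lemma (\Cref{lemma:discretizedgridexistence}) showing that if $p \geq 3N_0H$ then some \emph{elementary hypercube} of the grid is missed by all $H$ hyperplanes obtained by extending the frontiers between pairs of linear regions; this requires a somewhat delicate count of ``intersecting edges''. You instead pigeonhole the $Q^{N_0}$ grid points over the at most $2^{\sum_{i=1}^{L-1}N_i}$ activation patterns and use the fact that a hyperplane meets at most $Q^{N_0-1}$ points of a $Q\times\cdots\times Q$ grid: since $Q$ exceeds the number of patterns, some pattern class receives more than $Q^{N_0-1}$ grid points and hence contains $N_0+1$ affinely independent ones, which is all that Step 3 actually uses (the paper, too, only exploits the $N_0+1$ corner points $v_0$ and $v_0+\tfrac{1}{p}\mbf{e}_i$ of its hypercube). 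Your route is more elementary: it needs neither the hyperplane-extension bookkeeping for region boundaries nor the edge-counting argument, and the stated constant works with enormous slack since you only need $Q>2^{\sum_{i=1}^{L-1}N_i}$. The one thing you should spell out is the short completion of Step 2 — pigeonhole gives more than $Q^{N_0-1}$ points sharing an activation pattern, on which the network agrees with a single affine map whose linear part lies in $\cL_\gI$, and a grid subset not contained in any affine hyperplane must contain $N_0+1$ affinely independent points — since as written this is asserted rather than proved; but there is no gap in substance.
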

\begin{proof}[Sketch of the proof]
    Since $\cL_\gI$ is not closed, there exists $\bA \in \overline{\cL_\gI} \setminus \cL_\gI$ ($\overline{\cL}$ is the closure of the set $\cL$). Considering $f(x) \coloneqq \bA x$, we construct a set $\Omega = \inputset{P}$ such that $[f(x_1), \ldots, f(x_P)] \in \overline{\cF_\gI(\Omega)} \setminus \cF_\gI(\Omega)$. Therefore, $\mc{F}_\gI(\Omega)$ is not closed.
\end{proof}
The proof 
is in \Cref{appendix:proofnegativefinite}. 
Besides showing a topological connection between $\cF_\gI$ (NNs with ReLU activation) and $\cL_\gI$ (linear NNs), \Cref{theorem:finitepointsnonclosed}  leads 
to a simple example where $\cF_\gI$ is not closed.

\begin{example}[\textbf{LU} architecture]
	\label{ex:LUnonclosed}
	Consider $\gI = (I_2, I_1) \in \{0,1\}^{d \times d} \times \{0,1\}^{d \times d}$  where $I_1 = \{(i,j) \mid 1 \leq i \leq j \leq d\}$ and $I_2 = \{(i,j) \mid 1 \leq j \leq i \leq d\}$. Any pair of matrices $\mbf{X}_2,\mbf{X}_{1} \in \mb{R}^{d \times d}$ such that $\supp(\mbf{X}_i) \subseteq I_i, i = 1,2$ are respectively lower and upper triangular matrices. Therefore, $\mc{L}_\gI$ is the set of matrices that admit an \emph{exact} lower - upper (LU) factorization/decomposition. That explains its name: \textbf{LU} architecture. This set is well known to a) contain an open and dense subset of $\mb{R}^{d \times d}$; b) be strictly contained in $\mb{R}^{d \times d}$ \cite[Theorem $3.2.1$]{matrixcomputation} \cite[Theorem $1$]{LUexistence}. Therefore, $\mc{L}_\gI$ is not closed and by the contraposition of \Cref{theorem:finitepointsnonclosed} we conclude that there exists a finite set $\Omega$ such that $\mc{F}_\gI(\Omega)$ is not closed. 
\end{example}

    \begin{figure}[bhtp!]
       \includegraphics[width=\textwidth]{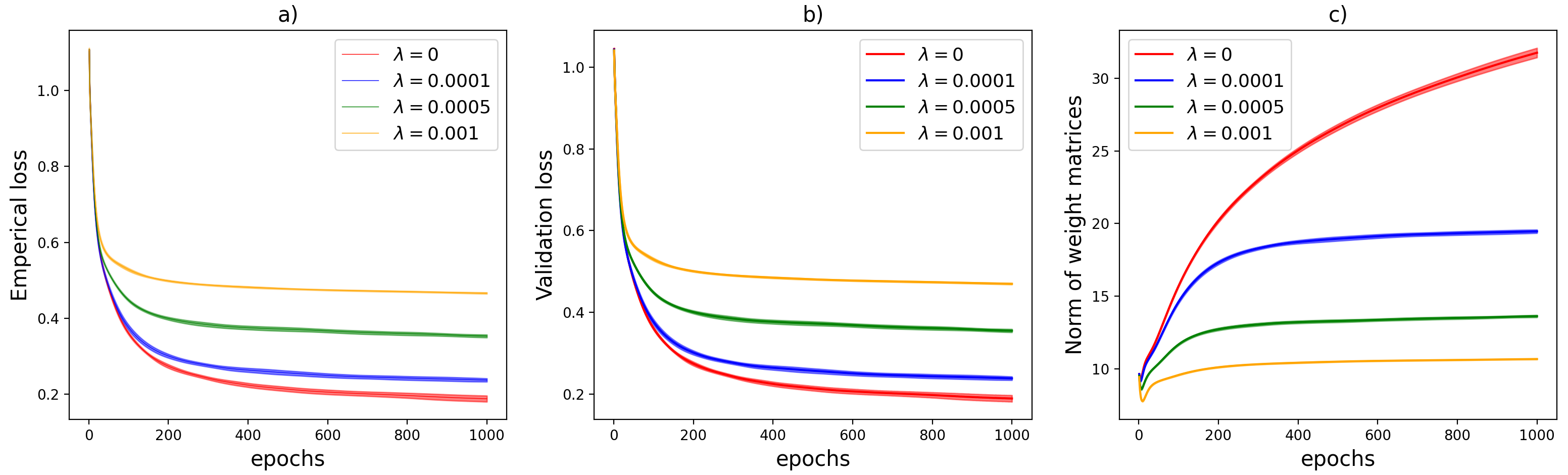}
       \caption{Training a one-hidden-layer fixed support (LU architecture) neural network {with different regularization hyperparameters $\lambda$ (we use weight decay, i.e., an $L^2$ regularizer)}. Subfigures a)-b) {show the relative loss (the lower, the better) for training (empirical loss) and testing (validation loss) respectively}. Subfigure c) shows the norm of two weight matrices. The experiments are conducted $10$ times to produce the error bars in all figures {(almost invisible due to a small variability)}.}
       \label{fig:trainingLU}
    \end{figure}

 Let us illustrate the impact of the non-closedness in \Cref{ex:LUnonclosed} via the behavior during the training of a fixed support {one-hidden-layer} neural network with the LU support constraint $\gI$. This network is trained to learn the linear function $f(x) \coloneqq \bA x$ where $\mbf{A} \in \mathbb{R}^{d \times d}$ is an anti-diagonal \emph{identity} matrix. Using the necessary and sufficient condition of \textbf{LU} decomposition existence  \cite[Theorem $1$]{LUexistence}, we have that $\mbf{A} \in \overline{\mc{L}_\gI} \setminus \mc{L}_\gI$ as in the sketch proof of \Cref{lemma:badfiniteomega}. Given network parameters $\theta$ and a training set, approximation quality can be measured by {the relative loss: $\frac{1}{P}(\sum_{i = 1}^P\|\mc{R}_\theta(x_i) - y_i\|_{2}^2/\|y_{i}\|_{2}^{2})$.}
 
\Cref{fig:trainingLU} illustrates the behavior of {the relative errors of the training set, validation set and the sum of weight matrices norm} along epochs, using Stochastic Gradient Descent (SGD) with batch size $3000$, learning rate $0.1$, momentum $0.9$ and {four different} weight decays (the hyperparameter controlling the $L^2$ regularizer) {$\lambda \in \times \{0, 10^{-4}, 5 \times 10^{-4}, 10^{-3}\}$}. {The case $\lambda = 0$ corresponds to the \emph{unregularized} case}. {Our training and testing sets contain each $P = 10^5$ samples} generated independently as $x_{i} \sim \mc{U}([-1,1]^d)$ ($d=100$) and $y_{i} \coloneqq \mbf{A}x_{i}$.
   
\Cref{ex:LUnonclosed} and \Cref{fig:trainingLU} also lead to two interesting remarks: while the $L^2$ regularizer (weight decay) does prevent the parameter divergence phenomenon, the empirical loss is improved when using the non-regularized version. This is the situation where adding a regularization term might be detrimental, as stated earlier. More interestingly, the size of the dataset is $10^5$, which is much smaller than the theoretical $P$ in \Cref{lemma:badfiniteomega}. It is thus interesting to see if we can reduce the theoretical value of $P$, which is currently exponential w.r.t. to the input dimension.

\subsection{The closedness of $\cL_\gI$ is algorithmically decidable}\label{subsec:decidable}
\Cref{theorem:finitepointsnonclosed} leads to a natural question: given $\gI$, how to check the closedness of $\cL_\gI$, a subset of $\RR^{N_L \times N_0}$. To the best of our knowledge, there is not any study on the closedness of $\mc{L}_\gI$ in the literature. It is, thus, not known whether deciding on the closedness of $\cL_\gI$ for a given $\gI$ is polynomially tractable. In this work, we show it is at least decidable with a doubly-exponential algorithm. This algorithm is an application of \emph{quantifier elimination}, an algorithm from real algebraic geometry \cite{realgeoalg}. 

\begin{lemma}
	\label{lem:closednessdecidable}
	Given $\gI = (I_L,\ldots,I_1)$, the closedness of $\mc{L}_\gI$ is decidable with an algorithm of complexity $O((4L)^{C^{k-1}})$ where $k = N_LN_0 + 1 + 2\sum_{i=1}^L |L_i|$ and $C$ a universal constant. 
\end{lemma}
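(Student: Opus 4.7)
The plan is to encode ``$\cL_\gI$ is closed'' as a sentence in the first-order theory of the real closed field $(\RR, +, \times, \leq)$ and then invoke a standard quantifier-elimination procedure whose complexity is well understood. Parametrize each factor $\bX_i$ (and a second copy $\bY_i$) by $|I_i|$ real variables for the entries indexed by $I_i$, with all other entries hard-wired to $0$; then every coordinate of the matrix product $\bX_L \cdots \bX_1$ is a polynomial of degree $L$ in these parameters, and $\|M - \bX_L \cdots \bX_1\|_F^2 - \epsilon^2$ has total degree at most $2L + 2 \leq 4L$ in the joint tuple $(M,\epsilon,\bX)$.

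Using the metric characterization of the closure, ``$M \in \overline{\cL_\gI}$'' reads $\forall \epsilon > 0\ \exists \bX:\ \|M - \bX_L \cdots \bX_1\|_F^2 < \epsilon^2$, while ``$M \in \cL_\gI$'' reads $\exists \bY:\ M = \bY_L \cdots \bY_1$. Closedness of $\cL_\gI$ is the sentence $\forall M:\ (M \in \overline{\cL_\gI}) \Rightarrow (M \in \cL_\gI)$, which after taking the contrapositive of the inner implication and merging the two existential clauses into a single block becomes
\begin{equation}
\Phi \;\coloneqq\; \forall M\ \exists (\epsilon,\bY)\ \forall \bX\ \Bigl[\bigl(\epsilon > 0 \,\wedge\, \|M - \bX_L\cdots \bX_1\|_F^2 \geq \epsilon^2\bigr) \,\vee\, M = \bY_L\cdots \bY_1\Bigr].
\end{equation}
This is in prenex normal form with three quantifier blocks of sizes $N_L N_0$, $1 + \sum_i |I_i|$, and $\sum_i |I_i|$, totalling $k = N_L N_0 + 1 + 2\sum_i |I_i|$ real variables and involving only $O(1)$ polynomial atoms of degree at most $4L$.

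Next, I would feed $\Phi$ to a cylindrical-algebraic-decomposition-style quantifier-elimination algorithm for the theory of real closed fields (see e.g.\ Basu--Pollack--Roy \cite{realgeoalg}). Such procedures decide any closed first-order sentence on $k$ variables built from polynomials of degree at most $d$ in time of the form $d^{O(C^{k-1})}$ for a universal constant $C$; substituting $d = 4L$ yields the claimed bound $O\bigl((4L)^{C^{k-1}}\bigr)$.

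The main subtlety is checking that closedness can be stated with a fixed, architecture-independent alternation depth; this is settled by the metric characterization of the closure, which is a $\forall \epsilon\,\exists \bX$ pattern directly expressible in the language of ordered fields. Once this encoding is in place, no additional combinatorics on $\gI$ is needed beyond the variable count $k$, and the complexity estimate is a direct consequence of the quoted quantifier-elimination bound.
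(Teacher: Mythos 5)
Your proposal is correct and follows essentially the same route as the paper: both encode the (non-)closedness of $\cL_\gI$ as a first-order sentence over the reals with the same variable count $k = N_LN_0 + 1 + 2\sum_{i}|I_i|$ and polynomials of degree $O(L)$, and then invoke the quantifier-elimination/decision procedure of \cite{realgeoalg}; the paper merely phrases the sentence as the existence of some $\bA \in \overline{\cL_\gI}\setminus\cL_\gI$ rather than as your universally quantified implication, which is its negation. The only cosmetic point is that to keep the number of polynomial atoms at $O(1)$ (so that the $s$ in the $O((sd)^{C^{k-1}})$ bound is a constant) you should collapse the $N_LN_0$ scalar equations in $M = \bY_L\cdots\bY_1$ into the single atom $\|M-\bY_L\cdots\bY_1\|_F^2 = 0$, exactly as the paper does with its sum-of-squares polynomial $P$.
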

We prove \Cref{lem:closednessdecidable} in \Cref{appendix:realalggeo}. Since the knowledge of $\gI$ is usually available (either fixed before training \cite{lee_snip_iclr19,Wang2020Picking,dao2022monarch,deformable_butterfly,pixelatedbutterfly} or discovered by a procedure before re-training \cite{lottery,han2015learning,zhu2017prune}), the algorithm in \Cref{lem:closednessdecidable} is able to verify whether the training problem might not admit an optimum. While such a doubly exponential algorithm in \Cref{lem:closednessdecidable} is seemingly impractical in practice, small toy examples (for example, \Cref{ex:LUnonclosed} with $d = 2$) can be verified using Z3Prover\footnote{The package is developed by Microsoft research and it can be found at \href{https://github.com/Z3Prover/z3}{https://github.com/Z3Prover/z3}}, a software implementing exactly the algorithm in \Cref{lem:closednessdecidable}. 
However, Z3Prover is already unable to terminate when run on the \textbf{LU} architecture of \Cref{ex:LUnonclosed} with $d = 3$. This calls for more efficient algorithms to determine the closedness of $\mc{L}_\gI$ given $\gI$. The same algorithmic question can be also asked for $\mc{F}_\gI$. We leave these problems (in this general form) as open questions. 

{In fact, if such a polynomial algorithm (to decide the closedness of $\cL_\gI$) exists, it can be used to answer the following interesting question:
\begin{question}
    \label{question:randompruning}
	If the supports of the weight matrices are randomly sampled from a distribution, what is the probability that the corresponding training problem potentially admits no optimal solutions? 
\end{question}
While simple, this setting does happen in practice since random supports/binary masks are considered a strong and common baseline for sparse DNNs training \cite{randompruning}.
}
{Thanks to \Cref{theorem:finitepointsnonclosed}, if $\mathcal{L}_\mathbf{I}$ is not closed then the support is ``bad''. Thus, to have an estimation of a \emph{lower bound} on the probability of ``bad'' supports, we could sample the supports from the given distribution and use the polynomial algorithm in question to {\em decide} if $\mathcal{L}_\mathbf{I}$ is closed. Unfortunately, the algorithm in \Cref{lem:closednessdecidable} has doubly exponential complexity, thus hindering its practical use. However, for one-hidden-layer NNs, there is a \emph{polynomial} algorithm to {\em detect} non-closedness: intuitively, if the support constraint is ``locally similar'' to the \textbf{LU} structure, then $\mathcal{L}_\mathbf{I}$ is not closed. This result is elaborated in \Cref{subsection:polyalgofornonclosedness} and \Cref{lemma:falsenegative}. The resulting detection algorithm can have false {negatives} (i.e., it can fail to detect more complex configurations where $\mathcal{L}_\mathbf{I}$ is not closed) but no false {positive}. 
}

{
\begin{figure}[H]
	\centering
	\includegraphics[scale=0.6]{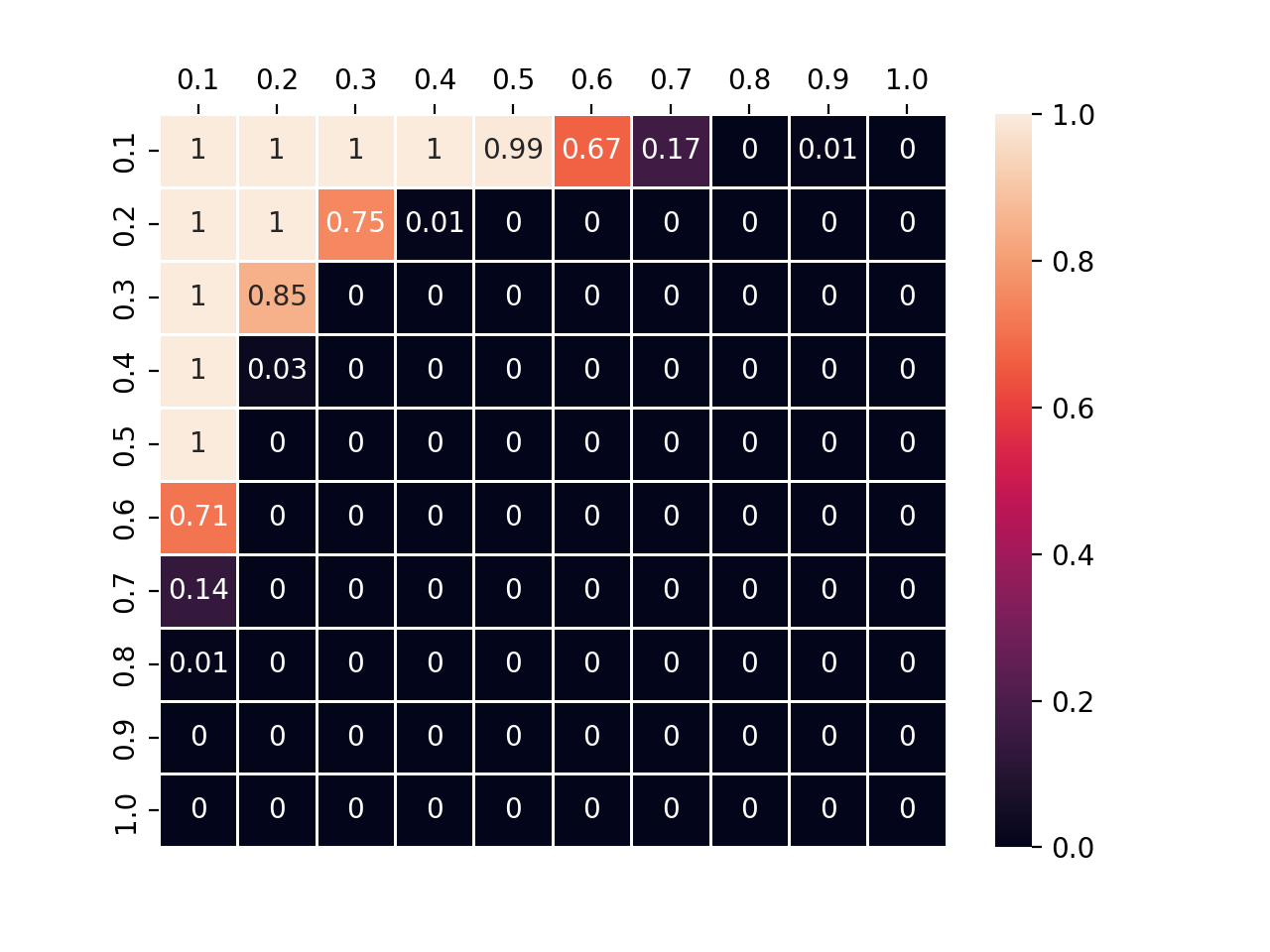}
	\caption{Probability of \emph{detectable} ``bad'' support constraints sampled from uniform distribution over $100$ samples.}
	\label{table:probability}
\end{figure}
We test this algorithm on a one-hidden layer ReLU network with two $100 \times 100$ weight matrices. We randomly choose their supports whose cardinality are $p_1 \cdot 100^2$ and $p_2 \cdot 100^2$ respectively, with $(p_1, p_2) \in \{0.1, 0.2, \ldots, 1.0\}^2$. For each pair $(p_1,p_2)$, we sample $100$ instances. Using the detection algorithm, we obtain \Cref{table:probability}. The numbers in \Cref{table:probability} indicate the probability that a random support constraint $(I,J)$ has $\mathcal{E}_{I,J}$ non-closed (as detected by the algorithm). This figure shows two things: 1) ``Bad'' architectures such as \textbf{LU} are not rare and one can (randomly) generate plenty of them. 2) At a sparse regime ($a_1, a_2 \leq 0.2$), most of the random supports might lead to training problems without optimal solutions. We remind that the detection algorithm may give some false {negatives}. Thus, for less sparse regimes, it is possible that our heuristic fails to detect the non-closedness. The algorithm indeed gives a lower bound on the probability of finding non-closed instances. The code for \Cref{ex:LUnonclosed}, \Cref{question:randompruning} and the algorithm in \Cref{lem:closednessdecidable} is provided in \cite{code}.
}

\subsection{Best approximation property of scalar-valued {one-hidden-layer} sparse networks}
\label{subsec:bapscalarvaluedfinite}
So far, we introduced a necessary condition for the closedness (and thus, by \Cref{prop:bap=closed}, the best approximation property) of sparse ReLU networks, and we provided an example of an architecture $\gI$ whose training problem might not admit any optimal solution. One might wonder if there are architectures $\gI$ that \emph{avoid} the issue caused by the non-closedness of $\cF_\gI$. Indeed, we show that for {one-hidden-layer} sparse ReLU neural networks with scalar output dimension (i.e., $L = 2, N_2 = 1$), the existence of optimal solutions is guaranteed, \emph{regardless of the sparsity pattern}. 
\begin{theorem}
	\label{theorem:omegafinitesetbestapproximationproperty}
	Consider scalar-valued, one-hidden-layer ReLU neural networks (i.e., $L = 2, N_2 = 1$).  For any support pairs $\gI = (I_2, I_1)$ and any finite set $\Omega:= \{x_1, \ldots, x_P\}$, $\mc{F}_\gI(\Omega)$ is closed.
\end{theorem}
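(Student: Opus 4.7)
The plan is to exploit the scalar-output assumption ($N_2 = 1$) to express $\cF_\gI(\Omega)$ as a finite union of polyhedral cones in $\RR^{1 \times P}$, which is automatically closed. The key observation is that when the output is scalar, each hidden neuron contributes a piecewise linear function of its own parameters, and positive homogeneity of the ReLU lets one absorb the second-layer weight into the first-layer weight (up to a sign).

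First, I would decompose the realization as
\begin{equation*}
\cR_\theta(\Omega) = b_2 \mathbf{1}^T + \sum_{i \in I_2} \alpha_i \, \sigma(u_i^T \Omega + \beta_i \mathbf{1}^T),
\end{equation*}
where $\alpha_i := \mathbf{W}_2[1,i]$, $u_i := \mathbf{W}_1[i,:]^T$ with $\supp(u_i) \subseteq S_i := \{j : I_1[i,j] = 1\}$, $\beta_i := \mathbf{b}_1[i]$ and $b_2 \in \RR$; neurons $i \notin I_2$ contribute zero. Thus $\cF_\gI(\Omega) = \RR \mathbf{1}^T + \sum_{i \in I_2} \mathcal{V}_{S_i}$, with
\begin{equation*}
\mathcal{V}_S := \{\alpha\, \sigma(u^T \Omega + \beta \mathbf{1}^T) : \alpha, \beta \in \RR,\ \supp(u) \subseteq S\} \subseteq \RR^{1 \times P}.
\end{equation*}
Since $\alpha \sigma(z) = \sigma(\alpha z)$ when $\alpha \geq 0$ and $\alpha \sigma(z) = -\sigma(-\alpha z)$ when $\alpha < 0$, a short case analysis yields $\mathcal{V}_S = \sigma(V_S) \cup (-\sigma(V_S))$, where $V_S := \{u^T \Omega + \beta \mathbf{1}^T : \supp(u) \subseteq S, \beta \in \RR\}$ is a linear subspace of $\RR^{1 \times P}$.

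Next, I would partition $V_S$ by activation pattern: for each $A \subseteq \intset{P}$, set $V_S^A := \{f \in V_S : f_j \geq 0 \text{ for } j \in A,\ f_j \leq 0 \text{ for } j \notin A\}$. This is a polyhedral cone and $V_S = \bigcup_A V_S^A$. On $V_S^A$ the coordinatewise ReLU acts as the linear projection $P_A$ that zeroes coordinates outside $A$, so $\sigma(V_S) = \bigcup_{A \subseteq \intset{P}} P_A(V_S^A)$ is a finite union of polyhedral cones (a linear image of a polyhedron is a polyhedron by Fourier--Motzkin), and so is $\mathcal{V}_S$. Finally, $\cF_\gI(\Omega) = \RR \mathbf{1}^T + \sum_{i \in I_2} \mathcal{V}_{S_i}$ is a Minkowski sum of finitely many such unions. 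Because Minkowski sum distributes over union, and the Minkowski sum of polyhedra is again a polyhedron (it is the image of their Cartesian product under a linear map), $\cF_\gI(\Omega)$ is itself a finite union of polyhedral cones, hence a finite union of closed sets, hence closed.

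The main obstacle---and the reason the statement is restricted to scalar output---is that for $N_2 > 1$ the $i$-th neuron contributes an outer product $\mathbf{w}_i \, \sigma(u_i^T x + \beta_i)$ with $\mathbf{w}_i \in \RR^{N_2}$, and positive homogeneity only lets one absorb the norm of $\mathbf{w}_i$, leaving a direction on the unit sphere. That constraint is not polyhedral, so the neuron-wise image ceases to be a finite union of polyhedral cones and the argument above breaks.
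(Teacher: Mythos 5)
Your proof is correct and follows essentially the same route as the paper's: both use positive homogeneity of the ReLU to normalize the second-layer weights down to signs, partition by activation pattern so that each piece becomes a linear image of a polyhedron, and invoke Fourier--Motzkin to conclude that such images are closed, with the finite union of closed sets finishing the argument. The only difference is bookkeeping --- you organize the decomposition neuron-by-neuron and assemble $\mc{F}_\gI(\Omega)$ as a Minkowski sum of per-neuron contributions, whereas the paper fixes a global sign vector and a global activation matrix per data point and treats all first-layer parameters in one vector $\mbf{z}$ --- but the resulting finite polyhedral decomposition is the same.
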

The proof of \Cref{theorem:omegafinitesetbestapproximationproperty} is deferred to \Cref{appendix:proofpositivefinite}.
As a sanity check, observe that when $L = 2, N_2 = 1$, the necessary condition in \Cref{theorem:finitepointsnonclosed} is satisfied. Indeed, since $N_2 = 1$, $\cL_\gI \subseteq \RR^{1 \times N_0}$ can be thought as a subset of $\RR^{N_0}$. Any $\bX \in \cL_\gI$ can be written as a sum: $\bX = \sum_{i \in I_2} \mWs[i]\mWf[\row{i}]$, a decomposition of the product $\mWs\mWf$,
where $\mWs[i] \in \RR, \mWf[\row{i}] \in \RR^{N_0}, \supp(\mWf[\row{i}]) \subseteq I_1[\row{i}]$.
Define $\mc{H}:= \cup_{i \in I_2} I_1[\row{i}] \subseteq \intset{N_0}$ the union of row supports of the first weight matrix. It is easy to verify that $\mc{L}_{\gI}$ is isomorphic to $\mb{R}^{|\mc{H}|}$, which is closed.  In fact, this argument only works for scalar-valued output, $N_2 = 1$. Thus, there is no conflict between \Cref{theorem:finitepointsnonclosed} and \Cref{theorem:omegafinitesetbestapproximationproperty}.  

In practice, many approaches search for the best support $\gI$ \emph{among a 
collection of possible supports}, for example, the approach of pruning and training \cite{han2015learning,zhu2017prune} or the lottery ticket hypothesis \cite{lottery}. Our result for fixed support in \Cref{theorem:omegafinitesetbestapproximationproperty} can be also applied in this case and is stated in \Cref{cor:multiplefixedsupport}. In particular, we consider a set of supports such that the support sizes (or sparsity ratios) of the layers are kept below a certain threshold $K_i, i = 1, \ldots, L$. This constraint on the sparsity level of each layer is widely used in many works on sparse neural networks \cite{han2015learning,zhu2017prune,lottery}. 

\begin{corollary}
    \label{cor:multiplefixedsupport}
    Consider scalar-valued, one-hidden-layer ReLU neural networks. 
	For any finite  data set\footnote{Notice that $\cD$ contains both input vectors $x_i$ and targets $y_i$, unlike $\Omega$ which only contains the inputs.} $\cD = (x_i,y_i)_{i=1}^P$, problem~\eqref{eq:trainingintro} under the constraints
 $\|\mW_i\|_0 \leq K_i, i = 1, 2$ has 
a minimizer.
\end{corollary}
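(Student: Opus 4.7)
The plan is to reduce the sparsity-level constraint to a finite union of fixed-support constraints and then invoke \Cref{theorem:omegafinitesetbestapproximationproperty} on each piece. First, I would observe that the set
\[
\cM_{K_1,K_2} := \{(\mW_2,\mW_1) : \|\mW_i\|_0 \le K_i,\, i=1,2\}
\]
can be written as a \emph{finite} union $\bigcup_{\gI \in \cS} \cN_\gI$, where $\cS$ is the collection of all pairs of binary masks $\gI = (I_2,I_1)$ with $|I_i| \le K_i$. Indeed, there are only finitely many such masks in $\{0,1\}^{1 \times N_1} \times \{0,1\}^{N_1 \times N_0}$, and any parameter meeting the sparsity budget is contained in at least one $\cN_\gI$ of the collection (take $I_i = \supp(\mW_i)$). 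This rewriting is the crux: it turns a combinatorial constraint into a finite union of fixed-support problems to which \Cref{theorem:omegafinitesetbestapproximationproperty} applies.

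Next, I would introduce the evaluation map $\Phi_\Omega : \theta \mapsto \cR_\theta(\Omega) \in \RR^{1 \times P}$ on $\Omega = \{x_1,\ldots,x_P\}$, and the push-forward loss $F : \RR^{1 \times P} \to \RR$ defined by $F(M) := \sum_{i=1}^P \ell(M[1,i], y_i)$, so that $\cL(\theta) = F(\Phi_\Omega(\theta))$. Under the assumptions on $\ell$ inherited from \Cref{prop:bap=closed} (continuity and coercivity in its first argument, with $y = \argmin \ell(\cdot,y)$), $F$ is continuous and coercive on $\RR^{1 \times P}$. Therefore $\inf_\theta \cL(\theta) = \inf_{M \in \cG} F(M)$ where $\cG := \Phi_\Omega(\cM_{K_1,K_2}) = \bigcup_{\gI \in \cS} \cF_\gI(\Omega)$.

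Now I would apply \Cref{theorem:omegafinitesetbestapproximationproperty} to each $\gI \in \cS$: for every such fixed support, $\cF_\gI(\Omega) \subseteq \RR^{1 \times P}$ is closed. Since $\cS$ is finite, $\cG$ is a finite union of closed sets and therefore closed in $\RR^{1 \times P}$. A continuous coercive function on a non-empty closed subset of $\RR^{1 \times P}$ attains its infimum, so there exists $M^\star \in \cG$ minimizing $F$. By definition of $\cG$, $M^\star = \Phi_\Omega(\theta^\star)$ for some $\theta^\star \in \cM_{K_1,K_2}$, which then minimizes $\cL$ under the sparsity constraints.

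The argument is essentially routine once the union-of-fixed-supports reduction is in place; no step is truly delicate. The only subtlety to double-check is the translation from closedness of the image set $\cG$ to the existence of an optimal \emph{parameter} $\theta^\star$ (as opposed to an optimal function value $M^\star$), but this is immediate because $\cG$ is by definition the image of $\cM_{K_1,K_2}$ under $\Phi_\Omega$, so any $M^\star \in \cG$ admits a preimage. Note that scalar output ($N_2 = 1$) is used only to invoke \Cref{theorem:omegafinitesetbestapproximationproperty}; the rest of the argument is dimension-agnostic, so the same proof pattern would extend to any architecture for which fixed-support closedness can be established.
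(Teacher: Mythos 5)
Your proof is correct and follows essentially the same route as the paper: both reduce the $\ell^0$ constraint to a finite union of fixed-support problems and invoke \Cref{theorem:omegafinitesetbestapproximationproperty} on each. The only (immaterial) difference is bookkeeping: the paper solves each fixed-support subproblem via \Cref{prop:bap=closed} and takes the best of finitely many attained minima, whereas you observe that the finite union of closed sets $\cF_\gI(\Omega)$ is closed and run the coercivity argument once on the union.
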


\begin{proof}
    Denote $\cI$ the collection of sparsity patterns satisfying $\|I_i\|_0 \leq K_i, i = 1, 2$, so that a set of parameters satisfies the sparsity constraints $\|\mW_i\|_0 \leq K_i, i = 1, 2$ if and only if the supports of the weight matrices belong to $\cI$. Therefore, to solve the optimization problem under sparsity constraints $\|\mW_i\|_0 \leq K_i, i = 1, 2$, it is sufficient to solve the same problem for every sparsity pattern in $\cI$.

    For each $\gI \in \cI$, we solve a training problem with architecture $\gI$ on a given finite dataset $\cD$. Thanks to \Cref{theorem:omegafinitesetbestapproximationproperty} and \Cref{prop:bap=closed}, the infimum is attained. We take the optimal solution corresponding to $\gI$ that yields the smallest value of the loss function $\cL$. This is possible because the set $\cI$ has a finite number of elements (the total number of possible sparsity patterns is finite).
\end{proof}

\section{Analysis of fixed support ReLU networks on continuous domains} 
\label{section:omegahypercube}
We now investigate closedness properties when the domain $\Omega \subseteq \RR^{d}$ is no longer finite.
Denoting $\cF_\gI = \{\cR_\theta: \RR^{N_0} \mapsto \RR^{N_L} \mid \theta \in \cN_\gI\}$ (with $N_0=d$) the functions that can be implemented on a given ReLU network architecture $\gI$, we are interested in $\cF_\gI(\Omega) = \{f_{\mid \Omega}: f \in \mc{F}_{\gI}\}$, the restriction of elements of $\cF_\gI$ to $\Omega$. This is a natural extension of the set $\cF_\gI(\Omega)$ studied in the case of finite $\Omega$.

Specifically, 
we investigate the closedness of $\cF_\gI(\Omega
)$ in $(C^{0}(\Omega
), \|\cdot\|_\infty)$ (the set of continuous functions on $\Omega$
equipped with the supremum norm $\|f\|_{\infty} := \sup_{x \in \Omega} \|f(x)\|_{2}$). Contrary to the previous section, we can no longer exploit \Cref{prop:bap=closed} to deduce that the closedness property and the BAP are equivalent.
The results in this section can be seen as a continuation (and also generalization) of the line of research on the topological property of function space of neural networks \cite{topoNN,sigmoidnonclosed,heavysideclosed,heavysideprojection,sobolevnonclosed}. In \Cref{section:omegacubenecessary} and \Cref{section:omegacubesufficient}, we provide a necessary and a sufficient condition on $\gI$ for the closedness of $\cF_\gI(\Omega
)$ in $(C^0(\Omega,
\|\cdot\|_\infty)$ respectively. The condition of the former is valid for any depth, while that of the latter is applicable for {one-hidden-layer} networks ($L = 2$). These results are established under various assumptions on $\Omega$ (such as $\Omega = [-B,B]^d$, or $\Omega$ being bounded with non-empty interior) that will be specified in each result.


\subsection{A necessary condition for closedness of fixed support ReLU network}
\label{section:omegacubenecessary}
\Cref{theorem:nonclosedsufficient} states our result on the necessary condition for the closedness. Interestingly, observe that this result (which is proved in \Cref{appendix:proofnegativecube}) naturally generalizes \Cref{theorem:finitepointsnonclosed}. Again, closedness of $\cL_\gI$ 
 in $\RR^{N_L \times N_0}$ is with respect to the usual topology defined by any norm.
\begin{theorem}
	\label{theorem:nonclosedsufficient}
	Consider $\Omega \subset \mathbb{R}^{d}$ a bounded set with non-empty interior, and $\gI$ a sparse architecture with input dimension $N_0=d$. If $\mc{F}_{\gI}(\Omega)$ is closed in $({C}^0(\Omega), \|\cdot\|_\infty)$ then $\mc{L}_{\gI}$ is closed in $\mathbb{R}^{N_{L} \times N_{0}}$.
\end{theorem}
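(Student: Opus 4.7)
The plan is to prove the contrapositive: assuming $\cL_\gI$ is not closed in $\RR^{N_L \times N_0}$, I will exhibit a continuous function $f$ such that $f_{\mid \Omega} \in \overline{\cF_\gI(\Omega)} \setminus \cF_\gI(\Omega)$ with respect to the sup-norm topology on $\Omega$. Concretely, pick $\bA^\star \in \overline{\cL_\gI} \setminus \cL_\gI$ and write $\bA^\star = \lim_{n \to \infty} \bX_L^{(n)} \cdots \bX_1^{(n)}$ with $\supp(\bX_i^{(n)}) \subseteq I_i$; then set $f(x) := \bA^\star x$. The overall strategy mirrors the finite-domain argument behind \Cref{lemma:badfiniteomega}, but now in the sup-norm topology on a continuous domain, which will allow reading off the Jacobian on an open subset.

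To show $f_{\mid \Omega} \in \overline{\cF_\gI(\Omega)}$, I would realize each linear map $\bA_n := \bX_L^{(n)} \cdots \bX_1^{(n)}$ \emph{exactly} on $\Omega$ as a ReLU network with parameters $\theta_n \in \cN_\gI$. Since $\Omega$ is bounded, the intermediate pre-activations $\bX_i^{(n)} \cdots \bX_1^{(n)} x + (\text{earlier bias contributions})$ remain uniformly bounded as $x$ ranges over $\Omega$. Choosing each hidden bias $\mbf{b}_i$ large enough coordinate-wise in the positive direction makes every pre-activation strictly positive on $\Omega$, so every ReLU acts as the identity. Expanding the network then yields $\cR_{\theta_n}(x) = \bA_n x + c_n$ on $\Omega$ for some constant vector $c_n$ depending on $\bX_i^{(n)}$ and the hidden biases; setting $\mbf{b}_L := -c_n$ cancels the offset. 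Since $\Omega$ is bounded and $\bA_n \to \bA^\star$, the resulting networks satisfy $\sup_{x \in \Omega} \|\cR_{\theta_n}(x) - f(x)\| \to 0$, placing $f_{\mid \Omega}$ in the sup-norm closure of $\cF_\gI(\Omega)$.

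To show $f_{\mid \Omega} \notin \cF_\gI(\Omega)$, suppose for contradiction that $f = \cR_\theta$ on $\Omega$ for some $\theta \in \cN_\gI$. Because $\cR_\theta$ is a continuous piecewise affine function with finitely many linear regions, and $\Omega$ has non-empty interior, some region $R$ meets the interior of $\Omega$ in an open subset. On $R$ the network equals $\mbf{M} x + v$ with $\mbf{M} = \mbf{W}_L D_{L-1} \mbf{W}_{L-1} D_{L-2} \cdots D_1 \mbf{W}_1$, where $D_i \in \{0,1\}^{N_i \times N_i}$ are diagonal matrices encoding the activation pattern on $R$. Matching $\mbf{M} x + v = \bA^\star x$ on an open set forces $\mbf{M} = \bA^\star$. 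Regrouping as $\mbf{M} = \mbf{W}_L \cdot (D_{L-1} \mbf{W}_{L-1}) \cdots (D_1 \mbf{W}_1)$ then expresses $\bA^\star$ as a product of $L$ matrices whose supports still lie in $I_L, \ldots, I_1$, since left-multiplication by a diagonal $0/1$ matrix only zeros out rows and hence preserves the support constraint. Therefore $\bA^\star \in \cL_\gI$, contradicting its choice.

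The main subtlety I anticipate is the ``sparse factorization is preserved by the activation mask'' step: one must verify that multiplying a sparse weight matrix on the left by a diagonal $0/1$ matrix never creates new nonzero entries, which is the key observation letting the affine pieces of a ReLU network inherit the sparse factorization structure defining $\cL_\gI$. The only other point of care is that an open ball meeting finitely many linear regions must meet at least one of them in an open subset, which follows because the closed regions cover the ball. With these ingredients in place, the argument is a clean generalization of the finite-$\Omega$ case of \Cref{lemma:badfiniteomega} to the sup-norm topology on continuous domains.
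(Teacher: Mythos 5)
Your proposal is correct and follows essentially the same route as the paper: the paper also picks $\mbf{A} \in \overline{\cL_\gI}\setminus\cL_\gI$, realizes each approximating product $\mbf{X}_L^k\cdots\mbf{X}_1^k$ exactly on the bounded domain $\Omega$ by pushing the hidden biases large enough that every ReLU acts as the identity and cancelling the offset with the output bias, and rules out $f \in \cF_\gI(\Omega)$ by reading off the Jacobian $\mbf{W}_L\mbf{D}_{L-1}\mbf{W}_{L-1}\cdots\mbf{D}_1\mbf{W}_1$ on a linear region with non-empty interior and observing that the diagonal $0/1$ masks preserve the support constraints (the paper's \Cref{lemma:jacobian}). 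No gaps; the two subtleties you flag are exactly the ones the paper handles.
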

\Cref{theorem:nonclosedsufficient} applies for any $\Omega$ which is bounded and has non-empty interior. Thus, it encompasses not only the hypercubes $[-B,B]^d, B > 0$ but also many other domains such as closed or open $\RR^d$ balls. Similar to \Cref{theorem:finitepointsnonclosed}, \Cref{theorem:nonclosedsufficient} is interesting in the sense that it allows us to check the non-closedness of the function space $\mc{F}_{\gI}$ (a subset of the infinite-dimensional space $C^0(\Omega)$) by checking that of $\mc{L}_\gI \subseteq \mb{R}^{N_L \times N_0}$ (a finite-dimensional space). The latter can be checked using the algorithm presented in \Cref{lem:closednessdecidable}. Moreover, the \textbf{LU} architecture presented in \Cref{ex:LUnonclosed} is also an example of $\gI$ whose function space is not closed in $({C}^0(\Omega), \|\cdot\|_\infty)$. 

\subsection{A sufficient condition for closedness of fixed support ReLU network}
\label{section:omegacubesufficient}
The following theorem is the main result of this section. It provides a sufficient condition to verify the closedness of $\cF_\gI(\Omega)$ for $\Omega = [-B,B]^{d}${, $B > 0$} with {one-hidden-layer} sparse ReLU neural networks.

\begin{theorem}\label{theorem:generalizedclosednesscondition}	
    Consider $\Omega = 	[-B,B]^{d}$, $N_0=d$ and a sparsity  pattern $\gI = {(I_{2},I_{1})}$ such that: 
	\begin{enumerate}[leftmargin=*]
		\item There is no support constraint for the weight matrix of the second layer, $\mWs$: $I_2 = \mbf{1}_{N_2 \times  N_1}$; 
		\item For each non-empty set of hidden neurons, $S \subseteq \intset{N_1}$, $\mc{L}_{\gI_S}$ is closed  in $\mathbb{R}^{N_{2} \times N_{1}}$, where $\gI_S \coloneqq (I_2[\col{S}],I_1[\row{S}])$ is the support constraint restricted to the sub-network with hidden neurons in $S$.
	\end{enumerate}
	Then the set $\mc{F}_{\gI}(\Omega)$ is closed in $({C}^0(\Omega), \|\cdot\|_\infty)$.
\end{theorem}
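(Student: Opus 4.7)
Given a sequence $(\theta^k) \subset \cN_\gI$ with $\cR_{\theta^k} \to f$ uniformly on $\Omega = [-B,B]^d$, the plan is to construct $\theta^\star \in \cN_\gI$ with $\cR_{\theta^\star} = f$. Writing $\cR_{\theta^k}(x) = \sum_{i=1}^{N_1} \mathbf{w}_i^k \sigma(\mathbf{a}_i^k \cdot x + b_i^k) + \mathbf{b}_2^k$ with $\mathbf{a}_i^k := \mW_1^k[\row{i}]$, $b_i^k := \mathbf{b}_1^k[i]$, $\mathbf{w}_i^k := \mW_2^k[\col{i}]$, the only support constraint is $\supp(\mathbf{a}_i^k) \subseteq I_1[\row{i}]$ (hypothesis~1 leaves $\mathbf{w}_i^k$ unconstrained). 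First I would use the positive homogeneity of $\sigma$ to rescale each $(\mathbf{a}_i^k, b_i^k)$ to unit norm (when nonzero), compensating via $\mathbf{w}_i^k$; this preserves both $\cR_{\theta^k}$ and every support constraint. Compactness of the unit sphere then yields a subsequence with $(\mathbf{a}_i^k, b_i^k) \to (\mathbf{a}_i^\star, b_i^\star)$, $\supp(\mathbf{a}_i^\star) \subseteq I_1[\row{i}]$. Split $\intset{N_1} = S_c \sqcup S_d$ with $S_c := \{i : (\mathbf{w}_i^k)_k \text{ is bounded}\}$ and extract $\mathbf{w}_i^k \to \mathbf{w}_i^\star$ for $i \in S_c$. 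When $S_d = \emptyset$, $\mathbf{b}_2^k$ automatically converges and $\theta^\star$ is just the limit.

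The delicate case is $S_d \neq \emptyset$, where I analyze the divergent residual $G_k(x) := \sum_{i \in S_d} \mathbf{w}_i^k \sigma(\mathbf{a}_i^k \cdot x + b_i^k) + \mathbf{b}_2^k$ converging to $G := f - \sum_{i \in S_c} \mathbf{w}_i^\star \sigma(\mathbf{a}_i^\star \cdot x + b_i^\star)$. Partition $S_d = \bigsqcup_\alpha T_\alpha$ via $i \sim j \Leftrightarrow (\mathbf{a}_i^\star, b_i^\star) = \pm(\mathbf{a}_j^\star, b_j^\star)$, fix a representative direction $(\mathbf{a}_\alpha^\star, b_\alpha^\star)$ per group, and split $T_\alpha = T_\alpha^+ \sqcup T_\alpha^-$ by sign. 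The key claim is that each group contribution $G_k^\alpha(x) := \sum_{i \in T_\alpha} \mathbf{w}_i^k \sigma(\mathbf{a}_i^k \cdot x + b_i^k)$ converges uniformly to
\[ G_\alpha(x) = M_\alpha^- x + c_\alpha + \mathbf{v}_\alpha \sigma(\mathbf{a}_\alpha^\star \cdot x + b_\alpha^\star), \]
because kinks living on distinct limit hyperplanes are geometrically independent and cannot cancel across groups. On the open half-space $\{\mathbf{a}_\alpha^\star \cdot x + b_\alpha^\star > 0\}$, for large $k$ only the positive-sign neurons fire, hence $G_k^\alpha(x) = \bigl(\sum_{i \in T_\alpha^+} \mathbf{w}_i^k \mathbf{a}_i^k\bigr) x + \sum_{i \in T_\alpha^+} \mathbf{w}_i^k b_i^k$; the matrix factor belongs to $\cL_{\gI_{T_\alpha^+}}$, which is closed by hypothesis~2, so its limit $M_\alpha^+$ lies in the same set. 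Symmetrically $M_\alpha^- \in \cL_{\gI_{T_\alpha^-}}$, and continuity of $G_\alpha$ across the limit hyperplane forces $M_\alpha^+ - M_\alpha^- = \mathbf{v}_\alpha (\mathbf{a}_\alpha^\star)^\top$, giving the advertised form.

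To assemble $\theta^\star$: for $i \in S_c$, take the limits $(\mathbf{a}_i^\star, b_i^\star, \mathbf{w}_i^\star)$; for each $T_\alpha$ with $\mathbf{a}_\alpha^\star \neq 0$, pick a representative $j_\alpha \in T_\alpha^+$ (or $T_\alpha^-$ if $T_\alpha^+ = \emptyset$, using the identity $\sigma(-z) = \sigma(z) - z$ to flip signs) and assign $\mW_1^\star[\row{j_\alpha}] = \mathbf{a}_\alpha^\star$, $\mathbf{b}_1^\star[j_\alpha] = b_\alpha^\star$, $\mW_2^\star[\col{j_\alpha}] = \mathbf{v}_\alpha$ (support compatible since $\supp(\mathbf{a}_\alpha^\star) \subseteq \bigcap_{i \in T_\alpha} I_1[\row{i}]$); for the remaining neurons in $T_\alpha \setminus \{j_\alpha\}$, invoke hypothesis~2 again to factor $M_\alpha^- \in \cL_{\gI_{T_\alpha^-}} \subseteq \cL_{\gI_{T_\alpha \setminus \{j_\alpha\}}}$ as $M_\alpha^- = \widetilde{\mW}_2^\alpha \widetilde{\mW}_1^\alpha$ with $\supp(\widetilde{\mW}_1^\alpha) \subseteq I_1[\row{T_\alpha \setminus \{j_\alpha\}}]$, then plug in these factors and pick the associated $\mathbf{b}_1^\star[i]$ large enough (possible since $\Omega$ is compact) that each such ReLU is never clipped on $\Omega$, so those neurons jointly realize $M_\alpha^- x + \text{const}$. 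Residual constants are absorbed into $\mathbf{b}_2^\star$, and a direct computation yields $\cR_{\theta^\star} = f$; degenerate groups with $\mathbf{a}_\alpha^\star = 0$ are handled analogously (they contribute either zero or a purely linear function realized by all $|T_\alpha|$ linearized ReLUs). \textbf{The main obstacle} is the key claim in the previous paragraph — separate convergence of each $G_k^\alpha$ — whose rigorous proof requires extracting from $G$ the jump distribution supported on each limit hyperplane (which uniquely determines the corresponding group's contribution) and carefully handling the boundary cases where a limit hyperplane does not meet the interior of $\Omega$ or where $\mathbf{a}_\alpha^\star = 0$.
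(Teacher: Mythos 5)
Your overall architecture (normalize and extract limits of first-layer parameters, group divergent neurons by signed limit hyperplane, keep one representative neuron per group for the kink, linearize the rest with saturated biases, absorb constants into the output bias) matches the paper's proof closely. But the step you yourself flag as the main obstacle — separate uniform convergence of each group contribution $G_k^\alpha$ to $M_\alpha^- x + c_\alpha + \mathbf{v}_\alpha\sigma(\mathbf{a}_\alpha^\star\cdot x + b_\alpha^\star)$ — is not just hard to prove; it is false, and the paper exhibits a counterexample. Take $N_0=N_2=1$, $N_1=3$, no support constraint, $\Omega=[-1,1]$, $\mathbf{W}_1^k=(1,-1,1)^\top$, $\mathbf{b}_1^k=(0,0,1)^\top$, $\mathbf{W}_2^k=(k,-k,-k)$, $\mathbf{b}_2^k=k$. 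Then $\cR_{\theta^k}\equiv 0$ on $\Omega$, but the group attached to the hyperplane $\{x=0\}$ contributes $k\sigma(x)-k\sigma(-x)=kx$, which diverges; the compensation comes from the third neuron, whose limit hyperplane $\{x=-1\}$ does not meet the interior of $\Omega$, and from $\mathbf{b}_2^k$. The jump distribution of the limit $G$ across each hyperplane does determine $\mathbf{v}_\alpha$ (this is exactly how the paper recovers $\gamma_\ell\alpha_\ell$ and $\gamma_\ell\beta_\ell$ as limits of the \emph{signed differences} $\sum_{i\in J_\ell}\epsilon_i\mathbf{W}_2^k[\col{i}]\mathbf{W}_1^k[\row{i}]$, which do converge), but it carries no information about the affine parts $M_\alpha^\pm$, which can trade off arbitrarily between groups, the never-active neurons, and the output bias while the total stays bounded. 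Consequently your per-group application of hypothesis~2 — ``$M_\alpha^-\in\cL_{\gI_{T_\alpha^-}}$, closed, so its limit lies in the same set'' — is applied to limits that need not exist.

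The repair, which is the paper's key technical move, is to never ask for per-group limits of the affine residual. One first proves only that $f(x)=\sum_\ell\gamma_\ell\sigma(\alpha_\ell x+\beta_\ell)+\mathbf{A}x+\mathbf{b}$ for \emph{some} matrix $\mathbf{A}$, then shows that this single global $\mathbf{A}$ lies in $\overline{\cL_{\gI_S}}$ for the \emph{union} $S=\bar{J}\cup\bigl(\bigcup_\ell J_\ell^-\bigr)$ of all linearized neurons across all groups together with the never-active ones. This is done by building an auxiliary sequence $\tilde\theta^k$ in which each group's positive-sign neurons are replaced by one neuron already carrying the known limit kink $\gamma_\ell\sigma(\alpha_\ell x+\beta_\ell)$ and the negative-sign neurons are bias-saturated; the modified realizations converge pointwise to $f$ off the hyperplanes, their affine parts form a sequence in $\cL_{\gI_S}$, and a pointwise-convergence-of-affine-maps lemma identifies the limit with $\mathbf{A}$. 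Hypothesis~2 is then invoked once, for this single $S$, to factor $\mathbf{A}$ exactly. Your construction goes through essentially unchanged once you make this aggregation global rather than per group.
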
 

Both conditions in \Cref{theorem:generalizedclosednesscondition} can be verified algorithmically: while the first one is trivial to check, the second one requires us to check the closedness of at most $2^{N_1}$ sets $\cL_{\gI_S}$ {(because there are at most $2^{N_1}$ subsets of $\intset{N_1}$)}, which is still algorithmically possible (although perhaps practically intractable) with the algorithm of  \Cref{lem:closednessdecidable}. Apart from its algorithmic aspect, we present two interesting corollaries of \Cref{theorem:generalizedclosednesscondition}. The first one, \Cref{lemma:fullsupportclosedness}, is about the closedness of the function space of {fully connected} (i.e., with no sparsity constraint) {one-hidden-layer} neural networks. 

\begin{corollary}[Closedness of {fully connected} {one-hidden-layer} ReLU networks \emph{of any output dimension}]
	\label{lemma:fullsupportclosedness}
	Given $\gI =	(\mbf{1}_{N_2 \times N_1},\mbf{1}_{N_1 \times N_0})$,
	the set $\mc{F}_{\gI}$ is closed in $({C}^0([-B,B]^{d}), \|\cdot\|_\infty)$ where $d=N_0$.
\end{corollary}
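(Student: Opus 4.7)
The proof is a direct application of Theorem~\ref{theorem:generalizedclosednesscondition} with $L=2$ and the fully dense support pattern $\gI = (\mbf{1}_{N_2 \times N_1}, \mbf{1}_{N_1 \times N_0})$. The first hypothesis of that theorem, namely $I_2 = \mbf{1}_{N_2 \times N_1}$, is satisfied by assumption, so the entire task is to verify the second hypothesis: for every non-empty $S \subseteq \intset{N_1}$, the set $\mc{L}_{\gI_S}$ is closed in the ambient matrix space.

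Fix a non-empty $S \subseteq \intset{N_1}$. By definition $\gI_S = (\mbf{1}_{N_2 \times N_1}[\col{S}], \mbf{1}_{N_1 \times N_0}[\row{S}])$, so any $\bX_2$ with $\supp(\bX_2) \subseteq I_2[\col{S}]$ has all its nonzero entries confined to the $|S|$ columns indexed by $S$, and any $\bX_1$ with $\supp(\bX_1) \subseteq I_1[\row{S}]$ is arbitrary on the $|S|$ rows indexed by $S$ and zero elsewhere. Consequently the product $\bX_2 \bX_1$ reads $\bA \bB$ where $\bA \in \RR^{N_2 \times |S|}$ and $\bB \in \RR^{|S| \times N_0}$ can be chosen freely. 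Hence
\begin{equation*}
    \mc{L}_{\gI_S} \;=\; \{\bM \in \RR^{N_2 \times N_0} : \rank(\bM) \leq |S|\}.
\end{equation*}

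The key closedness ingredient is then the classical fact that the determinantal variety $\{\bM : \rank(\bM) \leq r\}$ is closed for every $r$, being the common zero set of all $(r+1) \times (r+1)$ minors of $\bM$, each of which is a polynomial (and hence continuous) function of the entries of $\bM$. This yields closedness of $\mc{L}_{\gI_S}$ for every non-empty $S \subseteq \intset{N_1}$. With both hypotheses of Theorem~\ref{theorem:generalizedclosednesscondition} verified, we conclude that $\mc{F}_\gI$ is closed in $(C^0([-B,B]^d), \|\cdot\|_\infty)$.

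There is no genuine obstacle: every step is either bookkeeping or a textbook fact about bounded-rank matrices. The only subtle point is correctly reading off the restricted architecture $\gI_S$ and recognizing that dropping the zero columns of $\bX_2$ and the zero rows of $\bX_1$ recovers a completely free factorization, so that $\mc{L}_{\gI_S}$ is exactly the rank-$\leq |S|$ determinantal variety rather than something smaller.
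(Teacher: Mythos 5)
Your proposal is correct and follows exactly the paper's own argument: invoke \Cref{theorem:generalizedclosednesscondition}, note that the first hypothesis holds trivially, and observe that for every non-empty $S$ the set $\mc{L}_{\gI_S}$ is precisely the set of matrices of rank at most $|S|$, which is closed. You simply spell out the two steps the paper leaves implicit (reading off the free factorization through $|S|$ hidden neurons, and the minor-vanishing characterization of the determinantal variety), which is fine.
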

\begin{proof}
	The result follows from  \Cref{theorem:generalizedclosednesscondition} once we check if its assumptions hold. 
	The first one is trivial. To check the second, observe that for every non-empty set of hidden neurons $S \subseteq \intset{N_1}$, the set $\mc{L}_{\gI_{S}} \subseteq \mathbb{R}^{N_{2} \times N_{0}}$ is simply the set of matrices of rank at most $|S|$, which is closed for any $S$. \qedhere
\end{proof}
\Cref{corollary:sparsedimensiononeclosedness} states the closedness of scalar-valued, one-hidden-layer sparse ReLU NNs.
In a way, it 
can be seen as the analog of \Cref{theorem:omegafinitesetbestapproximationproperty} for $\Omega = [-B,B]^d$. 
\begin{corollary}[Closedness of 
	\emph{fixed support} {one-hidden-layer} ReLU networks with scalar output]
	\label{corollary:sparsedimensiononeclosedness}
	Given any input dimension $d=N_{0} \geq 1$, any number of hidden neurons $N_{1} \geq 1$, {scalar output dimension $N_{2}=1$}, and any prescribed supports $\gI = (I_2, I_1)$, the set	$\mc{F}_{\gI}$ is closed in $({C}^0([-B,B]^d), \|\cdot\|_\infty)$.
\end{corollary}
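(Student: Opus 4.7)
The plan is to derive \Cref{corollary:sparsedimensiononeclosedness} from \Cref{theorem:generalizedclosednesscondition} after a preliminary reduction that absorbs the constraint encoded by $I_2$ into a smaller hidden layer. The corollary as stated allows any $I_2 \in \{0,1\}^{1 \times N_1}$, whereas \Cref{theorem:generalizedclosednesscondition} requires full output support; eliminating the hidden neurons that $I_2$ silences is what bridges this gap.

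Let $S_0 := \{j \in \intset{N_1} : I_2[1,j] = 1\}$. If $S_0 = \emptyset$, then $\mWs \equiv 0$ for every $\theta \in \cN_\gI$, so $\cR_\theta(x) = \mbf{b}_2$ is constant in $x$ and $\cF_\gI$ reduces to the one-dimensional space of constant functions, which is closed in $(C^0([-B,B]^d), \|\cdot\|_\infty)$. Assume henceforth $S_0 \neq \emptyset$. For any $j \notin S_0$, the forced zero $\mWs[1,j] = 0$ means that the $j$-th hidden neuron contributes nothing to $\cR_\theta$, so I can work instead with the reduced architecture $\gI' := (\mbf{1}_{1 \times |S_0|}, I_1')$, where $I_1' \in \{0,1\}^{|S_0| \times N_0}$ retains only the rows of $I_1$ indexed by $S_0$. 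One checks directly that $\cF_{\gI} = \cF_{\gI'}$, and by construction $\gI'$ has full second-layer support, so the first hypothesis of \Cref{theorem:generalizedclosednesscondition} is satisfied.

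For the second hypothesis, fix any non-empty $S \subseteq \intset{|S_0|}$. Every element of $\cL_{\gI'_S} \subseteq \RR^{1 \times N_0}$ takes the form $\sum_{j \in S} \mWs[1,j]\,\mWf[\row{j}]$ with $\mWs[1,j] \in \RR$ unconstrained and $\supp(\mWf[\row{j}]) \subseteq I_1'[\row{j}]$. Each summand ranges freely over the coordinate subspace of $\RR^{1 \times N_0}$ with support in $I_1'[\row{j}]$, so the sum ranges exactly over the coordinate subspace with support $\mc{H} := \bigcup_{j \in S} I_1'[\row{j}]$. This subspace is isomorphic to $\RR^{|\mc{H}|}$ and hence closed; this is precisely the observation already made in the paragraph following \Cref{theorem:omegafinitesetbestapproximationproperty}.

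With both hypotheses verified, \Cref{theorem:generalizedclosednesscondition} applied to $\gI'$ yields closedness of $\cF_{\gI'}([-B,B]^d) = \cF_\gI([-B,B]^d)$ in $(C^0([-B,B]^d), \|\cdot\|_\infty)$, which is the conclusion of the corollary. I expect no substantial obstacle here: all the topological content is already inside \Cref{theorem:generalizedclosednesscondition}, and the only point requiring care is the reduction $\gI \mapsto \gI'$, which is harmless precisely because the eliminated neurons have output weights forced to zero and therefore cannot affect the realized function.
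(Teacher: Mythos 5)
Your proposal is correct and follows essentially the same route as the paper: eliminate the hidden neurons silenced by $I_2$ to reach an architecture with full second-layer support, then apply \Cref{theorem:generalizedclosednesscondition} after observing that each $\mc{L}_{\gI_S}$ is the coordinate subspace supported on $\mc{H} = \bigcup_{i \in S} I_1[\row{i}]$, hence closed. The paper packages the reduction as an induction on $N_1$ in its formal appendix proof, but the mathematical content is identical, and your explicit handling of the degenerate case $S_0 = \emptyset$ is a welcome touch.
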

\begin{proof}[Sketch of the proof]
    If there exists a hidden neuron $i \in \intset{N_1}$ such that $I_2[i] = 0$ (i.e., $i \notin I_2$: $i$ is not connected to the only output of the network), we have: $\cF_\gI = \cF_{\gI'}$ where $\gI' = \gI_S, S = \intset{N_1} \setminus \{i\}$. By repeating this process, we can assume without loss of generality that $I_2[i] = \mbf{1}_{1 \times N_1}$. That is the first condition of \Cref{theorem:generalizedclosednesscondition}.
    
    Therefore, it is sufficient to verify the second condition of \Cref{theorem:generalizedclosednesscondition}. Consider any {non-empty} set of hidden neurons $S \subseteq \intset{N_1}$, and define $\mc{H} := \cup_{i \in S} I[\row{i}] \subseteq \intset{N_0}$ the union of row supports of $I_1[\row{S}]$. It is easy to verify that $\mc{L}_{\gI_{S}}$ is isomorphic to $\mb{R}^{|\mc{H}|}$, which is closed. The result follows by \Cref{theorem:generalizedclosednesscondition}. For a more formal proof, readers can find an inductive one in \Cref{appendix:corshallowReLUnetworks}.
\end{proof}

In fact, both \Cref{lemma:fullsupportclosedness} and \Cref{corollary:sparsedimensiononeclosedness} generalize \cite[Theorem 3.8]{topoNN}, which proves the closedness of $\cF_\gI([-B,B]^d)$ when $I_2 = \mbf{1}_{1 \times N_1},I_1 = \mbf{1}_{N_1 \times N_0}$ (classical {fully connected} {one-hidden-layer} ReLU networks with output dimension equal to one). 

To conclude, let us consider the analog to \Cref{cor:multiplefixedsupport}: we study
the function space implementable as a sparse {one-hidden-layer} network with constraints on the \emph{sparsity level} 
of each layer 
(i.e., $\|\mW_i\|_0 \leq K_i, i = 1, 2$.

\begin{corollary}
    \label{cor:multifixsupportcube}
    Consider scalar-valued, one-hidden-layer 
    ReLU networks $(L=2,N_{2}=1, N_1, N_0)$ with $\ell^0$ constraints
    ${\|\mWf\|_{0} \leq K_1, \|\mWs\|_{0} \leq K_2}$ for some constants $K_1, K_2 \in \mb{N}$. The function space $\mc{F}([-B,B]^d)$ associated with this architecture is closed in $({C}^0([-B,B]^{N_0}), \|\cdot\|_\infty)$.
\end{corollary}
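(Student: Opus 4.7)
The plan is to reduce the statement to a finite union of already-handled cases, exactly in the spirit of the proof of \Cref{cor:multiplefixedsupport}. The first step is to re-express the function space $\cF([-B,B]^d)$ (defined by the $\ell^0$ level constraints $\|\mWf\|_0 \leq K_1$ and $\|\mWs\|_0 \leq K_2$) as a union indexed by all admissible fixed supports. Concretely, let
\begin{equation*}
    \cI \coloneqq \bigl\{ \gI = (I_2, I_1) \in \{0,1\}^{1 \times N_1} \times \{0,1\}^{N_1 \times N_0} : \|I_1\|_0 \leq K_1,\ \|I_2\|_0 \leq K_2 \bigr\}.
\end{equation*}
Any parameter $\theta = (\mWs, \bb_2, \mWf, \bb_1)$ that satisfies the level constraints has weight supports $(\supp(\mWs), \supp(\mWf)) \in \cI$, and conversely for any $\gI \in \cI$ any parameter $\theta \in \cN_\gI$ satisfies the level constraints. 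Hence
\begin{equation*}
    \cF([-B,B]^d) \;=\; \bigcup_{\gI \in \cI} \cF_\gI([-B,B]^d).
\end{equation*}

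The second step is to observe that $\cI$ is a \emph{finite} set, since it consists of binary masks of fixed total size $N_1(N_0+1)$. The third step is to invoke \Cref{corollary:sparsedimensiononeclosedness}: the architecture is scalar-valued ($N_2 = 1$), one-hidden-layer, and $\gI$ is a fixed support, so for every $\gI \in \cI$ the set $\cF_\gI([-B,B]^d)$ is closed in $(C^0([-B,B]^{N_0}), \|\cdot\|_\infty)$. A finite union of closed subsets of a topological space is closed, so $\cF([-B,B]^d)$ is closed as well, which is exactly the conclusion.

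There is essentially no substantive obstacle once \Cref{corollary:sparsedimensiononeclosedness} is granted; the only point that deserves a one-line justification is the equality $\cF([-B,B]^d) = \bigcup_{\gI \in \cI} \cF_\gI([-B,B]^d)$, i.e., that enumerating admissible supports captures exactly the parameters meeting the sparsity-level constraint. This is by definition of $\supp(\cdot)$ together with the fact that zeroing additional entries of a weight matrix only strengthens the support constraint, so any $\theta$ with $\|\mW_i\|_0 \leq K_i$ lies in $\cN_{\gI}$ for $\gI = (\supp(\mWs), \supp(\mWf)) \in \cI$. The analogy with \Cref{cor:multiplefixedsupport} is exact, the only difference being that here we pass from BAP (finite $\Omega$) to closedness (continuous $\Omega$), and we invoke \Cref{corollary:sparsedimensiononeclosedness} instead of \Cref{theorem:omegafinitesetbestapproximationproperty}.
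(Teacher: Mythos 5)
Your proposal is correct and follows exactly the paper's own argument: decompose $\mc{F}([-B,B]^d)$ as the finite union $\bigcup_{\gI \in \cI} \mc{F}_\gI([-B,B]^d)$ over admissible sparsity patterns, apply \Cref{corollary:sparsedimensiononeclosedness} to each term, and conclude since a finite union of closed sets is closed. No gaps; the extra remark justifying the union identity is a welcome (if minor) addition to what the paper states without comment.
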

\begin{proof}
	Denote $\mc{I}:= \{(I_2,I_1) \mid I_2 \subseteq \intset{1} \times \intset{N_1}, I_1 \subseteq \intset{N_1} \times \intset{N_0}, |I_1| \leq K_1, |I_2| \leq K_2\}$ the set of sparsity patterns respecting the $\ell^0$ constraints, so that $\mc{F}([-B,B]^d) = \bigcup_{\gI \in \mc{I}} \mc{F}_{\gI}([-B,B]^d)$. Since $\mc{I}$ is finite and $\forall \gI \in \mc{I}, \mc{F}_\gI([-B,B]^d)$ is closed (\Cref{corollary:sparsedimensiononeclosedness}), the  result is proved.
\end{proof}
\section{Conclusion}
In this paper, we study the somewhat overlooked question of the existence of an optimal solution to sparse neural network training problems. 
The study is accomplished by adopting the point of view of topological properties of the function spaces of such networks on two types of domains: a finite domain $\Omega$, or (typically) a hypercube. On the one hand, our investigation of the BAP and the closedness of these function spaces reveals the existence of \emph{pathological} sparsity patterns that fail to have optimal solutions on some instances (cf \Cref{theorem:finitepointsnonclosed} and \Cref{theorem:nonclosedsufficient}) and thus possibly cause instabilities in optimization algorithms (see \Cref{ex:LUnonclosed} and \Cref{fig:trainingLU}). On the other hand, we also prove several positive results on the BAP and closedness, notably for sparse {one-hidden-layer} ReLU neural networks (cf. \Cref{theorem:omegafinitesetbestapproximationproperty} and \Cref{theorem:generalizedclosednesscondition}). These results provide new instances of network architectures  where the BAP is proved (cf \Cref{theorem:omegafinitesetbestapproximationproperty}) and substantially generalize existing ones (cf. \Cref{theorem:generalizedclosednesscondition}). 

In the future, a particular theoretical challenge is to propose necessary and sufficient conditions for the BAP and closedness of $\cF_\gI(\Omega)$, if possible covering in a single framework both types of domains $\Omega$ considered here. The fact that the conditions established on these two types of domains are very similar (cf. the similarity between \Cref{theorem:finitepointsnonclosed} and \Cref{theorem:nonclosedsufficient}, as well as between \Cref{theorem:omegafinitesetbestapproximationproperty} and \Cref{corollary:sparsedimensiononeclosedness}) is encouraging.
Another interesting algorithmic challenge is to substantially reduce the complexity of the algorithm to decide the closedness of $\cL_\gI$ in \Cref{lem:closednessdecidable}, which is currently doubly exponential. It calls for a more efficient algorithm to make this check more practical. Achieving a practically tractable algorithm would for instance allow to check if a support selected e.g. by IMP is pathological or not. This would certainly consolidate the algorithmic robustness and theoretical foundations of pruning techniques to sparsity deep neural networks. From a more theoretical perspective, the existence of an optimum solution in the context of
classical linear inverse problems has been widely used to analyze the desirable properties of certain cost functions, e.g.  $\ell^1$ minimization for sparse recovery. Knowing that an optimal solution exists for a given sparse neural network training problem is thus likely to open the door to further fruitful insights.

\newpage
\subsection*{Acknowledgement}
This project was supported by the AllegroAssai ANR project ANR-19-CHIA-0009. The authors thank the Blaise Pascal Center (CBP) for the computational means. It uses the SIDUS \cite{quemener2013} solution developed by Emmanuel Quemener. Q-T. Le wants to personally thank M-L.Nguyen \footnote{\url{https://sites.google.com/view/mlnguyen/home}} for his enlightenment on the non-equivalence between BAP and closedness in infinite dimension space in \Cref{app:BAPvsClosedness}.

\bibliography{ref}
\bibliographystyle{plain}
\newpage
\appendix
\section{Additional notations}
\label{appendix:additionalnotations}

In this work, matrices are written in bold uppercase letters. Vectors are written in bold lowercase letters only if they indicate network parameters (such as bias).   
For a matrix $\bA \in \RR^{m \times n}$, we use $\mbf{A}[\row{i}] \in \RR^{1 \times n}$ (resp. $\mbf{A}[\col{i}] \in \RR^{m \times 1}$) to denote the row (resp. column) vector corresponding the $i$th row (resp. column) of $\mbf{A}$. To ease the notation, we write $\mbf{A}[\row{i}]\mbf{v}$ to denote the scalar product between $\mbf{A}[\row{i}]$ and the vector $\mbf{v} \in \RR^n$. This notation will be used regularly when we decompose the functions of one-hidden-neural networks into sum of functions corresponding to hidden neurons.

For a vector $v \in \mb{R}^d$, $v[I] \in \mb{R}^{|I|}$ is the vector $v$ restricted to coefficients in $I \subseteq \intset{d}$. If $I = \{i\}$ a singleton, $v[i] \in \RR$ is the $i$th coefficient of $v$.
We also use $\mbf{1}_{m}$ and $\mbf{0}_m$ to denote an all-one (resp. all-zero) vector of size $m$.

For a \emph{dense} ({fully connected}) feedforward architecture, we denote $\textbf{N} = (N_L, \ldots, N_0)$ the dimensions of the input layer $N_0 = d$, hidden layers ($N_{L-1},\ldots, N_1$) and output layer ($N_L$), respectively. The parameters space of the dense architecture $\gN$ is denoted by $\mc{N}_{\gN}$: it is the set of all coefficients of the weight matrices $\mbf{W}_i \in \mathbb{R}^{N_{i} \times N_{i-1}}$ and bias vectors $\mbf{b}_i \in \mathbb{R}^{N_{i}}, i = 1, \ldots, L$. It is easy to verify that $\mc{N}_{\gN}$ is isomorphic to $\mb{R}^N$ where 
$N = \sum_{i = 1}^{L} N_{i-1}N_{i} + \sum_{i = 1}^L N_i$ is the total number of parameters of the architecture.


Clearly, $\mc{N}_{\gI} \subseteq \mc{N}_{\gN}$ since:
\begin{equation}
	\mc{N}_{\gI} := \{\theta = \left((\mbf{W}_i, \mbf{b}_i)\right)_{i = 1, \ldots, L}:
	\supp(\mbf{W}_i) \subseteq I_i, \forall i = 1, \ldots, L.\}.
\end{equation}
A special subset of $\mc{N}_{\gI}$ is the set of network parameters with zero biases,
\begin{equation}
	\mc{N}_{\gI}^{0} := \{\theta = \left((\mbf{W}_i, \mbf{0}_{N_{i}})\right)_{i = 1, \ldots, L}:
	\supp(\mbf{W}_i) \subseteq I_i, \forall i = 1, \ldots, L.\}.
\end{equation}
Given an activation function $\nu$, the realization $\mc{R}^{\nu}_\theta$ of a neural network $\theta \in \mc{N}_{\gN}$ is the function
\begin{equation}
	\mc{R}^\nu_\theta: x \in \mb{R}^{N_{0}} \mapsto \mc{R}^{\nu}_\theta(x) := \mbf{W}_L\nu(\ldots\nu(\mbf{W}_1x + \mbf{b}_1)\ldots + \mbf{b}_{L-1}) + \mbf{b}_L \in \mb{R}^{N_{L}}
\end{equation}
We denote ${\mc{R}^{\nu}:\theta \mapsto \mc{R}^{\nu}_\theta}$ the functional mapping from a set of parameters $\theta$ to its realization. The function space associated to a sparse architecture $\gI$ and activation function $\nu$ is the image of $\mc{N}_{\gI}$ under $\mc{R}^{\nu}$:
\begin{equation}
	\mc{F}^\nu_{\gI}:= \mc{R}^{\nu}(\mc{N}_{\gI}).
\end{equation}
When $\nu = \sigma$ the ReLU activation function, we recover the definition of realization in \Cref{eq:DefRealization}. We use the shorthands
\begin{equation}
	\begin{aligned}
		\mc{R}_{{\theta}}&:= \mc{R}_{{\theta}}^\sigma \\ \mc{F}_{\gI} &:= \mc{F}^\sigma_{\gI},
	\end{aligned}
\end{equation}
as in the main text. This allows us to define $\cL_\gI$ (cf. \Cref{eq:LinearI}) as $\mc{L}_{\gI}:= {\mc{R}^{\text{Id}}}(\mc{N}_\gI^{0})$ where $\nu = \text{Id}$ is the identity map,  
which is a subset of linear maps $\mb{R}^{N_{0}} 
\mapsto \mb{R}^{N_L}$. 


\section{Proofs for results in \Cref{section:omegafinite}}
\subsection{Proof of \Cref{prop:bap=closed}}
\label{appendix:bap=closed}
\begin{proof}
    First, we remind the problem of the training of a sparse neural network on a finite data set $\cD=\{(x_i,y_i)\}_{i = 1}^P$:
    \begin{equation}
        \label{eq:trainingML}
        \underset{\theta \in \cN_\gI}{\text{Minimize}} \qquad \cL(\theta):= \sum_{i = 1}^P \ell(\cR_\theta(x_i), y_i),
    \end{equation}
    which shares the same optimal value as the following optimization problem:
    \begin{equation}
        \label{eq:functionaltraining}
        \underset{\bD \in \cF_\gI(\Omega)}{\text{Minimize}} \qquad \cL(\bD):= \sum_{i = 1}^P \ell(\bD[\col{i}], y_i)
    \end{equation}
    where $\Omega = \inputset{P}$. This is simply a change of variables: from $\cR_\theta(x_i)$ to the $i$th column of ${\bD}=\cR_\theta(\Omega)$. We prove two implications as follows:
    \begin{enumerate}[leftmargin=*]
        \item {\em Assume the closedness of $\cF_\gI(\Omega)$ for every 
        finite $\Omega$. Then an optimal solution of the optimization problem \eqref{eq:trainingML} exists for every finite data set $\dataset{P}$.} 
        {Consider a training set $\dataset{P}$ and $\Omega \coloneqq \{x_i\}_{i=1}^P$. }
        Since $\mbf{D} \coloneqq \mbf{0}_{P \times N_L} \in \cF_\gI(\Omega)$ (by setting all parameters in $\theta$ equal to zero), the set $\cF_\gI(\Omega)$ is non-empty. The optimal value of \eqref{eq:functionaltraining} is thus upper bounded by $\cL(\mbf{0})$. Since the function {$\ell(\cdot,y_i)$} is coercive for every $y_i$ in the training set, there exists a constant $C$ (dependent on the training set and the loss) such that minimizing \eqref{eq:functionaltraining} on $\cF_\gI(\Omega)$ or on $\cF_\gI(\Omega) \cap \cB(0, C)$ (with $\cB(\mbf{0},C)$ the $L^2$ ball of radius $C$ centered at zero) yields the same infimum. The function $\cL$ is continuous, since each $\ell(\cdot, y_i)$ is continuous by assumption, and the set $\cF_\gI(\Omega) \cap \cB(0, C)$ is compact, since it is closed (as an intersection of two closed sets) and bounded (since $\cB(0,C)$ is bounded). As a result there exists a matrix $\bD \in \cF_\gI(\Omega) \cap \cB(0, C)$ yielding the optimal value for \eqref{eq:functionaltraining}. Thus, the parameters $\theta$ such that $\cR_\theta(\Omega) = \bD$ is an optimal solution of \eqref{eq:trainingML}.
        
        \item {\em Assume that an optimal solution of problem \ref{eq:trainingML} exists for every finite data set $\dataset{P}$. Then $\cF_\gI(\Omega)$ is closed for every $\Omega$ finite.} We prove the contraposition of this claim. Assume there exists a finite set $\Omega = \inputset{P}$ such that $\cF_\gI(\Omega)$ is not closed. Then, there exists a matrix $\bD \in \RR^{N_L \times P}$ such that $\bD \in \overline{\cF_\gI(\Omega)} \setminus \cF_\gI(\Omega)$. Consider the dataset $\dataset{P}$ where $y_i \in \RR^{N_L}$ is the $i$th column of $\bD$. We prove that the infimum value of \eqref{eq:trainingML} is $ V:=\sum_{i = 1}^P \ell(y_i,y_i)$. Indeed, since $\mbf{D} \in \overline{\cF_\gI(\Omega)}$, there exists a sequence $\seqk{\theta_k}$ such that $\lim_{k \to \infty} \cR_{\theta_k}(\Omega) = \bD$. Therefore, by continuity of $\ell(\cdot, y_i)$, we have:
        \begin{equation*}
            \lim_{k \to \infty} \cL(\theta_k) = \sum_{i = 1}^P \lim_{k \to \infty} \ell(\cR_{\theta_k}(x_i), y_i) =   \sum_{i = 1}^P \ell(y_i, y_i)=V.
        \end{equation*}
        Moreover, the infimum cannot be smaller than $V$ because the $i$th summand is at least $\ell(y_i,y_i)$ (due to the assumption on $\ell$ in \Cref{prop:bap=closed}). Therefore, the infimum value is indeed $V$. Since we assume that $y$ is the \emph{only} minimizer of $y' \mapsto \ell(y',y)$, this value can be achieved only if there exists a parameter $\theta \in \gI$ such that $\cR_\theta(\Omega) = \bD$. This is impossible due to our choice of $\bD$ which \emph{does not} belong to $\cF_\gI(\Omega)$. We conclude that with our constructed data set $\cD$, an optimal solution \emph{does not} exist for \eqref{eq:trainingML}. \qedhere
    \end{enumerate}
\end{proof}

\subsection{Proof of \Cref{lemma:badfiniteomega}}
\label{appendix:proofnegativefinite}


The proof of \Cref{lemma:badfiniteomega} (and thus, as discussed in the main text, of \Cref{theorem:finitepointsnonclosed}) 
%
%
%
use four technical lemmas. 
 \Cref{lem:nonclosedsufficientlocal} is proved in \Cref{appendix:proofnegativecube} since it involves \Cref{theorem:nonclosedsufficient}. The other lemmas are proved right after the proof of \Cref{lemma:badfiniteomega}.

\begin{lemma}\label{lem:nonclosedsufficientlocal}
	If $\mbf{A} \in \overbar{\mc{L}_{\gI}} \backslash \mc{L}_{\gI} \subseteq  \mb{R}^{N_L \times N_0}$ then the function $f: x \mapsto f(x) := \mbf{A}x$ satisfies $f \in \overbar{\mc{F}_{\gI}(\Omega)} \setminus \mc{F}_{\gI}(\Omega)$ for every subset $\Omega$ of $\RR^{N_0}$ that is bounded with non-empty interior.
\end{lemma}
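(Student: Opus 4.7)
The plan is to prove the two required properties separately: (i) $f \in \overline{\mc{F}_\gI(\Omega)}$ and (ii) $f \notin \mc{F}_\gI(\Omega)$, using the fact that $\mbf{A}$ is in the closure of but not in $\mc{L}_\gI$.

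For (i), since $\mbf{A} \in \overline{\mc{L}_\gI}$, pick a sequence of factorizations $\mbf{X}_L^k \cdots \mbf{X}_1^k \to \mbf{A}$ with $\supp(\mbf{X}_i^k) \subseteq I_i$. The strategy is the classical ``force ReLU into its linear regime'' trick: set $\mbf{W}_i^k := \mbf{X}_i^k$ and choose the hidden biases $\mbf{b}_i^k$ large enough (this is possible layer by layer since $\Omega$ is bounded and the $\mbf{X}_i^k$ are bounded along the convergent sequence) so that every hidden pre-activation is strictly positive on $\Omega$. Then every $\sigma$ on hidden layers acts as the identity, giving $\mc{R}_{\theta^k}(x) = (\mbf{X}_L^k\cdots\mbf{X}_1^k)x + c_k$ for some constant vector $c_k$ depending only on the biases and weights. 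Since the output layer bias $\mbf{b}_L^k$ is unconstrained (no sparsity applies to the bias), set $\mbf{b}_L^k := -c_k$ to cancel $c_k$ exactly. Then $\mc{R}_{\theta^k}(x) = (\mbf{X}_L^k\cdots\mbf{X}_1^k) x$, which converges uniformly to $\mbf{A}x = f(x)$ on the bounded set $\Omega$. Hence $f \in \overline{\mc{F}_\gI(\Omega)}$.

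For (ii), suppose for contradiction that $f = \mc{R}_\theta$ on $\Omega$ for some $\theta \in \mc{N}_\gI$. Because $\Omega$ contains a nonempty open ball $B$, and because the realization of a ReLU network is a continuous piecewise-affine function whose ``pieces'' are indexed by the sign patterns of hidden pre-activations, there exists a smaller nonempty open subset $B' \subseteq B$ on which every hidden neuron has a constant activation status. Let $\mbf{D}_i \in \{0,1\}^{N_i \times N_i}$ be the diagonal matrix recording the active neurons in layer $i$ on $B'$. On $B'$ the network acts as an affine map
\begin{equation*}
    \mc{R}_\theta(x) = \mbf{W}_L\mbf{D}_{L-1}\mbf{W}_{L-1}\cdots\mbf{D}_1\mbf{W}_1\, x + \mbf{c}
\end{equation*}
for some constant $\mbf{c}$. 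Equating this with $\mbf{A}x$ on the open set $B'$ (two affine maps agreeing on an open set must coincide) forces $\mbf{c} = 0$ and $\mbf{A} = \mbf{W}_L\mbf{D}_{L-1}\mbf{W}_{L-1}\cdots\mbf{D}_1\mbf{W}_1$. Setting $\mbf{Y}_L := \mbf{W}_L$ and $\mbf{Y}_i := \mbf{D}_i\mbf{W}_i$ for $i < L$, we have $\supp(\mbf{Y}_i) \subseteq \supp(\mbf{W}_i) \subseteq I_i$, so $\mbf{A} = \mbf{Y}_L \cdots \mbf{Y}_1 \in \mc{L}_\gI$, contradicting $\mbf{A} \notin \mc{L}_\gI$.

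The main obstacle is the precise existence of a region of constant activation pattern inside $B$; this is a standard piecewise-linear-regions fact for ReLU networks, but it must be invoked carefully (iteratively peeling off one layer at a time and using that a finite union of zero-sets of nontrivial affine functions cannot cover the open ball $B$). Everything else is routine: the first part is a straightforward bias-shift construction exploiting boundedness of $\Omega$ and the unconstrained output bias, and the second part reduces to the elementary observation that on a region of fixed activation pattern each layer's effective weight matrix $\mbf{D}_i\mbf{W}_i$ still respects the sparsity pattern $I_i$.
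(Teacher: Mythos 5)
Your proposal is correct and follows essentially the same route as the paper: the same bias-saturation construction (with the output bias cancelling the accumulated constant) for $f \in \overline{\mc{F}_\gI(\Omega)}$, and the same Jacobian argument on an open linear region (where the effective factors $\mbf{D}_i\mbf{W}_i$ still respect the supports $I_i$) for $f \notin \mc{F}_\gI(\Omega)$. The "constant activation region" step you flag is handled in the paper by invoking the standard continuous-piecewise-linearity of ReLU realizations together with a separate Jacobian lemma, exactly as you suggest.
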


\begin{lemma}
	\label{lemma:closureofFIrelation}
	Consider $\Omega = \inputset{P}$ a finite subset of $\mb{R}^{d}$and $\Omega' = [-B,B]^d$ such that $\Omega \subseteq \Omega'$. If $f \in \overline{\mc{F}_\gI(\Omega')}$ (under the topology induced by $\|\cdot\|_\infty$), then $\mbf{D}:= \bigl[f(x_1) \ldots f(x_P)\bigr] \in \overline{\mc{F}_\gI(\Omega)}$.
\end{lemma}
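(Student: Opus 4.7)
The plan is to unpack the definitions and observe that uniform convergence on the larger continuous domain $\Omega'$ trivially implies pointwise convergence on the finite subset $\Omega$, which in turn gives convergence in the finite-dimensional matrix space.

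First, I would use the hypothesis $f \in \overline{\mathcal{F}_\gI(\Omega')}$ in the topology of $(C^0(\Omega'), \|\cdot\|_\infty)$ to obtain a sequence $\{\theta_n\}_{n \in \mathbb{N}} \subseteq \mathcal{N}_\gI$ such that
\[
\|\mathcal{R}_{\theta_n} - f\|_{\infty,\Omega'} = \sup_{x \in \Omega'} \|\mathcal{R}_{\theta_n}(x) - f(x)\|_2 \xrightarrow[n\to\infty]{} 0.
\]
Set $\mathbf{D}_n := \bigl[\mathcal{R}_{\theta_n}(x_1), \ldots, \mathcal{R}_{\theta_n}(x_P)\bigr] \in \mathbb{R}^{N_L \times P}$. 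By the very definition of $\mathcal{F}_\gI(\Omega)$ in \eqref{eq:defofFI}, each $\mathbf{D}_n$ lies in $\mathcal{F}_\gI(\Omega)$.

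Second, since $\Omega = \{x_1,\ldots,x_P\} \subseteq \Omega'$, the supremum above dominates every individual term: for each $i \in \intset{P}$,
\[
\|\mathcal{R}_{\theta_n}(x_i) - f(x_i)\|_2 \leq \|\mathcal{R}_{\theta_n} - f\|_{\infty,\Omega'} \xrightarrow[n\to\infty]{} 0.
\]
Thus $\mathbf{D}_n \to \mathbf{D}$ column by column. Because $\mathbb{R}^{N_L \times P}$ is finite-dimensional, this coordinate-wise convergence is equivalent to convergence in any norm on $\mathbb{R}^{N_L \times P}$, so $\mathbf{D} \in \overline{\mathcal{F}_\gI(\Omega)}$, as desired.

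This argument is essentially a matter of unwinding what ``closure'' means in each of the two function spaces, so I do not foresee any real obstacle; the only subtlety to flag is the asymmetry in the topologies (infinity norm on the continuous side, any norm on the finite-dimensional matrix side), which is handled once we note that uniform convergence on $\Omega'$ implies pointwise convergence on the finitely many points of $\Omega$, and pointwise convergence on finitely many points coincides with norm convergence in $\mathbb{R}^{N_L \times P}$.
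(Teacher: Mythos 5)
Your proof is correct and follows essentially the same route as the paper's: extract a sequence $\{\theta_n\}$ with $\mathcal{R}_{\theta_n}\to f$ uniformly on $\Omega'$, restrict to the finite set $\Omega$ to get columnwise convergence of $\mathbf{D}_n$ to $\mathbf{D}$, and conclude by closure. The extra remark on equivalence of norms in the finite-dimensional space is a welcome (if minor) clarification that the paper leaves implicit.
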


\begin{lemma}
	\label{lemma:jacobian}
 Consider $\mc{R}_\theta$, the realization of a ReLU neural network with parameter $\theta \in \gI$. This function is continuous and piecewise linear. On the interior of each piece, its Jacobian matrix is constant and satisfies $\mbf{J} \in \mc{L}_\gI$.
\end{lemma}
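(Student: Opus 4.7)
The plan is to unwind the definition of $\cR_\theta$ as a composition of affine maps and the pointwise ReLU nonlinearity, observe that this yields a continuous piecewise affine function on $\RR^{N_0}$, and then identify the Jacobian on the interior of each piece as a product of matrices whose supports are contained in $\gI$, which by the definition~\eqref{eq:LinearI} of $\cL_\gI$ is exactly membership in $\cL_\gI$.

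Concretely, I would first note that each ReLU $\sigma$ is continuous and piecewise linear (two linear pieces, $\sigma(t) = t$ if $t > 0$ and $\sigma(t) = 0$ if $t < 0$). Since compositions and sums of continuous piecewise affine functions are continuous piecewise affine, $\cR_\theta$ is continuous and piecewise affine on $\RR^{N_0}$. The natural way to index its pieces is by an \emph{activation pattern}, i.e., a tuple $(\varepsilon_1, \ldots, \varepsilon_{L-1})$ of binary vectors $\varepsilon_i \in \{0,1\}^{N_i}$, where $\varepsilon_i[j]$ records the sign of the pre-activation of the $j$-th neuron in layer $i$. Each activation pattern defines a polyhedron in $\RR^{N_0}$ via finitely many linear inequalities on $x$, and the interiors of these polyhedra cover a dense open subset of $\RR^{N_0}$.

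Next I would fix a point $x_0$ in the interior of one such piece, with activation pattern $(\varepsilon_1, \ldots, \varepsilon_{L-1})$, and introduce the diagonal binary matrices $\bD_i = \mathrm{diag}(\varepsilon_i) \in \{0,1\}^{N_i \times N_i}$ for $i = 1, \ldots, L-1$. On a small enough neighborhood of $x_0$ the pre-activation signs are constant, so each $\sigma$ coincides with left multiplication by the corresponding $\bD_i$. Expanding~\eqref{eq:DefRealization} then shows that $\cR_\theta$ is affine on this neighborhood, with Jacobian
\begin{equation*}
    \bJ \;=\; \mW_L \bD_{L-1} \mW_{L-1} \bD_{L-2} \cdots \bD_1 \mW_1.
\end{equation*}

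Finally, to show $\bJ \in \cL_\gI$, set $\bX_L := \mW_L$ and $\bX_i := \bD_i \mW_i$ for $i = 1, \ldots, L-1$. Then $\bJ = \bX_L \bX_{L-1} \cdots \bX_1$, and since left multiplication by the binary diagonal matrix $\bD_i$ either keeps a row of $\mW_i$ unchanged or zeroes it out, $\supp(\bX_i) \subseteq \supp(\mW_i) \subseteq I_i$ for $i < L$, while $\supp(\bX_L) = \supp(\mW_L) \subseteq I_L$. By~\eqref{eq:LinearI} this means $\bJ \in \cL_\gI$, concluding the proof. No real obstacle is expected here; the only bit requiring care is to be explicit that we work on the \emph{interior} of a piece, so that the activation pattern is locally constant and the formal identification of $\cR_\theta$ with its affine extension is justified.
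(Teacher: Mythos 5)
Your proposal is correct and follows essentially the same route as the paper: the paper cites \cite{arora2018understanding} for continuity and piecewise linearity and \cite[Lemma 9]{identifiableReLUneuralnetwork} for the Jacobian factorization $\mbf{J} = \mbf{W}_L \mbf{D}_{L-1}\cdots\mbf{D}_1\mbf{W}_1$, whereas you derive both facts inline via activation patterns, but the decisive step --- $\supp(\mbf{D}_i\mbf{W}_i) \subseteq \supp(\mbf{W}_i) \subseteq I_i$, hence $\mbf{J} \in \mc{L}_\gI$ --- is identical.
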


\begin{lemma}
	\label{lemma:discretizedgridexistence}
	For $p, N \in \mb{N}$, consider 
 the following set of points (a discretized grid for  $[0,1]^{N}$): 
	\begin{equation*}
		\Omega = \Omega_p^N = \left\{\left(\frac{i_1}{p}, \ldots, \frac{i_{N}}{p}\right) \mid 0 \leq i_j \leq p, i_j \in \mb{N}, \forall 1 \leq j \leq N\right\}.
	\end{equation*}
	If $H \in \mb{N}$ satisfies $p \geq 3NH$, then for any collection of $H$  hyperplanes, 
 there exists $x\in\Omega_p^N$such that the elementary hypercube whose vertices are of the form
	\begin{equation*}
		\left\{x + \left(\frac{i_1}{p}, \ldots, \frac{i_{N}}{p}\right) \mid
  i_j \in \{0,1\}\ \forall 1 \leq j \leq N\right\} \subseteq \Omega_p^N
	\end{equation*}
 lies entirely inside a polytope delimited by these 
 hyperplanes.
\end{lemma}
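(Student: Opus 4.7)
The plan is a counting argument: I would bound, for each of the $H$ hyperplanes, the number of elementary hypercubes of $\Omega_p^N$ that it meets, then use pigeonhole against the total number $p^N$ of such hypercubes.

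Fix one hyperplane $h = \{x \in \RR^N : \langle a, x\rangle = b\}$ (with $a \neq 0$) and pick an index $j^\star \in \argmax_j |a_j|$; without loss of generality $j^\star = 1$. For each $(i_2, \ldots, i_N) \in \{0, \ldots, p-1\}^{N-1}$ consider the column
\begin{equation*}
C_{i_2,\ldots,i_N} := [0,1] \times \prod_{j=2}^{N} \bigl[i_j/p,\, (i_j+1)/p\bigr],
\end{equation*}
the union of the $p$ elementary hypercubes sharing these last $N-1$ coordinate cells. On $C_{i_2,\ldots,i_N}$ the hyperplane is the graph of $x_1 = (b - \sum_{j\geq 2} a_j x_j)/a_1$, whose range has length at most $\sum_{j \geq 2} |a_j|/(|a_1| p) \leq (N-1)/p$ by the choice of $j^\star$. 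An elementary cube $[i_1/p,(i_1+1)/p] \times \prod_{j \geq 2}[i_j/p,(i_j+1)/p]$ inside the column meets $h$ iff this range overlaps $[i_1/p,(i_1+1)/p]$, which happens for at most $N+1$ values of $i_1$. Summing over the $p^{N-1}$ columns yields at most $(N+1)p^{N-1}$ elementary hypercubes met by $h$.

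Summing over the $H$ hyperplanes, the number of elementary hypercubes met by at least one hyperplane is at most $H(N+1)p^{N-1}$. Under the hypothesis $p \geq 3NH$ this is strictly smaller than $p^N$ (since $3N \geq N+1$ for $N \geq 1$, with the extra slack absorbing any $\lceil \cdot \rceil$ off-by-one). Hence at least one elementary hypercube is disjoint from every hyperplane; its lower corner $x \in \Omega_p^N$ has all coordinates at most $(p-1)/p$, so its $2^N$ vertices $\{x + (i_1/p,\ldots,i_N/p): i_j \in \{0,1\}\}$ also lie in $\Omega_p^N$. Since this closed cube is connected and disjoint from every hyperplane, it lies in a single cell of the arrangement, i.e., inside a polytope delimited by the $H$ hyperplanes.

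The main obstacle is the per-hyperplane counting step, which hinges on the choice of $j^\star$ maximizing $|a_j|$: this is precisely what makes the hyperplane a Lipschitz graph in the $x_1$ direction with aggregate slope at most $N-1$, so that it intersects only $O(N)$ cubes per column rather than possibly all $p$ of them (as would happen, for instance, if $|a_1|$ were tiny). Once this local bound is in hand, the rest is elementary pigeonhole, and the factor $3$ in $p \geq 3NH$ provides comfortable slack over the tighter threshold $p > H(N+1)$, so no delicate case analysis is needed.
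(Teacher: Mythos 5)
Your proof is correct, and it takes a genuinely different counting route from the paper's. The paper bounds the number of \emph{intersecting edges}: for each of the $N$ axis directions and each hyperplane, every grid line of edges is crossed at most twice, giving at most $2N(p+1)^{N-1}$ bad edges per hyperplane, and the final pigeonhole step $p^{N} > 2NH(p+1)^{N-1}$ then requires a small asymptotic estimate ($(1+1/p)^{N-1} \le e^{1/3}$) to absorb the $(p+1)^{N-1}$ factor. You instead count \emph{intersecting cubes} directly, slicing the grid into columns along the coordinate $j^\star$ maximizing $|a_{j^\star}|$; the key point that the hyperplane is then a Lipschitz graph over the base cell with total oscillation at most $(N-1)/p$ gives at most $N+1$ bad cubes per column, hence $H(N+1)p^{N-1}$ in total, and the comparison with $p^{N}$ reduces to the clean inequality $p > (N+1)H$, immediate from $p \ge 3NH$. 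Your per-hyperplane bound is tighter and the arithmetic at the end is simpler; your conclusion is also marginally stronger (a cube fully disjoint from every hyperplane, rather than one with no ``intersected'' edge in the paper's bespoke sense), which makes the final ``lies in a single cell of the arrangement'' step cleaner than the paper's discussion of degenerate edge--hyperplane incidences. The only mild informality is the phrase about the ceiling being ``absorbed''; it would be worth stating explicitly that an interval of length $\ell$ meets at most $\ell p + 2$ of the closed cells $[i_1/p,(i_1+1)/p]$, which with $\ell \le (N-1)/p$ gives the claimed $N+1$.
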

We are now ready to prove \Cref{lemma:badfiniteomega}.

\begin{proof}[Proof of \Cref{lemma:badfiniteomega}]
	Since $\mc{L}_\gI$ is not closed, there exists a matrix $\mbf{A} \in \overline{\mc{L}_\gI} \setminus \mc{L}_\gI$, and we consider $f(x) \coloneqq \mbf{A}x$. Setting $p \coloneqq 3N_04^{\sum_{i = 1}^{L-1} N_i}$ we construct $\Omega$ as the grid:
	\begin{equation*}
		\Omega = \left\{\left(\frac{i_1}{p}, \ldots, \frac{i_{N_0}}{p}\right) \mid 0 \leq i_j \leq p, i_j \in \mb{N},\ \forall 1 \leq j \leq N_0\right\},
	\end{equation*}
so that the cardinality of $\Omega = \{x_i\}_{i=1}^P$ is $P:=(p+1)^{N_0}$. 
	Similar to the sketch proof, consider 
$\mbf{D}:= \big[f(x_1), f(x_2), \ldots, f(x_P)\big]$. Our goal is to prove that $\mbf{D} \in \overline{\mc{F}_\gI(\Omega)} \setminus \mc{F}_\gI(\Omega)$.
	
	First, notice that $\mbf{D} \in \overline{\mc{F}_\gI(\Omega)}$ as an immediate consequence of \Cref{lemma:closureofFIrelation} and \Cref{lem:nonclosedsufficientlocal}.
	
	It remains to show that $\mbf{D} \notin \mc{F}_\gI(\Omega)$. We proceed by contradiction, assuming that there exists $\theta \in \mc{N}_\gI$ such that $\mc{R}_\theta(\Omega) = \mbf{D}$. 
 
 To show the contradiction, we start by showing that, as a consequence of \Cref{lemma:discretizedgridexistence}  there exists $x \in \Omega$ such that the hypercube whose vertices are the $2^{N_0}$ points
	\begin{equation}
        \label{eq:defhypercube}
		\left\{x + \left(\frac{i_1}{p}, \ldots, \frac{i_{N_0}}{p}\right) \mid i_j \in \{0,1\}, \forall 1 \leq j \leq N_0\right\} \subseteq \Omega,
	\end{equation}
	lies entirely inside a linear region $\mc{P}$ of the continuous piecewise linear function $\mc{R}_\theta$ \cite{arora2018understanding}. Denote $K = 2^{\sum_{i = 1}^{L} N_i}$ a bound on the number of such linear regions, see e.g. \cite{montufar2014}. Each frontier between a pair of linear regions can be completed into a hyperplane, leading to at most $H = K^2$ hyperplanes. Since $p = 3N_0 K^2 \geq 3N_0 H$, by \Cref{lemma:discretizedgridexistence} there exists $x \in \Omega$ such that the claimed hypercube lies entirely inside a polytope delimited by these hyperplanes. As this polytope is itself included in some linear region $\mc{P}$ of $\mc{R}_\theta$, this establishes our intermediate claim.
	
 Now, define $v_0 \coloneqq x$ and $v_i \coloneqq x + (1/p) \mbf{e}_i, i \in \intset{N_0}$ where $\mbf{e}_i$ is the $i$th canonical vector. 
     Denote $\mbf{P}\in\mathbb{R}^{N_L\times N_0}$ the matrix such that the restriction of $\mc{R}_\theta$ to the piece $\mc{P}$ is $f_\mc{P}(x) = \mbf{P}x + \mbf{b}$.  
     Since $\mbf{P}$ is the Jacobian matrix of $\mc{R}_\theta$ in the linear region $\mc{P}$, we deduce from \Cref{lemma:jacobian} that $\mbf{P} \in \mc{L}_\gI$. Since the points $v_i$ belong to the hypercube which is both included in $\mc{P}$ and in $\Omega$ we have for each $i$:
	\begin{equation*}
		\begin{aligned}
			\mbf{P}(v_0 - v_i) &= f_\mc{P}(v_0) - f_\mc{P}(v_i)\\
			&= \mc{R}_\theta(v_0) - \mc{R}_\theta(v_i)\\
			&= f(v_0) - f(v_i)\\
			&= \mbf{A}(v_0 - v_i).
		\end{aligned}
	\end{equation*}
 where the third equality follows from the definition of $\mbf{D}$ and the fact that we assume $\mc{R}_\theta(\Omega) = \mbf{D}$.
	Since $v_0 - v_i = e_i/p, i = 1, \ldots, n$ are linearly independent, we conclude that $\mbf{P} = \mbf{A}$. This implies $\mbf{A} \in \mc{L}_\gI$, hence the contradiction. This concludes the proof. 
\end{proof}
We now prove the intermediate technical lemmas.
\begin{proof}[Proof of~\Cref{lemma:closureofFIrelation}]
	Since $f \in \overline{\mc{F}_\gI(\Omega')}$, there exists a sequence $\seqk{\theta_k}$ such that: 
	$$\lim_{k \to \infty} \sup_{x \in \Omega'} \|f(x) - \mc{R}_{\theta_k}(x)\|= 0$$
Denoting $\mbf{D}_k \coloneqq \bigl[\mc{R}_{\theta_k}(x_1) \ldots \mc{R}_{\theta_k}(x_r)\bigr]$, since $x_i \in \Omega \subseteq \Omega'$, $i=1,\ldots,P$, it follows that $\mbf{D}_k$ converges to $\mbf{D}$. Since $\mbf{D}_k \in \mc{F}_\gI(\Omega)$ by construction, we get that $\mbf{D} \in \overline{\mc{F}_\gI(\Omega)}$.
\end{proof}
\begin{proof}[Proof of \Cref{lemma:jacobian}]	For any $\theta \in \gI$, $\mc{R}_\theta$ is a continuous piecewise linear function since it is the realization of a ReLU neural network \cite{arora2018understanding}. Consider $\mc{P}$ a linear region of $\mc{R}_\theta$ with non-empty interior. The Jacobian matrix of $\mc{P}$ has the following form \cite[Lemma 9]{identifiableReLUneuralnetwork}:
	\begin{equation*}
		\mbf{J} = \mbf{W}_L \mbf{D}_{L - 1} \mbf{W}_{L-1} \mbf{D}_{L - 2} \ldots \mbf{D}_1 \mbf{W}_1
	\end{equation*}
	where $\mbf{D}_i$ is a binary diagonal matrix (diagonal matrix whose coefficients are either one or zero). Since $\supp(\mbf{D}_i\mbf{W}_i) \subseteq \supp(\mbf{W}_i) \subseteq I_i$, we have: $\mbf{J} = \mbf{W}_L\prod_{i = 1}^{L-1}(\mbf{D}_{i}\mbf{W}_i) \in \mc{L}_I$.
\end{proof}

\begin{proof}[Proof of \Cref{lemma:discretizedgridexistence}]
	Every edge of an elementary hypercube can be written as:
	\begin{equation*}
		\left(x, x +\frac{1}{p}\mbf{e}_i\right), x \in  \Omega^N_p
	\end{equation*}
	where $\mbf{e}_i$ is the $i$th canonical vector, $1 \leq i \leq N$. The points $x$ and $x + (1/p)\mbf{e}_i$ are two \emph{endpoints}. Note that in this proof we use the notation $(a,b)$ to denote the line segment whose endpoints are $a$ and $b$. By construction, $\Omega_p^N$ contains $p^{N}$ such elementary hypercubes. Given a collection of $H$ hyperplanes, we say that an elementary hypercube is an \emph{intersecting hypercube} if it does not lie entirely inside a {polytope} 
 generated by the hyperplanes, meaning that there exists a hyperplane that \emph{intersects} at least one of its edges. More specifically, an edge and a hyperplane intersect if they have \emph{exactly} one common point. We exclude the case where there are more than two common points since that implies that the edge lies completely in the hyperplane. The edges that are intersected by at least one hyperplane are called \emph{intersecting edges}. Note that 
 a hypercube can have intersecting edges, but it may not be an intersecting one. 
  A visual illustration of this idea is presented in \Cref{fig:motivehyperplane}.

    \begin{figure}[H]
        \centering
        \includegraphics[scale=0.7]{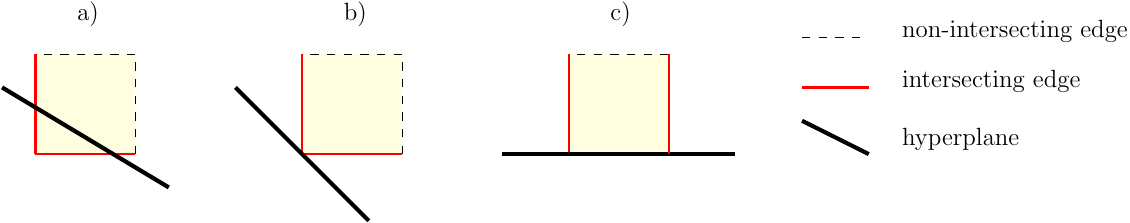}
        \caption{Illustration of definitions in $\RR^2$: a) an intersecting hypercube with two intersecting edges; b) \emph{not} an intersecting hypercube, but it has two intersecting edges; c) \emph{not} an intersecting hypercube and it only has two intersecting edges (not three according to our definitions: the bottom edge is \emph{not} intersecting).}
        \label{fig:motivehyperplane}
    \end{figure}
    Formally, a hyperplane $\{w^\top x + b = 0\}$ {for $w\in\mathbb{R}^N$ and $b\in \mathbb{R}$} intersects an edge $(x, x + \frac{1}{p} \mbf{e}_i)$ if:
	\begin{equation}
        \label{eq:defintersectingedges}
		\begin{cases}
			(w^\top x + b)\left[w^\top(x + \frac{1}{p}\mbf{e}_i) + b\right] \leq 0\\
            \text{ and }\\
			w^\top x + b \neq 0 \text{ or } w^\top(x + \frac{1}{p}\mbf{e}_i) + b \neq 0\\
		\end{cases}
	\end{equation}

	We further illustrate these notions in \Cref{fig:intersectinghyperplane}. We emphasize that according to \Cref{eq:defintersectingedges}, $\ell_3$ in \Cref{fig:intersectinghyperplane} does not intersect any edge \emph{along its direction}.
    \begin{figure}[!bhtp]
        \centering
        \includegraphics{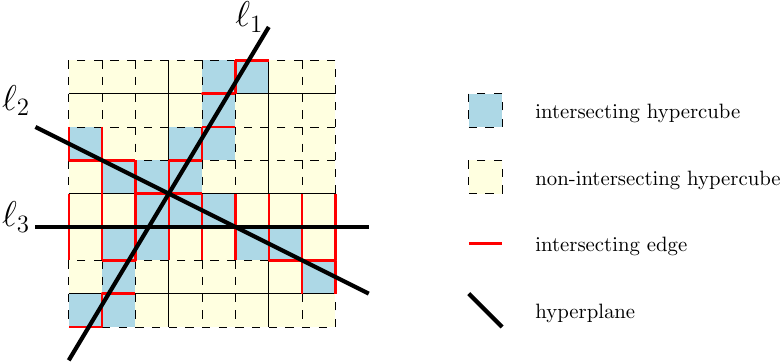}
        \caption{Illustration of intersecting hypercubes and hyperplanes in $\RR^2$.}
        \label{fig:intersectinghyperplane}
    \end{figure}
	
	Clearly, the number of intersecting hypercubes is upper bounded by the number of intersecting edges. The rest of the proof is devoted to showing that this number is strictly smaller than $p^{N}$ \emph{if $p \geq 3NH$}, as this will imply the existence of at least one non-intersecting hypercube.
	
	To estimate the \emph{maximum} number of intersecting edges, we analyze the \emph{maximum} number of edges that a given hyperplane can intersect. For a fixed index $1 \leq i \leq N$, we count the number of edges of the form $(x, x + \frac{1}{p}\mbf{e}_i)$ intersected by a single hyperplane. The key observation is: if we fix all the coordinates of $x$ except the $i$th one, then the edges $(x, x + \frac{1}{p}\mbf{e}_i)$ form a line in the ambient space. Among those edges, there are at most \emph{two} intersecting edges with respect to the given hyperplane. This happens only when the hyperplane intersects an edge at one of its endpoints (e.g., the hyperplane $\ell_2$ and the second vertical line in \Cref{fig:intersectinghyperplane}). In total, for each $1 \leq i \leq N$ and each given hyperplane, there are at most $2(p+1)^{N-1}$ intersecting edges of the form $(x, x + \frac{1}{p}\mbf{e}_i)$. For a given hyperplane, there are thus at most $2N(p+1)^{N - 1}$ intersecting edges in total (since $i \in \intset{N}$). Since the number of hyperplanes is at most $H$, there are at most $2NH(p+1)^{N-1}$ intersecting edges, and this quantity also bounds the number of intersecting cubes as we have seen. With the assumption $p \geq 3NH$, we conclude by proving that $p^N > 2NH(p+1)^{N-1}$. Indeed, we have:
    \begin{equation*}
        \begin{aligned}
            \frac{2NH(p+1)^{N-1}}{p^N} &= \frac{2NH}{p}\left(\frac{p + 1}{p}\right)^{N-1} = \frac{2NH}{p}\left(1 + \frac{1}{p}\right)^{N-1} < \frac{2NH}{p}\left(1 + \frac{1}{p}\right)^{NH}\\ 
            &\leq \frac{2NH}{3NH}\left(1 + \frac{1}{3NH}\right)^{NH} \leq \frac{2e^{1/3}}{3} \approx 0.93 < 1\\
        \end{aligned}
    \end{equation*}
    where we used that $(1+1/n)^n \leq e \approx 2.71828$, the Euler number. 
\end{proof}

\subsection{Proof of \Cref{theorem:omegafinitesetbestapproximationproperty}}
\label{appendix:proofpositivefinite}
\begin{proof}
	We denote $\mbf{X} = \big[x_1, \ldots, x_P\big] \in \mb{R}^{N_0 \times P}$, the matrix representation of $\Omega$. Our proof has three main steps:
	\paragraph{Step 1:} We show that we can reduce the study of the closedness of $\mc{F}_\gI(\Omega)$ to that of the closedness of a union of subsets of $\RR^P$, associated to the vectors $\mathbf{W}_2$.
 To do this, we prove that for any element $f \in \mc{F}_\gI(\Omega)$, there exists a set of parameters $\theta \in \mc{N}_\gI$ such that the matrix of the second layer $\mathbf{W}_2$ belongs to $\{-1, 0, 1\}^{1 \times N_1}$ (since we assume $N_2 = 1$). 
 This idea is reused from the proof of \cite[Theorem 4.1]{arora2018understanding}. 
 
 For $\theta:= \paramtwo \in \mc{N}_\gI$, the function $\mc{R}(\theta)$ has the form:
	\begin{equation*}
		\funcN(x) = \mWs\sigma(\mWf x + \mbf{b}_1) + \mbf{b}_2 = \sum_{i = 1}^{N_1} \mbf{w}_{2,i}\sigma(\mbf{w}_{1,i} x + \mbf{b}_{1,i}) + \mbf{b}_2
	\end{equation*}

	where $\mbf{w}_{1,i} = \mWf[\row{i}] \in \mb{R}^{1 \times N_0}, \mbf{w}_{2,i} = \mWs[i] \in \mb{R}, \mbf{b}_{1,i} = \mbf{b}[i] \in \mb{R}$.
	Moreover, if $w_{2,i}$ is different from zero, we have: 
	\begin{equation*}
		\mbf{w}_{2,i}\sigma(\mbf{w}_{1,i} x + \mbf{b}_1) =  \frac{\mbf{w}_{2,i}}{|\mbf{w}_{2,i}|}\sigma(|\mbf{w}_{2,i}|\mbf{w}_{1,i} x + |\mbf{w}_{2,i}|\mbf{b}_{1,i}).
	\end{equation*}
	In that case, one can assume that $\mbf{w}_{2,i}$ can be equal to either $-1$ or $1$. Thus, we can assume $\mbf{w}_{2,i} \in \{\pm 1, 0\}$. For a vector $\mbf{v} \in \{-1, 0, 1\}^{1 \times N_1}$, we define: 
	\begin{equation}
		\begin{aligned}
			F_\mbf{v} = \left\{[\cR_\theta(x_1), \ldots, \cR_\theta(x_P)] \mid \theta \in \cN_{\gI,\mbf{v}} \right\} 
		\end{aligned}
	\end{equation}
    where $\cN_{\gI, \mbf{v}} \subseteq \cN_\gI$ is the set of $\theta = \paramtwo$ with $\mWs = \mbf{v} \in \{0,1\}^{1 \times N_1}$,
	i.e., in words, $F_\mbf{v}$ is the image of $\Omega$ through the function $\mc{R}_\theta, \theta \in \cN_{\gI,\mbf{v}}$.
	
	Define $\mb{V} := \{\mbf{v} \mid \supp(\mbf{v}) \subseteq I_2\} \cap \{0, \pm 1\}^{1 \times N_1}$. Clearly, for $\mbf{v} \in \mb{V}$, $F_\mbf{v} \subseteq \mc{F}_\gI(\Omega)$. Therefore,
	\begin{equation*}
		\bigcup_{\mbf{v} \in \mb{V}} F_\mbf{v} \subseteq \mc{F}_\gI(\Omega).
	\end{equation*}
	Moreover, by our previous argument, we also have: 
	\begin{equation*}
		\mc{F}_\gI(\Omega) \subseteq \bigcup_{\mbf{v} \in \mb{V}} F_\mbf{v}.
	\end{equation*}
	Therefore, 
	\begin{equation*}
		\mc{F}_\gI(\Omega) = \bigcup_{\mbf{v} \in \mb{V}} F_\mbf{v}.
	\end{equation*}
	\paragraph{Step 2:}
	Using the first step, to prove that $\mc{F}_\gI(\Omega)$ is closed, it is sufficient to prove that $F_\mbf{v}$ is closed, $\forall \mbf{v} \in \mb{V}$. This can be accomplished by further decomposing $F_\mbf{v}$ into smaller closed sets. We denote $\theta'$ the set of parameters $\mW_1, \bb_1$ and $\bb_2$. In the following, only the parameters of $\theta'$ are varied since $\mWs$ is now fixed to $\mbf{v}$.
	
	Due to the activation function $\sigma$, for a given data point $x_j \in \Omega$, we have:
	\begin{equation}
		\label{eq:configuration}
		\sigma(\mbf{W}x_j + \mbf{b}_1) = \mbf{D}_j(\mbf{W}x_j + \mbf{b}_1) 
	\end{equation}
	where $\mbf{D}_j \in \mc{D}$, the set of binary diagonal matrices, and its diagonal coefficients $\mbf{D}_j[i,i]$ are determined by:
	\begin{equation}
		\label{eq:linearconstraint}
		\mbf{D}_j[i,i] = \begin{cases}
			0 & \text{ if }\mbf{W}[\row{i}]x_j + \mbf{b}_1[i] \leq 0\\
			1 & \text{ if } \mbf{W}[\row{i}]x_j + \mbf{b}_1[i] \geq 0\\
		\end{cases}.
	\end{equation}
	Note that $\mbf{D}_j[i,i]$ can take both values $0$ or $1$ if $\mbf{W}[\row{i}]x_j + \mbf{b}_1[i] = 0$. We call the matrix $\mbf{D}_j$ the activation matrix of $x_j$. Therefore, for \eqref{eq:configuration} to hold, the $N_1$ constraints of the form \eqref{eq:linearconstraint} must hold simultaneously. It is important to notice that all these constraints are linear w.r.t. $\theta'$. We denote $\mbf{z}$ a vectorized version of $\theta'$ (i.e., we concatenate all coefficients whose indices are in $I_1$ of $\mW$ and $\bb_1, \bb_2$ into a long vector), and  we write all the constraints in $\eqref{eq:configuration}$  in a compact form:
	\begin{equation*}
		\mc{A}(\mbf{D}_j, x_j)\mbf{z} \leq \mbf{0}_{N_1}
	\end{equation*}
	where $\mc{A}(\mbf{D}_j, x_j)$ is a constant matrix that depend on $\mbf{D}_j$ and $x_j$. 
	
	{Set $\theta = (\mbf{v}, \mbf{z})$}. Given that \eqref{eq:configuration} holds, we deduce that:
	\begin{equation*}
		\mc{R}_\theta(x_j) = \mbf{v}	\sigma(\mbf{W}x_j + \mbf{b}_1) + \mbf{b}_2= \mbf{v}\mbf{D}_j(\mbf{W}x_j + \mbf{b}_1) + \mbf{b}_2 = \mc{V}(\mbf{D}_j, x_j, \mbf{v})\mbf{z}
	\end{equation*}
	where  { $ \mc{V}(\mbf{D}_j, x_j, \mbf{v})$ is a constant matrix that depends on $\mbf{D}_j, \mbf{v}, x_j$}. In particular, $\mc{R}_\theta(x_j)$ is also a linear function w.r.t the parameters $\mbf{z}$ .  Assume that the activation matrices of $(x_1, \ldots, x_P)$ are $(\mbf{D}_1, \ldots, \mbf{D}_P)$, then we have:
	\begin{equation*}
		{\mc{R}_\theta(\Omega)} = \begin{pmatrix}
			\mc{V}(\mbf{D}_1, x_1, \mbf{v})\mbf{z}, \ldots, \mc{V}(\mbf{D}_P, x_P, \mbf{v})\mbf{z}
		\end{pmatrix} \in \RR^{1 \times P}.
	\end{equation*}
    To emphasize that $\cR_\theta(\Omega)$ depends linearly on $\mbf{z}$, for the rest of the proof, we will write $\cR_\theta(\Omega)$ as a vector of size $P$ (instead of a row matrix $1 \times P$) as follows:
    \begin{equation*}
        \mc{R}_\theta(\Omega) = \mc{V}(\mbf{D}_1, \ldots, \mbf{D}_P) \mbf{z} \quad \text{ where } \quad \mc{V}(\mbf{D}_1, \ldots, \mbf{D}_P) = \begin{pmatrix}
			\mc{V}(\mbf{D}_1, x_1, \mbf{v})\\
            \vdots \\
            \mc{V}(\mbf{D}_P, x_P, \mbf{v})
		\end{pmatrix}.
    \end{equation*}
	Moreover, to have $(\bD_1, \ldots, \bD_P)$ activation matrices, the parameters $\mbf{z}$ need to satisfy:
	\begin{equation*}
		\mc{A}(\mbf{D}_1, \ldots, \mbf{D}_P) \mbf{z} \leq 	\mbf{0}_{Q}
	\end{equation*}
	where $Q = PN_1$ and
	\begin{equation*}
		\mc{A}(\mbf{D}_1, \ldots, \mbf{D}_P) = \begin{pmatrix}
			\mc{A}(\mbf{D}_1, x_1)\\
			\vdots\\
			\mc{A}(\mbf{D}_P, x_P)
		\end{pmatrix}.
	\end{equation*}
    
	Thus, the set of $\mc{R}_\theta(\Omega)$ {\emph{given}} the activation matrices $(\mbf{D}_1, \ldots, \mbf{D}_P)$ has the following compact form:
	\begin{equation*}
		F_\mbf{v}^{(\mbf{D}_1, \ldots, \mbf{D}_P)} := \{\mc{V}(\mbf{D}_1, \ldots, \mbf{D}_P)\mbf{z} \mid \mc{A}(\mbf{D}_1, \ldots, \mbf{D}_P)\mbf{z} \leq \mbf{0}\}.
	\end{equation*}
	Clearly, $F_\mbf{v}^{(\mbf{D}_1, \ldots, \mbf{D}_P)} \subseteq F_\mbf{v}$ since each element is equal to $\mc{R}_\theta(\Omega)$ with $\theta = (\mbf{v}, \mbf{z})$ for some $\mbf{z}$. On the other hand, each element of $F_\mbf{v}$ is an element of $F_\mbf{v}^{(\mbf{D}_1, \ldots, \mbf{D}_P)}$ for some $(\mbf{D}_1, \ldots, \mbf{D}_P) \in \mc{D}^P$ since the set of activation matrices corresponding to any $\theta$ is in $\mc{D}^P$.
	Therefore,
	\begin{equation*}
		F_\mbf{v} =\bigcup_{(\mbf{D}_1, \ldots, \mbf{D}_P) \in \mc{D}^P} F_\mbf{v}^{(\mbf{D}_1, \ldots, \mbf{D}_P)}.
	\end{equation*}
	
	\paragraph{Step 3:}
	Using the previous step, it is sufficient to prove that $F_\mbf{v}^{(\mbf{D}_1, \ldots, \mbf{D}_P)}$ is closed, for any $\mbf{v}, (\mbf{D}_1, \ldots, \mbf{D}_P) \in \mc{D}^P$. To do so, we write $F_\mbf{v}^{(\mbf{D}_1, \ldots, \mbf{D}_P)}$ in a more general form:
	\begin{equation}
		\label{eq:compactform}
		\{\mbf{Az} \mid \mbf{Cz} \leq \mbf{y}\}.
	\end{equation}
	Therefore, it is sufficient to prove that a set as in  \Cref{eq:compactform} is closed. These sets are linear transformations of an intersection of a finite number of half-spaces. Since the intersection of a finite number of halfspaces is \emph{stable} under linear transformations (cf. \Cref{lemma:closurehalfspace} {below}), and the intersection of a finite number of half-spaces is a closed set itself, the proof can be concluded. 
\end{proof}

\begin{lemma}[Closure of intersection of half-spaces under linear transformations]
	\label{lemma:closurehalfspace}
	For any  $\mbf{A} \in \mb{R}^{m \times n}, \mbf{C} \in \mb{R}^{\ell \times n}, \mbf{y} \in \mb{R}^\ell$, there exists $\mbf{C}' \in \mbf{R}^{k \times m}, \mbf{b}' \in \mbf{R}^{k}$  such that:
	\begin{equation*}
		\{\mbf{Ax} \mid \mbf{Cx} \leq \mbf{y}\} = \{\mbf{C}'\mbf{z} \leq \mbf{b}'\}.
	\end{equation*}
\end{lemma}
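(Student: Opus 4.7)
The plan is to recognize this as the classical statement that the image of a polyhedron under a linear map is itself a polyhedron, and prove it via Fourier--Motzkin elimination. Concretely, I would first rewrite the target set as a projection: introduce the auxiliary polyhedron
\begin{equation*}
    S := \left\{ \begin{pmatrix} \mbf{x} \\ \mbf{z} \end{pmatrix} \in \mb{R}^{n+m} \ \middle|\ \mbf{C}\mbf{x} \leq \mbf{y},\ \mbf{A}\mbf{x} - \mbf{z} \leq \mbf{0},\ -\mbf{A}\mbf{x} + \mbf{z} \leq \mbf{0} \right\},
\end{equation*}
which is an intersection of finitely many half-spaces in $\mb{R}^{n+m}$. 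By construction, $\{\mbf{A}\mbf{x} \mid \mbf{C}\mbf{x} \leq \mbf{y}\}$ is precisely the image of $S$ under the coordinate projection $\pi : (\mbf{x}, \mbf{z}) \mapsto \mbf{z}$, i.e.\ the set obtained from $S$ by eliminating the variables $x_1, \ldots, x_n$.

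Next I would invoke Fourier--Motzkin elimination to show that eliminating a single variable from a system of linear inequalities produces another system of linear inequalities. The standard construction goes as follows: given inequalities involving $x_1$, partition them into those in which the coefficient of $x_1$ is positive, negative, or zero. For each pair consisting of a ``positive'' and a ``negative'' inequality, eliminate $x_1$ by forming a suitable nonnegative combination; keep all the ``zero'' inequalities unchanged. The resulting finite collection of inequalities (now in one fewer variable) describes exactly the projection of the original polyhedron onto the remaining coordinates. A short algebraic verification establishes both inclusions.

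Iterating this procedure $n$ times to eliminate $x_1, \ldots, x_n$ successively yields a finite list of linear inequalities on $\mbf{z}$ alone, which can be assembled into a matrix $\mbf{C}' \in \mb{R}^{k \times m}$ and a vector $\mbf{b}' \in \mb{R}^{k}$ such that
\begin{equation*}
    \{\mbf{A}\mbf{x} \mid \mbf{C}\mbf{x} \leq \mbf{y}\} = \pi(S) = \{\mbf{z} \in \mb{R}^m \mid \mbf{C}'\mbf{z} \leq \mbf{b}'\},
\end{equation*}
which is the desired conclusion (up to the harmless typographical rewrite of the right-hand side as $\{\mbf{z} \mid \mbf{C}'\mbf{z} \leq \mbf{b}'\}$).

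The main technical step is the correctness of a single Fourier--Motzkin elimination pass, i.e.\ verifying that the combined inequalities produced from positive/negative pairs cut out exactly the projection. This is routine but needs a careful two-sided argument: one direction is immediate since every produced inequality is a nonnegative combination of original ones and hence valid on the projection, while the converse requires, given a candidate $\mbf{z}$ satisfying all new inequalities, to exhibit a value of the eliminated variable consistent with the original system, which reduces to comparing the maxima of the lower bounds with the minima of the upper bounds on that variable. Once this single-variable step is established, the full result follows by an easy induction on $n$.
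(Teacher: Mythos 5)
Your proposal is correct and follows essentially the same route as the paper: both recast the image set as a projection of a polyhedron in $\mb{R}^{n+m}$ (encoding $\mbf{z}=\mbf{A}\mbf{x}$ via two inequalities) and then eliminate the $\mbf{x}$-variables one at a time by Fourier--Motzkin, pairing positive- and negative-coefficient inequalities and keeping the zero-coefficient ones. The only difference is presentational; the substance of the argument is identical.
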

\begin{proof}
	The proof uses Fourier–Motzkin elimination \footnote{More detail about this method can be found in this \href{https://www.worldscientific.com/doi/suppl/10.1142/8527/suppl_file/8527_chap01.pdf}{link}}. This method is a quantifier elimination algorithm for linear functions \footnote{In fact, the algorithm determining the closedness of $\mc{L}_\gI$ is also a quantifier elimination one, but it can be used in a more general setting: polynomials}. In fact, the LHS can be written as: $\{\mbf{t} \mid \mbf{t} = \bA\mbf{x}, \mbf{Cx} \leq \mbf{y}\}$, or more generally,
	\begin{equation*}
		\left\{\mbf{t} \mid \exists \mbf{x} \in \mb{R}^n \text{ s.t. } \mbf{B}\begin{pmatrix}
			\mbf{x} \\ \mbf{t} 
		\end{pmatrix} \leq \mbf{v}\right\} \subseteq \mb{R}^m
	\end{equation*}
	where $\big(\begin{smallmatrix}
		\mbf{x} \\ \mbf{t}
	\end{smallmatrix}\big)$ is the concatenation of two vectors $(\mbf{x}, \mbf{t})$ and the linear constraints imposed by $\mbf{B}\big(\begin{smallmatrix}
		\mbf{x} \\ \mbf{t}
	\end{smallmatrix}\big) \leq \mbf{v}$ replace the two linear constraints $\mbf{Cx} \leq \mbf{y}$ and $\mbf{t} = \mbf{Ax}$.
	The idea is to show that: 
	\begin{equation}
		\label{eq:quantifierelimination}
		\left\{\mbf{t} \mid \exists \mbf{x} \in \mb{R}^n \text{ s.t. } \mbf{B}\begin{pmatrix}
			\mbf{x} \\ \mbf{t} 
		\end{pmatrix} \leq \mbf{v}\right\} = \left\{\mbf{t} \mid \exists \mbf{x}' \in \mb{R}^{n-1} \text{ s.t. } \mbf{B}'\begin{pmatrix}
			\mbf{x}' \\ \mbf{t} 
		\end{pmatrix} \leq \mbf{v}'\right\}
	\end{equation}
	for some matrix $\bB'$ and vector $\mbf{v}'$. By doing so, we reduce the dimension of the quantified parameter $\mbf{x}$ by one. {By repeating this } procedure until there is no more quantifier, we prove the lemma. The rest of this proof is thus devoted to show that $\mbf{B'}, \mbf{v}'$ as in \eqref{eq:quantifierelimination} do exist.
	
	We will show how to eliminate the first coordinate of $\mbf{x}[1]$. First, we partition the set of linear constraints of LHS of \eqref{eq:quantifierelimination} into three groups:
	\begin{enumerate}
		\item $S_0 := \{j \mid \mbf{B}[j,1] = 0\}$: In this case, $\mbf{x}[1]$ does not appear in this constraint, there is nothing to do.
		\item $S_+ := \{j \mid \mbf{B}[j,1] > 0\}$, for $j \in S_+$, we can rewrite the constraints $\mbf{B}[\row{j}]\big(\begin{smallmatrix}
			\mbf{x} \\ \mbf{t}
		\end{smallmatrix}\big) \leq \mbf{v}[j]$ as:
		\begin{equation*}
			\mbf{x}[1] \leq \gamma[j] + \sum_{i = 2}^{n} \alpha[i]\mbf{x}[i] + \sum_{i = 1}^{m} \beta[i]\mbf{t}[i]:= B^+_j(\mbf{x}',\mbf{t})
		\end{equation*}
		{for some suitable $\gamma[j], \alpha[i], \beta[i]$} where $\mbf{x}'$ is the last $(n-1)$ coordinate of the vector $\mbf{x}$.
		\item $S_- := \{j \mid \mbf{B}[j,1] < 0\}$: for $j \in S_-$, we can rewrite the constraints $\mbf{B}[\row{j}]\big(\begin{smallmatrix}
			\mbf{x} \\ \mbf{t}
		\end{smallmatrix}\big) \leq \mbf{v}_j$ as:
		\begin{equation*}
			\mbf{x}[1] \geq \gamma[j] + \sum_{i = 2}^{n} \alpha[i]\mbf{x}[i] + \sum_{i = 1}^{m} \beta[i]\mbf{t}[i]:= B^-_j(\mbf{x}',\mbf{t}).
		\end{equation*}
	\end{enumerate}
	For the existence of such $\mbf{x}[1]$, it is necessary and sufficient that:
	\begin{equation}
		\label{eq:existencequantifier}
		B^+_k(\mbf{x}',\mbf{t}) \geq B^-_j(\mbf{x}',\mbf{t}), \qquad \forall k \in S_+, j \in S_-.
	\end{equation}
	Thus, we form the matrix $\mbf{B}'$ and the vector $\mbf{v}'$ such that the linear constraints written in the following form:
	\begin{equation*}
		\mbf{B}'\begin{pmatrix}
			\mbf{x}' \\ \mbf{t} 
		\end{pmatrix} \leq \mbf{v}'
	\end{equation*}
	represent all the linear constraints in the set $S_0$ and those in the form of \eqref{eq:existencequantifier}. Using this procedure recursively, one can eliminate all quantifiers and prove the lemma.
\end{proof}
\subsection{Proofs for \Cref{lem:closednessdecidable}}
\label{appendix:realalggeo}
Since we use tools of real algebraic geometry, this section provides basic notions of real algebraic geometry for readers who are not familiar with this domain. It is organized and presented as in the textbook \cite{realgeoalg} (with slight modifications to better suit our needs). For a more complete presentation, we refer readers to \cite[Chapter 2]{realgeoalg}.
	
\begin{definition}[Semi-algebraic sets]
	\label{def:semialgebraicset}
	A semi-algebraic set of $\mb{R}^n$ has the form:
	\begin{equation*}
		\bigcup_{i = 1}^k\{x \in \mb{R}^n \mid P_i(x) = 0 \land \bigwedge_{j = 1}^{\ell_i} Q_{i,j}(x) > 0\}
	\end{equation*} 
	where $P_i, Q_{i,j}: \mb{R}^n \mapsto \mb{R}$ are polynomials and $\land$ is the ``and'' logic. 
\end{definition}
	
The following theorem is known as the projection theorem of semi-algebraic sets. In words, the theorem states that: The projection of a semi-algebraic set to a lower dimension is still a semi-algebraic set (of lower dimension).
	
\begin{theorem}[Projection theorem of semi-algebraic sets {\cite[Theorem 2.92]{realgeoalg}}]
	\label{theorem:projection}
	Let $A$ be a semi-algebraic set of $\mb{R}^n$ and define:
	\begin{equation*}
		B = \{(x_1, \ldots, x_{n-1}) \mid \exists x_n, (x_1, \ldots, x_{n-1}, x_n) \in A\}
		\end{equation*}
	then $B$ is a semi-algebraic set of $\mb{R}^{n-1}$.
\end{theorem}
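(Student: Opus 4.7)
The plan is to reduce the projection of a general semi-algebraic set to the projection of an elementary ``cell'' defined by a single polynomial equality and finitely many strict polynomial inequalities, and then to encode the resulting existence quantifier by means of sign conditions on univariate polynomials. Since projection commutes with finite unions, it suffices by \Cref{def:semialgebraicset} to treat a single piece of the form $A = \{(x,x_n) \in \mb{R}^{n-1}\times\mb{R} : P(x,x_n)=0 \wedge \bigwedge_{j=1}^{\ell} Q_j(x,x_n)>0\}$ and show that $B = \{x \in \mb{R}^{n-1} : \exists x_n, (x,x_n)\in A\}$ is semi-algebraic. For each fixed $x$, I would view $P(x,\cdot)$ and the $Q_j(x,\cdot)$ as univariate polynomials in $x_n$ and split into two cases: either $P(x,\cdot)\equiv 0$, in which case the existence of $x_n$ reduces to an open sign-condition projection, or $P(x,\cdot)$ has finitely many real roots and we must decide whether at least one such root realises $Q_j>0$ for every $j$.

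The heart of the argument, and the step I expect to be the main obstacle, is converting the quantifier ``$\exists \alpha : P(x,\alpha)=0 \wedge Q_j(x,\alpha)>0\ \forall j$'' into a quantifier-free Boolean combination of polynomial (in)equalities in $x$. The standard machinery for this is the theory of Tarski queries, Sturm sequences, and subresultants. Concretely, for any auxiliary univariate polynomial $R$, the Tarski query
\[ \mathrm{TaQ}(P,R) \;=\; \sum_{\alpha\,:\,P(\alpha)=0}\mathrm{sign}\bigl(R(\alpha)\bigr) \]
is an integer computable as a signed count of sign variations in the signed-remainder sequence of $P$ and $P'R$. Applying this with $R$ ranging over all products $\prod_j Q_j^{e_j}$ for $e_j\in\{0,1,2\}$ allows one to count, for each sign pattern $(\epsilon_1,\dots,\epsilon_\ell)\in\{-1,0,1\}^{\ell}$, the exact number of real roots of $P$ at which $Q_j$ realises $\epsilon_j$. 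The existence of a root with all $\epsilon_j=+1$ is then expressed as a Boolean combination of equalities and inequalities on these integers. Since each Tarski query is in turn a signed sum of signs of principal subresultant coefficients of $P$ (and $P'R$), which are \emph{polynomials} in the coefficients of $P$ and the $Q_j$, and those coefficients are themselves polynomials in $x$, the resulting condition on $x$ is manifestly a Boolean combination of polynomial (in)equalities, hence semi-algebraic.

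The remaining difficulty is degeneracy: the leading coefficients of $P(x,\cdot)$ and of the successive subresultants may vanish on a lower-dimensional semi-algebraic subset of $\mb{R}^{n-1}$, on which the effective degree of $P$ drops and the Sturm/subresultant formulas change. I would resolve this by partitioning $\mb{R}^{n-1}$ into finitely many semi-algebraic strata indexed by which leading coefficients vanish, applying the construction above on each stratum separately, and taking the finite union of the resulting semi-algebraic descriptions of $B$; on the stratum where $P(x,\cdot)$ is identically zero, the problem reduces to a strictly open projection that can be handled by induction on $\ell$ (or by repeating the construction with one of the $Q_j$ playing the role of $P$ after an infinitesimal perturbation argument). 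The combinatorial book-keeping of these specialisation cases is the principal technical obstacle, and is essentially what makes the projection theorem nontrivial despite its intuitive content.
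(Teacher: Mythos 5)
The paper does not prove this statement itself: it is quoted verbatim as \cite[Theorem 2.92]{realgeoalg}, with the remark that the (constructive) proof is in \cite[Section 2.4]{realgeoalg}. Your sketch — reduction to a single cell $\{P=0 \wedge \bigwedge_j Q_j>0\}$, sign determination at the roots of $P(x,\cdot)$ via Tarski queries over the products $\prod_j Q_j^{e_j}$ computed from subresultant/signed-remainder coefficients that are polynomials in $x$, and stratification by vanishing leading coefficients to handle degeneracies — is precisely the Tarski--Seidenberg argument used in that reference, so it takes essentially the same approach as the paper's source.
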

\Cref{theorem:projection} is a powerful result. Its proof \cite[Section $2.4$]{realgeoalg} (which is constructive) shows a way to express $B$ (in \Cref{theorem:projection}) by using only the first $n-1$ variables $(x_1, \ldots, x_{n-1})$.

Next, we introduce the language of an ordered field and sentence. Readers which are not familiar to the notion of ordered field can simply think of it as $\mb{R}$ and its subring as $\mb{Q}$. Example for fields that is not ordered is $\mb{C}$ (we cannot compare two arbitrary complex number). Therefore, the notion of semi-algebraic set in \Cref{def:semialgebraicset} (which contains $Q_{i,j}(x) > 0$) does not make sense when the underlying field is not ordered.

The central definition of the language of $\RR$ is \emph{formula}, an abstraction of semi-algebraic sets. In particular, the definition of formula is recursive: formula is built from atoms - equalities and inequalities of polynomials whose coefficients are in a subring $\mb{Q}$ of $\RR$. It can be also formed by combining with logical connectives ``and'', ``or'', and ``negation'' ($\land, \lor, \neg$) and existential/universal quantifiers ($\exists, \forall$). Formula has variables, which are those of atoms in the formula itself. \emph{Free variables} of a formula are those which are not preceded by a quantifier ($\exists, \forall$). The definitions of a formula and its free variables are given recursively as follow:

\begin{definition}[Language of the ordered field with coefficients in a ring]
	Consider $\mbf{R}$ an ordered field and $\bQ \subseteq \mbf{R}$ a subring, a formula $\Phi$ and its set of free variables $\mathtt{Free}(X)$ are defined recursively as:
	\begin{enumerate}
		\item An atom: if $P \in \bQ[X]$ (where $\bQ[X]$ is the set of polynomials with coefficients in $\bQ$) then $\Phi := (P=0)$ (resp. $\Phi := (P>0)$) is a formula and its {set of} free variables is $\mathtt{Free}(\Phi) := \{X_1, \ldots, X_n\}$ where $n$ is the number of variables.
		\item If $\Phi_1$ and $\Phi_2$ are formulas, then so are $\Phi_1 \vee \Phi_2, \Phi_1 \wedge \Phi_2$ and $\neg \Phi_1$. The set of free variables is defined as:
		\begin{enumerate}
			\item $\mathtt{Free}(\Phi_1 \vee \Phi_2) 
				:= \mathtt{Free}(\Phi_1) \cup \mathtt{Free}(\Phi_2)$.
				\item 
				 $\mathtt{Free}(\Phi_1 \wedge \Phi_2) := \mathtt{Free}(\Phi_1) \cup \mathtt{Free}(\Phi_2)$.
				\item $\mathtt{Free}(\neg \Phi_1) = \mathtt{Free}(\Phi_1)$.
			\end{enumerate}
			\item If $\Phi$ is a formula and $X \in \mathtt{Free}(\Phi)$, then $\Phi' = (\exists X) \Phi$ and $\Phi'' = (\forall X) \Phi$ are also formulas and 
			$\mathtt{Free}(\Phi') := \mathtt{Free}(\Phi) \setminus \{X\}$, and
			$\mathtt{Free}(\Phi'') := \mathtt{Free}(\Phi) \setminus \{X\}$.
		\end{enumerate}
	\end{definition}
	
	\begin{definition}[Sentence]
		\label{def:sentence}
		A sentence is a formula of an ordered field with no free variable.
	\end{definition}
	\begin{example}
		Consider two formulas:
		\begin{align*}
			\Phi_1 &= \{\exists X_1, X_1^2 + X_2^2 = 0\} \\
			\Phi_2 &= \{\exists X_1, \exists X_2, X_1^2 + X_2^2 = 0\} 
		\end{align*}
		While $\Phi_1$ is a normal formula, $\Phi_2$ is a sentence and given an underlying field ($\mb{R}$, for instance), $\Phi_2$ is either correct or not. Here, $\Phi_2$ is correct (since $X_1^2 + X_2^2 = 0$ has a root $(0,0)$). Nevertheless, if one consider $\Phi_2'= \{\exists X_1, \exists X_2, X_1^2+ X_2^2 = -1\}$, then $\Phi_2'$ is not correct.
	\end{example}
	
An algorithm deciding whether a sentence is correct or not is very tempting since formula and sentence can be used to express many theorems in the language of an ordered field. The proof or disproof will be then given by an algorithm. Such an algorithm does exist, as follow:
\begin{theorem}[Decision problem {\cite[Algorithm 11.36]{realgeoalg}}]
	\label{theorem:decision}
	There exists an algorithm to decide whether a given sentence is correct is not with complexity $O(sd)^{O(1)^{k-1}}$ where $s$ is the bound on the number of polynomials in $\Phi$, $d$ is the bound on the degrees of the polynomials in $\Phi$ and $k$ is the number of variables. 
\end{theorem}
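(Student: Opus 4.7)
The plan is to establish this by induction on the number of variables, giving an explicit quantifier-elimination procedure and tracking the growth of the polynomial data at each elimination step. First I would put the input sentence into prenex normal form $(Q_1 X_1)\cdots(Q_k X_k)\, F(X_1,\ldots,X_k)$, where $F$ is a quantifier-free Boolean combination of polynomial sign conditions; using $\forall X \equiv \neg\,\exists X\,\neg$ and pushing negations inward, this reduces the whole question to iteratively eliminating a single innermost existential quantifier.

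The core subroutine, which I would next develop, takes a finite set $\mathcal{P}\subset\mathbb{R}[X_1,\ldots,X_k]$ of at most $s$ polynomials of degree at most $d$, viewed as polynomials in $X_k$ with coefficients in $\mathbb{R}[X_1,\ldots,X_{k-1}]$, and produces a new set $\mathcal{P}'\subset\mathbb{R}[X_1,\ldots,X_{k-1}]$ with the property that the set of realizable sign conditions on $\mathcal{P}$ along the fibre $\{(x_1,\ldots,x_{k-1})\}\times\mathbb{R}$ is determined by the signs of the polynomials in $\mathcal{P}'$ at $(x_1,\ldots,x_{k-1})$. Once $\mathcal{P}'$ is in hand, the existential quantifier on $X_k$ is replaced by an explicit quantifier-free Boolean combination of sign conditions over $\mathcal{P}'$, and one recurses on the remaining $k-1$ quantifiers. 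Following the refined projection of \cite{realgeoalg}, $\mathcal{P}'$ is built from leading coefficients, discriminants, and subresultant coefficients of pairs from $\mathcal{P}$ taken with respect to $X_k$, together with Thom's encoding of real algebraic roots that lets one specify a real root of a univariate polynomial by the sign vector of its derivatives and thereby avoids a premature combinatorial blow-up.

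For the complexity accounting, each elimination step multiplies both the number of polynomials and their degrees by factors polynomial in $s$ and $d$; iterating the bound through $k-1$ eliminations yields the doubly exponential bound $(sd)^{O(1)^{k-1}}$, and the final quantifier-free sentence in zero variables is a Boolean combination of sign tests on explicit integers that can be evaluated in linear time. The hardest step is establishing correctness and complexity of the one-variable projection: the naive construction based on the full cylindrical algebraic decomposition of Collins would produce a $\mathcal{P}'$ whose size and degree grow by a factor that is itself exponential in $k$ at each level, leading to a triply exponential overall bound. Achieving the doubly exponential bound stated here requires proving that the Basu--Pollack--Roy subresultant-based projection, combined with Thom encoding of real algebraic numbers, preserves exactly the information needed to recover fibrewise sign conditions; this verification, rather than the bookkeeping, is the main obstacle.
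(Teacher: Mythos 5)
The paper does not actually prove this statement: it is quoted as a black box from \cite[Algorithm 11.36]{realgeoalg}, and the only accompanying text is an informal remark that the proof iterates the projection theorem for semi-algebraic sets (\Cref{theorem:projection}) to strip quantifiers one at a time until a variable-free sentence remains. Your outline follows exactly that route --- prenex normal form, elimination of one innermost existential quantifier via a projection set $\mathcal{P}'$ built from subresultant data and Thom encodings, then recursion --- so in spirit you are reconstructing the proof of the cited textbook result rather than diverging from the paper. Your complexity bookkeeping is also the right one: if each elimination multiplies the number of polynomials and their degrees by factors polynomial in the current $s$ and $d$, then $k-1$ iterations give $(sd)^{C^{k-1}}$, matching the stated $O(sd)^{O(1)^{k-1}}$; and you correctly observe that a naive Collins-style cylindrical decomposition would not achieve this bound when the quantifier blocks are singletons.

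That said, as a standalone proof your proposal is incomplete in exactly the place you flag yourself: the correctness and the size/degree bounds of the one-variable projection (that the signs of the polynomials in $\mathcal{P}'$ at $(x_1,\ldots,x_{k-1})$ determine the realizable sign conditions of $\mathcal{P}$ on the fibre, with $|\mathcal{P}'|$ and $\deg \mathcal{P}'$ polynomial in $s$ and $d$) are asserted, not established. Since the paper itself defers this entirely to \cite{realgeoalg}, the appropriate resolution is either to keep the statement as a citation, as the paper does, or to import the precise projection lemma from that reference; your sketch cannot be considered a proof until that subroutine is justified.
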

A full description of \cite[Algorithm 11.36]{realgeoalg} (quantifier elimination algorithm) is totally out of the scope of this paper. Nevertheless, we will try to explain it in a concise way. The key observation is \Cref{theorem:projection}, the central result of real algebraic geometry. As discussed right after \Cref{theorem:projection}, its proof implies that one can replace a sentence by another whose number of quantifiers is reduced by one such that both sentences agree (both are true or false). Applying this procedure iteratively will result into a sentence without any variable (and the remain are only \emph{coefficients in the subring}). We check the correctness of this final sentence by trivially verifying all the equalities/inequalities and obtain the answer for the original one. 
\begin{proof}[Proof of \Cref{lem:closednessdecidable}]
	To decide whether $\mc{L}_\gI$ is closed or not, it is equivalent to decide if the following \emph{sentence} (see \Cref{def:sentence}) is true or false:
	\begin{equation*}
		\begin{aligned}
			\exists \mbf{A}, &(\forall \mbf{X}_{L}, \ldots, \mbf{X}_{1}, P(\mbf{A}, \mbf{X}_{L},\ldots, \mbf{X}_{1}) > 0) \land \\
			&(\forall \epsilon > 0, \exists \mbf{X}'_{L}, \ldots, \mbf{X}'_{1},P(\mbf{A}, \mbf{X}'_{L},\ldots, \mbf{X}'_{1}) - \epsilon < 0)
			\end{aligned}
	\end{equation*}
	where $P(\mbf{A}, \mbf{X}_1,\ldots, \mbf{X}_L) := \sum_{(i,j)} (\mbf{A}[i,j] - P_{i,j}^{\gI}(\mbf{X}_{L},\ldots, \mbf{X}_{1}))^2$.
		
	This sentence basically asks whether there exists a matrix $\mbf{A} \in \overbar{\mc{F}_{\gI}} \setminus \mc{F}_{\gI}$ or not. It can be proved that this sentence can be decided to be true or false using real algebraic geometry tools (see \Cref{theorem:decision}), with a complexity $O\left((sd)^{C^{k-1}}\right)$  where $C$ is a universal constant and $s,d,k$ are the number of polynomials, the maximum degree of the polynomials and the number of variables in the sentence, respectively. Applying this to our case, we have $s = 2, d = 2L, k = N_LN_0 + 1 + 2\sum_{\ell = 1}^L |I_{\ell}|$ (remind that $|I_{\ell}|$ is the total number of unmasked coefficients of $\mbf{X}_{\ell}$). 
\end{proof}

\subsection{{Polynomial algorithm to detect 
support constraints $\bI = (I,J)$ with non-closed $\cL_{\bI}$.}}
{The following sufficient condition for non-closedness is based on the existence in the support constraint of $2 \times 2$ blocks sharing the essential properties of a $2 \times 2$ LU support constraint.}
\label{subsection:polyalgofornonclosedness}
\begin{lemma}
	\label{lemma:falsenegative}
	Consider a pair $\bI = (I,J) \in \{0,1\}^{m \times r} \times \{0,1\}^{r \times n}$ of support constraints for the weight matrices of one-hidden-layer neural network. If there exists four indices $1 \leq i_1, i_2 \leq m, 1 \leq j_1, j_2 \leq n$ and two indices $k \neq l, 1 \leq k, l \leq r$ such that:
	\begin{enumerate}[leftmargin=*]
		\item For each pair $(i,j) \in \{(i_1, j_1), (i_1, j_2), (i_2, j_1)\}$  
  {we have:}\\
  $$(i,j) \in \supp(I[\col{k}]J[\row{k}])\ \text{and}\ (i,j) \notin \supp(I[\col{\ell}]J[\row{\ell}]), \forall \ell \neq k.$$
		\item The pair $(i_2, j_2)$ belongs to $\supp(I[\col{k}]J[\row{k}])$ and to $\supp(I[\col{l}]J[\row{l}])$.
	\end{enumerate} 
 {then $\cL_{\bI}$ is non-closed.}
\end{lemma}
\begin{proof}
	First, it is easy to see that the assumptions of this lemma are equivalent to those of \cite[Theorem 4.20]{papiersimax} since $\supp(I[\col{k}]J[\row{k}])$ is precisely the $k$th rank-one support of the pair $(I,J)$ \cite[Definition 3.1]{papiersimax}. Without loss of generality, one can assume that $i_1, j_1 = 1, i_2, j_2 = 2$ and $k = 1, l = 2$. We will prove that $\bA \in \overline{\cL_\bI} \setminus \cL_\bI$ where
	\begin{equation*}
		\bA \coloneqq \begin{pmatrix}
			\bA' & \mbf{0}\\
			\mbf{0} & \mbf{0}
		\end{pmatrix} \in \RR^{m \times n}, \text{ with } \bA' \coloneqq \begin{pmatrix}
			0 & 1 \\
			1 & 0
		\end{pmatrix} \in \RR^{2 \times 2}.
	\end{equation*}
	This can be shown in two steps:
	\begin{enumerate}[leftmargin=*]
		\item Proof that $\bA \in \overline{\cL_\bI}$: For any $\epsilon > 0$, consider two factors:
		\begin{equation*}
			\bX_\epsilon = \begin{pmatrix}
				\bX'_\epsilon & \mbf{0} \\
				\mbf{0} & \mbf{0}	
			\end{pmatrix}, \bY_\epsilon = \begin{pmatrix}
				\bY'_\epsilon & \mbf{0} \\
				\mbf{0} & \mbf{0}
			\end{pmatrix}
		\end{equation*}
		where $\bX_\epsilon', \bY'_\epsilon \in \RR^{2 \times 2}$ respect the support constraints corresponding to the \textbf{LU} architecture. It is not hard to see that such a construction of $(\bX_\epsilon,\bY_\epsilon)$ satisfies the support constraints $(I,J)$ (due to the assumption of the lemma and the value of indices). Moreover, we also have:
		\begin{equation*}
			\|\bA - \bX_\epsilon\bY_\epsilon\|_F = 	\|\bA' - \bX'_\epsilon\bY'_\epsilon\|_F
		\end{equation*}
		Thus, to have $\|\bA - \bX_\epsilon\bY_\epsilon\|_F \leq \epsilon$, it is sufficient to choose a pair of factors $(\bX_\epsilon', \bY'_\epsilon)$ respecting the \textbf{LU} architecture of size $2 \times 2$ such that $\|\bA' - \bX'_\epsilon\bY'_\epsilon\|_F \leq \epsilon$. Such a pair exists, since the set of matrices admitting the exact \text{LU} decomposition is dense in $\RR^{2 \times 2}$. This holds for any $\epsilon > 0$. Therefore, $\bA \in \overline{\cL_\bI}$.
		\item Proof that $\bA \notin \cL_\bI$: Assume there exist a pair of factors $(\bX,\bY)$ whose product $\bX\bY = \bA$ and supports are included in $(I,J)$. Due to the assumptions on the pairs $(i_1, j_1), (i_1, j_2), (i_2,j_1)$, we must have:
		\begin{equation*}
			\left\{
			\begin{aligned}
				 &\bX[1,1]\bY[1,1] &= \bA[1, 1] &= 0 \\
				 &\bX[2,1]\bY[1,1] &= \bA[2, 1] &= 1 \\
				 &\bX[1,1]\bY[1,2] &= \bA[1, 2] &= 1.
    		\end{aligned}
			\right.
		\end{equation*}
		It is easy to see that it is impossible. Therefore, $\bA \notin \cL_\bI$. \qedhere
	\end{enumerate}
\end{proof}
{Given a pair of support constraints $\bI$, it is possible to check in time polynomial in $m,r,n$ whether the conditions of \Cref{lemma:falsenegative} hold.}
A brute force algorithm has complexity $O(m^2n^2r)$. A more clever implementation with careful book-marking can reduce this complexity to $O(\min(m,n)mnr)$.

\section{Proofs for results in \Cref{section:omegahypercube}}
\subsection{Proof of \Cref{theorem:nonclosedsufficient}}
\label{appendix:proofnegativecube}
In fact,  \Cref{theorem:nonclosedsufficient} is a corollary of \Cref{lem:nonclosedsufficientlocal}. Thus, we will give a proof for \Cref{lem:nonclosedsufficientlocal} in the following.

\begin{proof}[Proof of \Cref{lem:nonclosedsufficientlocal}]
	Since $\mbf{A} \in \overbar{\mc{L}_{\gI}} \backslash \mc{L}_{\gI} \subseteq  \mb{R}^{N_L \times N_0}$, we have:
	\begin{enumerate}
		\item $\mbf{A} \notin \mc{L}_\gI$.
		\item There exists a sequence $\{(\mbf{X}^k_i)_{i = 1}^L\}_{k \in \mb{N}}$ such that $\lim_{k \to \infty} \|\mbf{X}_L^k\ldots \mbf{X}_1^k - \mbf{A}\| = 0$ for any norm defined on $\RR^{N_0}$.
	\end{enumerate}
	We will prove that the linear function: $f(x) := \mbf{A}x$ satisfies $f \in \overbar{\mc{F}_{\gI}} \setminus \mc{F}_{\gI}$ (where $\overbar{\mc{F}_{\gI}}$ is the closure of $\mc{F}_{\gI}$ in $(C^{0}(\Omega),\|\cdot\|_{\infty})$, that is to say $f$ is not the realization of any neural network but it is the uniform limit of the realizations of a sequence of neural networks).
	Firstly, we prove that $f \notin \mc{F}_{\gI}$. For the sake of contradiction, assume there exists $\theta = (\mbf{W}_i, \mbf{b}_i)_{i = 1}^L \in \mc{N}_{\gI}$ such that $\mc{R}_\theta = f$. Since $\mc{R}_\theta$ is the realization of a ReLU neural network, it is a continuous piecewise linear function. Therefore, since $\Omega$ has non-empty interior, there exist a non-empty open subset $\Omega'$ of $\mathbb{R}^{d}$ such that $\Omega' \subseteq \Omega$ and $\funcN$ is linear on $\Omega'$, i.e., there are $\mbf{A}' \in \mathbb{R}^{N_{L} \times N_{0}}$, $\mbf{b} \in \mathbb{R}^{N_{L}}$ such that $\mc{R}_\theta (x) = \mbf{A}'x + \mbf{b}', \forall x \in \Omega'$. Since $f = \mc{R}_\theta$, we have: $\mbf{A'} = \mbf{A}$ and also equal to the Jacobian matrix of $\cR_\theta$ on $\Omega'$. Using \Cref{lemma:jacobian} and the fact that $\bA \notin \cL_\gI$, we conclude that $f \notin \cF_\gI$.

	There remains to construct a sequence $\seqk{\theta^k}, \theta^k = \paramsk \in \mc{N}_\gI$  such that $\lim_{k \to \infty} \|\func-f\|_\infty = 0$. We will rely on the sequence $\{(\mbf{X}^k_i)_{i = 1}^L\}_{k \in \mb{N}}$ for our construction. 
	Given $k \in \mb{N}$ we simply define the weight matrices as $\mbf{W}_{i}^{k} = \mbf{X}_{i}^{k}, 1 \leq i \leq L$. The biases are built recursively. Starting from  $c_{1}^{k} := \sup_{x \in \Omega} \|\mbf{W}_1^kx\|_\infty$ and $\mbf{b}_{1}^{k} := c_{1}^{k} \mbf{1}_{N_{1}}$,  
	we iteratively define for $2 \leq i \leq L-1$:
	\begin{align*}
		\gamma_{i-1}^k(x) & := \mbf{W}_{i-1}^k x + \mbf{b}_{i-1}\\
		c^k_i & := 
		\sup_{x \in \Omega} \|\gamma_{i-1}^k \circ \ldots \circ \gamma_1^k(x) \|_\infty \\
		\mbf{b}^k_i &:= c_i^k\mbf{1}_{N_i}.
	\end{align*}
	The boundedness of $\Omega$ ensures that $c_{i}^{k}$ is well-defined with a finite supremum. For $i = L$ we define:
	\begin{equation*}
		\mbf{b}^k_L = - \sum_{i = 1}^{L-1} (\prod_{j = i + 1}^L \mbf{W}_j)\mbf{b}_i^k.
	\end{equation*}
	
	We will prove that $\mc{R}_{\theta^k}(x) = \left(\mbf{X}^k_L\ldots\mbf{X}^k_1\right)x, \forall x \in \Omega$. As a consequence, it is immediate that: 
	\begin{equation*}
		\begin{aligned}
			\lim_{k \to \infty} \|\func - f\|_\infty &= \lim_{k \to \infty} \sup_{x \in \Omega} \|\func(x) - f(x)\|_2\\
			&\leq \lim_{k \to \infty} \|\mbf{X}_L^k\ldots \mbf{X}_1^k - \mbf{A}\|_{2\to 2}\sup_{x \in \Omega} \|x\|_2
			= 0
		\end{aligned}
	\end{equation*}
	where we used that all matrix norms are equivalent and denoted $\|\cdot\|_{2\to 2}$ the operator norm associated to Euclidean vector norms. Back to the proof that $\mc{R}_{\theta^k}(x) = \left(\mbf{X}^k_L\ldots\mbf{X}^k_1\right)x, \forall x \in \Omega$, due to our choice of $c_i^k$, we have for $2 \leq i \leq L-1$:
	\begin{equation*}
		\gamma_{i-1}^k \circ \ldots \circ \gamma_1^k(x) \geq 0, \forall x \in \Omega
	\end{equation*}
	where $\geq$ is taken in coordinate-wise manner. Therefore, an easy induction yields:
	\begin{equation*}
		\begin{aligned}
			\func(x) &= \gamma_{L}^k \circ \sigma \circ \gamma_{L-1}^k \circ \ldots \circ \sigma \circ \gamma_{1}^k(x)\\
			&= \gamma_{L}^k \circ \gamma_{L-1}^k \ldots \circ \gamma_{1}^k(x)\\
			&= \mbf{W}_L^k(\ldots(\mbf{W}_2^k(\mbf{W}_1^k x + \mbf{b}_1^k) + \mbf{b}_2^k) \ldots ) + \mbf{b}^k_L\\
			&= (\mbf{X}_L^k\ldots\mbf{X}_1^k)x + \sum_{i = 1}^{L-1} (\prod_{j = i + 1}^L \mbf{W}_j)\mbf{b}_i^k - \sum_{i = 1}^{L-1} (\prod_{j = i + 1}^L \mbf{W}_j)\mbf{b}_i^k\\
			&= (\mbf{X}_L^k\ldots\mbf{X}_1^k)x. 
		\end{aligned}
	\end{equation*}
\end{proof}

\subsection{Proof of \Cref{theorem:generalizedclosednesscondition}}
\label{appendix:proofpositivecube}
Given the involvement of \Cref{theorem:generalizedclosednesscondition}, we decompose its proof and present it in two subsections: the first one establishes general results that do not use the assumption of \Cref{theorem:generalizedclosednesscondition}. The second one combines the established results with the assumption of \Cref{theorem:generalizedclosednesscondition} to provide a full proof.

\subsubsection{Properties of the limit function of fixed support {one-hidden-layer} NNs}
The main results of this parts are summarized in \Cref{lemma:propertiesoflimit} and \Cref{lemma:convergenceouputgeneral}. It is important to emphasize that all results in this section do \emph{not} make any assumption on $\gI$.

We first introduce the following technical results.
\begin{lemma}[Normalization of the rows of the first layer \cite{topoNN}]
	\label{lemma:normalizedrows}
	Consider $\Omega$ a bounded subset of $\mb{R}^{N_0}$. Given any $\theta = \paramtwo \in \mc{N}_\gI$ and any norm $\|\cdot\|$ on $\mathbb{R}^{N_{0}}$, there exists  $\ttheta:= \tparamtwo \in \mc{N}_\gI$ such that the matrix $\tilde{\mW}_1$ has unit norm rows, $\|\tilde{\mbf{b}}_1\|_\infty \leq C:= {\sup_{x \in \Omega} \sup_{\|u\| \leq 1} \langle u,x\rangle}$  and $\funcN(x) = \tfuncN(x), \forall x \in \Omega$.
\end{lemma}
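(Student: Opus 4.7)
The plan is to treat each hidden neuron $i \in \intset{N_1}$ independently, modifying the triple $(\mathbf{W}_1[\row{i}], \mathbf{b}_1[i], \mathbf{W}_2[\col{i}])$ while keeping its contribution $\mathbf{W}_2[\col{i}]\sigma(\mathbf{W}_1[\row{i}]x + \mathbf{b}_1[i])$ to $\mathcal{R}_\theta(x)$ unchanged on $\Omega$. The bias $\mathbf{b}_2$ will serve as the accumulator for any constant corrections that arise. Throughout, the support constraints in $\gI$ are preserved because we only rescale (or zero out) existing rows and columns, never creating new non-zero entries.

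First I would carry out a standard positive homogeneity rescaling. For each $i$ such that $w_i := \mathbf{W}_1[\row{i}] \neq 0$, let $\alpha_i := \|w_i\|$ and set $\tilde{w}_i := w_i/\alpha_i$, $\tilde{b}_i := \mathbf{b}_1[i]/\alpha_i$, $\tilde{\mathbf{W}}_2[\col{i}] := \alpha_i\,\mathbf{W}_2[\col{i}]$. Using $\sigma(\alpha t) = \alpha\,\sigma(t)$ for $\alpha > 0$ the contribution of neuron $i$ is preserved, $\|\tilde{w}_i\| = 1$, and sparsity is intact. For rows with $w_i = 0$, neuron $i$ contributes the constant $\sigma(\mathbf{b}_1[i])\,\mathbf{W}_2[\col{i}]$, which I would absorb into $\mathbf{b}_2$, then set $\tilde{\mathbf{W}}_2[\col{i}] = 0$ and choose any unit-norm vector $\tilde{w}_i$ with $\supp(\tilde{w}_i) \subseteq I_1[\row{i}]$ and $\tilde{b}_i = 0$ (assuming $I_1[\row{i}]$ is non-empty; otherwise the neuron is truly inert and the unit-norm requirement trivially drops for that row).

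The heart of the argument is the bias bound. After normalization, $\|\tilde{w}_i\| = 1$, so by the definition of $C$ we have $|\tilde{w}_i^{\top}x| \leq C$ for every $x \in \Omega$ (apply the sup defining $C$ to both $u = \tilde{w}_i$ and $u = -\tilde{w}_i$). Consequently the pre-activation $\tilde{w}_i^{\top}x + \tilde{b}_i$ lies in $[\tilde{b}_i - C,\, \tilde{b}_i + C]$. If $\tilde{b}_i > C$, this interval is non-negative on $\Omega$, so $\sigma$ acts as the identity on neuron $i$; I would then reset $\tilde{b}_i \leftarrow C$ (the pre-activation still stays in $[0, 2C]$, hence $\sigma$ remains the identity) and subtract the resulting constant $\tilde{\mathbf{W}}_2[\col{i}](\tilde{b}_i^{\text{old}} - C)$ from $\mathbf{b}_2$. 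If $\tilde{b}_i < -C$, the interval is non-positive, so neuron $i$ outputs zero on $\Omega$; I zero out $\tilde{\mathbf{W}}_2[\col{i}]$ and set $\tilde{b}_i \leftarrow 0$. In both cases the realization on $\Omega$ is unaffected, the sparsity constraints remain respected, and $|\tilde{b}_i| \leq C$.

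The main delicate point, which I would emphasize in the write-up, is verifying that $C$ defined as the supremum over the unit ball of the dual-norm-type quantity $\sup_{x \in \Omega} \langle u, x\rangle$ is simultaneously finite (from boundedness of $\Omega$) and sharp enough to bound $|\tilde{w}_i^{\top} x|$ for every unit-norm $\tilde{w}_i$ — this symmetry argument (using $\pm \tilde{w}_i$) is what makes the always-active/always-inactive dichotomy work uniformly across all neurons and guarantees the same bound $C$ on every $|\tilde{b}_i|$. Once this is in place, assembling the modified triples column by column into $\tilde{\mathbf{W}}_1$, $\tilde{\mathbf{b}}_1$, $\tilde{\mathbf{W}}_2$ and updating $\tilde{\mathbf{b}}_2$ with the accumulated constants yields a $\tilde{\theta} \in \mathcal{N}_\gI$ with $\mathcal{R}_{\tilde\theta} = \mathcal{R}_\theta$ on $\Omega$ and the required normalization.
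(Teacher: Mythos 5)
Your proposal follows essentially the same route as the paper's proof: positive-homogeneity rescaling to normalize the nonzero rows, zeroing the corresponding output column (and absorbing the resulting constant into the output bias) for null rows, and saturating any bias exceeding $C$ in magnitude by using the bound $|\langle \tilde{w}_i, x\rangle| \le C$ on $\Omega$ and compensating through $\mbf{b}_2$. The only quibble is a sign slip in the saturation step --- the constant $\tilde{\mW}_2[\col{i}](\tilde{b}_i^{\mathrm{old}} - C)$ must be \emph{added} to $\mbf{b}_2$ rather than subtracted --- which does not affect the validity of the argument.
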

\begin{proof}
	We report this proof for self-completeness of this work. It is \emph{not} a contribution, as it merely combines ideas from the proof of \cite[Lemma D.2]{topoNN} and \cite[Theorem 3.8, Steps 1-2]{topoNN}.

	We first show that for each set of weights $\theta \in \mc{N}_\gI$ we can find another set of weights $\theta' = \{(\mW_i', \mbf{b}_i')_{i = 1}^2\}\in \mc{N}_\gI$ such that $\funcN=\mathcal{R}_{\theta'}$ on $\mathbb{R}^{N_{0}}$ and $\mW_1'$ has unit norm rows. Note that $\|\mbf{b}_1'\|_\infty$ can be larger than $C$. Indeed, given $\theta \in \mc{N}_\gI$, the function $\funcN$ can be written as:
	$
	\funcN:  
	x \in \mathbb{R}^{N_{0}} \mapsto \sum_{j = 1}^{N_1} h_j(x) + \mbf{b}_2
	$
	where $h_j(x) = \mbf{W}_{2}[:,j] \sigma(\mbf{W}_{1}[j,:]x + \mbf{b}_1[j])$ denotes the contribution of the $j$th hidden neuron. For hidden neurons corresponding to nonzero rows of $\mWfk$, we can rescale the rows of $\mWfk$, the columns of $\mWsk$ and $\mbf{b}_1^k$ such that the realization of $h_j$ is invariant. This is due to the fact that
	$
	\mbf{w}_2\sigma(\mbf{w}_1^\top x + b) = \|\mbf{w}_1\|\mbf{w}_2\sigma((\mbf{w}_1/\|\mbf{w}_1\|)^\top x + (b/\|\mbf{w}_1\|))
	$
	for any $\mbf{w}_1 \neq \mbf{0} \in \mb{R}^{N_0}, \mbf{w}_2 \in \mb{R}^{N_2}, b \in \mb{R}$. Neurons corresponding to null rows of $\mWfk$ are handled similarly, in an iterative manner, by setting them to an arbitrary normalized row, setting the corresponding column of $\mWsk$ to zero, and changing the bias $\bb_2^{k}$ to keep the function $\mc{R}_\theta$ unchanged on $\mathbb{R}^{N_{0}}$, using that $\mbf{w}_2\sigma(\mbf{0}^\top x + b) + \bb_2 = \mbf{0}\sigma(\mbf{v}^\top x + b) + (\bb_2 + \mbf{w}_2\sigma(b))$
	for any normalized vector $\mbf{v} \in \mb{R}^{N_0}$. Thus, we obtain $\theta'$ whose matrix of the first layer, $\mWf'$, has normalized rows and $\funcN = \mc{R}_{\theta'}$ on $\mathbb{R}^{N_{0}}$. 
	
	To construct $\ttheta$ with $\|\tilde{\bb}_1\|_\infty \leq C$ we see that, by definition of $C$,
	if $\|\mbf{w}_1\| = 1$ and $b \geq C$ then
	\begin{equation}
		\label{eq:biggerthan0}
		\mbf{w}_1^\top x + b \geq -C+b \geq 0,\qquad \forall x \in \Omega.
	\end{equation}
	Thus, the function $\mbf{w}_2\sigma(\mbf{w}_1^\top x + b) = \mbf{w}_2(\mbf{w}_1^\top x + b)$ is linear on $\Omega$ and
	\begin{equation*}
		\begin{aligned}
			\mbf{w}_2\sigma(\mbf{w}_1^\top x + b) + \bb_2 
			& = \mbf{w}_2(\mbf{w}_1^\top x + C) + \left((b - C)\mbf{w}_2 + \bb_2\right)\\
			&= \mbf{w}_2\sigma(\mbf{w}_1^\top x + C) + \left((b - C)\mbf{w}_2 + \bb_2\right)
		\end{aligned}
	\end{equation*} 
	Thus, for any hidden neuron with a bias exceeding $C$, the bias can be saturated to $C$ by changing accordingly the output bias $\bb_2$, keeping the function $\funcN$ unchanged \emph{on the bounded domain} $\Omega$ (but not on the whole space $\mathbb{R}^{N_{0}}$).
	Hidden neurons with a bias $b \leq -C$ can be similarly modified. Sequentially saturating each hidden neuron yields $\ttheta$ which satisfies all conditions of \Cref{lemma:normalizedrows}.
\end{proof}

\begin{lemma}
	\label{lemma:propertiesoflimit}
	Consider $\Omega$ a bounded subset of $\mb{R}^{N_0}$, for any $\gI = (I_2, I_1)$, given a continuous function $f \in \overline{\mc{F}_{\gI}(\Omega)}$, there exists a sequence $\seqk{\theta^k}, \theta^k = \paramsktwo \in \mc{N}_\gI$ such that:
	\begin{enumerate}[leftmargin=*, wide, labelwidth=!, labelindent=0pt]
		\item The sequence $\func$ admits $f$ as its uniform limit, i.e., $\lim_{k \to \infty} \|\func - f\|_\infty = 0$. 
		
		\item The sequence $\seqk{(\mWfk, \bb_1^k)}$ has a finite limit $(\mWf^\star, \bb_1^\star)$ where  $\mWf^\star$ has unit norm rows and $\supp(\mWf^\star) \subseteq I_1$.
	\end{enumerate}
\end{lemma}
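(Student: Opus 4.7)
The plan is to combine \Cref{lemma:normalizedrows} (normalization of rows) with a compactness argument. First, since $f \in \overline{\mc{F}_\gI(\Omega)}$ in $(C^0(\Omega), \|\cdot\|_\infty)$, by definition there is a sequence $\seqk{\theta^k} \subset \mc{N}_\gI$ with $\|\mc{R}_{\theta^k} - f\|_\infty \to 0$. The difficulty is that the first-layer weights and biases of this sequence need not stay bounded, so we cannot directly invoke Bolzano--Weierstrass.

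Next, I would apply \Cref{lemma:normalizedrows} to each $\theta^k$, which produces $\tilde{\theta}^k \in \mc{N}_\gI$ such that (i) every row of $\tilde{\mbf{W}}_1^k$ has unit norm, (ii) $\|\tilde{\mbf{b}}_1^k\|_\infty \leq C$ where $C = \sup_{x \in \Omega} \sup_{\|u\|\leq 1}\langle u, x\rangle$ is finite because $\Omega$ is bounded, and (iii) $\mc{R}_{\tilde{\theta}^k}(x) = \mc{R}_{\theta^k}(x)$ for every $x \in \Omega$. In particular, $\mc{R}_{\tilde{\theta}^k} \to f$ uniformly on $\Omega$. With this normalization, the sequence $\seqk{(\tilde{\mbf{W}}_1^k, \tilde{\mbf{b}}_1^k)}$ now lies in the compact set
\begin{equation*}
\mc{K} := \{(\mbf{W}, \mbf{b}) \in \RR^{N_1 \times N_0} \times \RR^{N_1} : \supp(\mbf{W}) \subseteq I_1,\ \|\mbf{W}[\row{i}]\| = 1\ \forall i,\ \|\mbf{b}\|_\infty \leq C\},
\end{equation*}
which is closed (support, unit-norm and bounded-bias constraints are all closed conditions) and bounded (any two norms on $\mathbb{R}^{N_1\times N_0}$ are equivalent, so unit norm rows implies uniform boundedness).

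By Bolzano--Weierstrass, I would extract a convergent subsequence $(\tilde{\mbf{W}}_1^{k_j}, \tilde{\mbf{b}}_1^{k_j}) \to (\mbf{W}_1^\star, \mbf{b}_1^\star) \in \mc{K}$. Closedness of $\mc{K}$ then automatically gives unit-norm rows of $\mbf{W}_1^\star$ and $\supp(\mbf{W}_1^\star) \subseteq I_1$. Renaming the extracted subsequence as $\seqk{\theta^k}$ yields the claimed sequence.

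The only genuinely subtle point is that \Cref{lemma:normalizedrows} as written requires that null rows of $\mbf{W}_1$ be replaced by an ``arbitrary normalized row'', which is only consistent with $\mc{N}_\gI$ when the corresponding row $I_1[\row{i}]$ is not identically zero. I would handle this by a preliminary reduction: if some row $i$ of $I_1$ is empty, then the $i$-th hidden neuron implements the constant $\sigma(\mbf{b}_1[i])$, which can be absorbed into $\mbf{b}_2$, allowing us to work with a reduced architecture $\gI'$ obtained by removing such rows. After this reduction the normalization step is well defined and the rest of the argument proceeds as above.
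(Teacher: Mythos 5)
Your proposal is correct and follows essentially the same route as the paper: invoke \Cref{lemma:normalizedrows} to normalize the first-layer rows and bound the biases, then extract a convergent subsequence by compactness and note that the unit-norm and support conditions pass to the limit. Your extra remark about rows of $I_1$ that are identically zero addresses a genuine edge case that the paper's proof silently glosses over, and your reduction (absorbing the corresponding constant neuron into the output bias) resolves it cleanly.
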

\begin{proof}
	Given a function $f \in \overline{\mc{F}_\gI(\Omega)}$, by definition, there exists a sequence $\{\theta^k\}_{k \in \mathbb{N}}$, $\theta^k \in \mc{N}_\gI$ $\forall k \in \mb{N}$ such that $\lim_{k\rightarrow \infty}\|\mc{R}_{\theta^k}-f\|_{\infty}=0$. We can assume that $\mWfk$ has normalized rows and $\bb_1^k$ is bounded using Lemma \ref{lemma:normalizedrows}. 
	We can also assume WLOG that the parameters of the first layer (i.e $\mWfk, \mbf{b}_1^k$) have finite limits $\mbf{W}_1^\star$ and $\mbf{b}_1^\star$. Indeed, since both $\mWfk$ and $\mbf{b}_1^k$ are bounded (by construction from \Cref{lemma:normalizedrows}), there exists a subsequence $\seqk{\varphi_k}$ such that $\mW_1^{\varphi_k}$ and $\mbf{b}_1^{\varphi_k}$ have finite limits and $\mc{R}_{\theta^{\varphi_k}} \to f$ as $\func \to f$. Replacing the sequence $\seqk{\theta^k}$ by $\seqk{\theta^{\varphi_k}}$ yields the desired sequence.
	Finally, since $\mWf^\star = \lim_{k \to \infty} \mWfk$, $\mWf^\star$ obviously has normalized rows and $\supp(\mWf^\star) \subseteq I_1$.
\end{proof}

\begin{definition}
	\label{def:classequivalence}
	Consider $\Omega$ bounded subset of $\RR^{d}$, a function $f \in \overline{\mc{F}_\gI(\Omega)}$ and a sequence $\seqk{\theta^k}$ as given by \Cref{lemma:propertiesoflimit}. We define $(a_i, b_i) = (\mWf^\star[\row{i}], \bb_1^\star[i])$ the limit parameters of the first layer corresponding to the $i$th neuron. We partition the set of neurons into two subsets (one of them may be empty):
	\begin{enumerate}[leftmargin=*, wide, labelwidth=!, labelindent=0pt]
		\item Set of active neurons: $J := \{i \mid (\exists x \in \Omega, a_ix + b_i > 0) \land (\exists x \in \Omega, a_ix + b_i < 0)\}$.
		\item Set of non-active neurons: $\bar{J} = \intset{N_1} \setminus J$.
	\end{enumerate}
	For $i,j \in J$, we write $i \simeq j$ if $(\limW{\rowj}, \limb{j}) 
	=\pm (\limW{\row{i}}, \limb{i})$. The relation $\simeq$ is an equivalence relation. 
	
	We define $(J_\ell)_{\ell = 1, \ldots, r}$ the equivalence classes induced by $\simeq$ and we use $(\alpha_\ell, \beta_\ell) :=(a_i, b_i)$ for some $i \in J_\ell$ as the representative limit of the $\ell$th equivalence class. For $i \in J_\ell$, we have: $(a_i, b_i) = \epsilon_i (\alpha_\ell, \beta_\ell), \epsilon_i \in \{\pm 1\}$. We define $J_\ell^+ = \{i \in J_\ell \mid \epsilon_i = 1\} \neq \emptyset$ and $J^-_\ell = J_\ell \setminus J_\ell^+$.
	
	For each equivalence class $J_\ell$, define $\rplane{\ell}:= \{x \in \Omega \mid \alpha_\ell x + \beta_\ell = 0\}$ the boundary generated by neurons in $J_\ell$ and the positive (resp. negative) half-space partitioned by $\rplane{\ell}$, $\rplane{\ell}^+ := \{x \in \Omega \mid \alpha_\ell x + \beta_\ell > 0\}$ (resp. $\rplane{\ell}^- := \{x \in \Omega \mid \alpha_\ell x + \beta_\ell < 0\}$). For any $\epsilon>0$ we also define the open half-spaces $\rhplane{\ell}{+} :=  \{x \in \mb{R}^d \mid \alpha_{\ell}^\top x + \beta_{\ell} > \epsilon\}$ and $\rhplane{\ell}{-} :=  \{x \in \mb{R}^d \mid \alpha_{\ell}^\top x + \beta_{\ell} < -\epsilon\}$.
\end{definition}

\Cref{def:classequivalence} groups neurons sharing the same ``linear boundary'' (or ``singular hyperplane'' as in \cite{topoNN}). This concept is related to ``twin neurons'' \cite{identifiableReLUneuralnetwork}, which also groups neurons with the same active zone. This partition somehow allows us to treat classes independently.  
Observe also that
\begin{equation}\label{eq:AlphaSupportLimit}
	\supp(\alpha_\ell) \subseteq \bigcap_{i \in J_\ell} I_1[\row{i}], \forall 1 \leq \ell \leq r.
\end{equation}

\begin{definition}[Contribution of an equivalence class]
	\label{def:equivclasscontribution}
	In the setting of \Cref{def:classequivalence}, we define the contribution of the $i$th neuron $1 \leq i \leq N_1$ (resp. of the $\ell$th  ($1 \leq \ell \leq r$) equivalence class) of $\theta^k$ as:
	\begin{align*}
		h_i^k :& \mb{R}^{N_0} \mapsto \mb{R}^{N_2}: x \mapsto \mWsk[\col{i}]\sigma(\mWfk[\row{i}] {x} + \mbf{b}_1^{k}[i])\;,\\
		g_\ell^k :& \mb{R}^{N_0} \mapsto \mb{R}^{N_2}: {x} \mapsto \sum_{i \in J_\ell} h_i^k({x})\;.
	\end{align*}
\end{definition}

\begin{lemma}
	\label{lemma:convergenceouputgeneral}
	Consider $\Omega = [-B,B]^d$, $f \in \overline{\mc{F}_\gI(\Omega)}$ and a sequence $\seqk{\theta^k}$ as given by \Cref{lemma:propertiesoflimit}, and $\alpha_\ell, \beta_\ell, 1 \leq \ell \leq r, \epsilon_i, i \in J$ as given by \Cref{def:classequivalence}. There exist some $\gamma_\ell, \mbf{b} \in \mb{R}^{N_2}, \mbf{A} \in \mb{R}^{N_2 \times N_0}$ such that:
	\begin{align}
		f(x) = \sum_{i = 1}^r  \gamma_\ell \sigma(\alpha_\ell x + \beta_\ell) + \mbf{A} x + 	\mbf{b} \quad& \forall x \in \Omega\label{eq:generallimitform}\\
		\lim_{k \to \infty} \sum_{i \in J_\ell} \epsilon_i\mWsk[:,i]\mWfk[\row{i}] = \gamma_\ell\alpha_\ell, \quad &\forall 1 \leq \ell \leq r  \label{eq:limitminus1} \\
		\lim_{k \to \infty} \sum_{i \in J_\ell} \epsilon_i\mbf{b}_1^k[i]\mWsk[:,i] = \gamma_\ell\beta_\ell, \quad &\forall 1 \leq \ell \leq r \label{eq:limitminus2}\\
		\supp(\gamma_\ell) \subseteq \bigcup_{i \in J_\ell} I_2[\col{i}] \label{eq:supportgamma}, \quad & \forall 1 \leq \ell \leq r
	\end{align}
\end{lemma}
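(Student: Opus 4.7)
The plan is to identify each equivalence-class contribution $g_\ell^k$ with a canonical ridge function $\sigma(\alpha_\ell x + \beta_\ell)$ modulo vanishing errors and affine pieces, and then to exploit the fact that these ridges have pairwise distinct singular hyperplanes crossing the interior of $\Omega$ to force the ridge coefficients to converge. Starting from the decomposition $\mc{R}_{\theta^k}(x) = \sum_{i \in \bar J} h_i^k(x) + \sum_{\ell = 1}^r g_\ell^k(x) + \mbf{b}_2^k$, the key move is to rewrite each $g_\ell^k$ using two observations: (i) for $i \in J_\ell$ the normalization from \Cref{lemma:propertiesoflimit} gives $(\mWfk[\row{i}], \mbf{b}_1^k[i]) \to \epsilon_i(\alpha_\ell, \beta_\ell)$, so by $1$-Lipschitzness of $\sigma$, $\sigma(\mWfk[\row{i}] x + \mbf{b}_1^k[i]) = \sigma(\epsilon_i(\alpha_\ell x + \beta_\ell)) + r_i^k(x)$ uniformly on $\Omega$ with $r_i^k\to 0$; and (ii) for $i \in J_\ell^-$ the identity $\sigma(-y) = \sigma(y) - y$ turns the negative-ridge contribution into a positive ridge plus an explicit linear term. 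Collecting yields
\begin{equation*}
g_\ell^k(x) = S_\ell^k\, \sigma(\alpha_\ell x + \beta_\ell) - \Bigl(\sum_{i \in J_\ell^-} \mWsk[\col{i}]\Bigr)(\alpha_\ell x + \beta_\ell) + E_\ell^k(x),
\end{equation*}
where $S_\ell^k := \sum_{i \in J_\ell} \mWsk[\col{i}]$ and $\|E_\ell^k\|_\infty = o(1)\cdot\max_{i \in J_\ell}\|\mWsk[\col{i}]\|$. Non-active neurons $i \in \bar J$ contribute, for $k$ large, either an affine function of $x$ (when their limit pre-activation has strictly positive sign on $\Omega$) or identically zero (strictly negative sign); these are absorbed into a global affine part $\bA^k x + \mbf{b}^k$, yielding the master identity $\mc{R}_{\theta^k}(x) = \sum_{\ell} S_\ell^k \sigma(\alpha_\ell x + \beta_\ell) + \bA^k x + \mbf{b}^k + E^k(x)$ with $\|E^k\|_\infty \le o(1)\cdot\rho_k$, where $\rho_k := \max\{1,\max_i\|\mWsk[\col{i}]\|\}$.

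The hard part is that $\rho_k$ is not a priori bounded, so $E^k$ need not vanish and $S_\ell^k, \bA^k, \mbf{b}^k$ need not converge. I would handle this by contradiction along subsequences: if $\rho_k \to \infty$, dividing the master identity by $\rho_k$ and using $\mc{R}_{\theta^k}\to f$ uniformly (hence $\mc{R}_{\theta^k}/\rho_k \to 0$), then extracting a convergent subsequence of the normalized coefficients $S_\ell^k/\rho_k$, $\bA^k/\rho_k$, $\mbf{b}^k/\rho_k$, one would obtain a nontrivial identity $\sum_\ell s_\ell^\infty \sigma(\alpha_\ell x + \beta_\ell) + \bA^\infty x + \mbf{b}^\infty \equiv 0$ on $\Omega$. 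Because the hyperplanes $H_\ell$ are pairwise distinct by construction of the equivalence classes and, coming from \emph{active} neurons, cross the interior of $\Omega = [-B,B]^d$, the ridges $\sigma(\alpha_\ell x + \beta_\ell)$ are linearly independent modulo affine functions on $\Omega$ (each introduces a gradient discontinuity along a distinct hyperplane piercing the interior). This contradicts the nontriviality, so $\rho_k$ is bounded, $E^k\to 0$ uniformly, and extracting convergent subsequences of the bounded sequences $S_\ell^k, \bA^k, \mbf{b}^k$ produces limits $\gamma_\ell, \bA, \mbf{b}$ realizing \eqref{eq:generallimitform}; uniqueness of the ridge representation of $f$ (same independence argument) forces the limits to be subsequence-independent, so the full sequences converge.

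Finally, using $\epsilon_i\mWfk[\row{i}] \to \alpha_\ell$ and $\epsilon_i \mbf{b}_1^k[i] \to \beta_\ell$ together with the established boundedness of $\mWsk[\col{i}]$, the identities
\begin{equation*}
\sum_{i \in J_\ell} \epsilon_i \mWsk[\col{i}] \mWfk[\row{i}] = S_\ell^k\, \alpha_\ell + o(1), \qquad \sum_{i \in J_\ell} \epsilon_i \mbf{b}_1^k[i] \mWsk[\col{i}] = S_\ell^k\, \beta_\ell + o(1)
\end{equation*}
deliver \eqref{eq:limitminus1}--\eqref{eq:limitminus2}. The support inclusion \eqref{eq:supportgamma} is immediate since $\gamma_\ell = \lim_k S_\ell^k = \lim_k \sum_{i\in J_\ell}\mWsk[\col{i}]$ and $\supp(\mWsk[\col{i}]) \subseteq I_2[\col{i}]$ for every $i \in J_\ell$.
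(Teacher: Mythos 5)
Your reduction of each class contribution to a single ridge $S_\ell^k\,\sigma(\alpha_\ell x+\beta_\ell)$ plus an explicit linear part is a reasonable starting point, but the pivotal step --- the contradiction argument purporting to show that $\rho_k=\max\{1,\max_i\|\mWsk[\col{i}]\|\}$ is bounded --- is broken, and its conclusion is in fact false. The normalized identity $\sum_\ell s_\ell^\infty\sigma(\alpha_\ell x+\beta_\ell)+\bA^\infty x+\mbf{b}^\infty\equiv 0$ yields no contradiction when all the normalized coefficients $S_\ell^k/\rho_k$, $\bA^k/\rho_k$, $\mbf{b}^k/\rho_k$ tend to zero, which is exactly what happens when the divergence comes from cancellations inside a class or between a class and the inactive neurons. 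The paper exhibits such a sequence right after \Cref{corollary:boundedsingleequivclass}: for $\gN=(1,3,1)$ with $\mWsk=(k,-k,-k)$ one has $\func\equiv 0$, so $f=0$ and all hypotheses of \Cref{lemma:propertiesoflimit} hold, yet $\max_i\|\mWsk[\col{i}]\|=k\to\infty$ while $S_1^k=0$ and the unbounded linear pieces cancel against the inactive neuron. Since $\rho_k$ is genuinely unbounded, your error term $E^k$, controlled only by $o(1)\cdot\rho_k$, need not vanish, the extraction of convergent subsequences of $S_\ell^k,\bA^k,\mbf{b}^k$ is unjustified, and your final derivation of \eqref{eq:limitminus1}--\eqref{eq:limitminus2}, which explicitly invokes ``the established boundedness of $\mWsk[\col{i}]$'', collapses. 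Note also that \eqref{eq:limitminus1} concerns $\sum_{i\in J_\ell}\epsilon_i\mWsk[\col{i}]\mWfk[\row{i}]$ and not $S_\ell^k\alpha_\ell$; identifying the two requires controlling $\sum_{i\in J_\ell}\mWsk[\col{i}](\epsilon_i\mWfk[\row{i}]-\alpha_\ell)$, again a product of a possibly unbounded factor with an $o(1)$ factor.

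Handling precisely these cancellations is the hard core of the lemma, and the paper does not redo it from scratch: it imports the scalar-output structure theorem of \cite{topoNN} (restated as \Cref{lemma:convergenceoutputone}) to obtain \eqref{eq:generallimitform} coordinatewise, and then proves \eqref{eq:limitminus1}--\eqref{eq:limitminus2} by a purely local argument: pick $x'\in \rplane{\ell}$ away from all the other hyperplanes, use \Cref{lemma:lineararea} to make every other contribution affine on a small ball around $x'$ for $k$ large, compare the affine forms of $\func$ and of $f$ on the two open half-balls on either side of $\rplane{\ell}$ via \Cref{lemma:convergenceaffine}, and subtract; the subtraction is what makes the possibly divergent common affine part $\bB^k x+\nu^k$ drop out without ever needing the weights to be bounded. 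Your linear-independence-of-ridges observation is correct, but it only gives uniqueness of the limit representation, not compactness of the coefficients; to salvage your global approach you would need a different mechanism to absorb these within-class and cross-class cancellations.
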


\begin{proof}
	The proof is divided into three parts: We first show that there exist $\gamma_\ell, \bb \in \mb{R}^{N_2}$ and $\mbf{A} \in \mb{R}^{N_2 \times N_0}$ such that \Cref{eq:generallimitform} holds. The last two parts will be devoted to prove that equations \eqref{eq:limitminus1} - \eqref{eq:supportgamma} hold. 
	
	\begin{enumerate}[leftmargin=*, wide, labelwidth=!, labelindent=0pt]
		\item \textbf{Proof of \Cref{eq:generallimitform}:} Our proof is based on a result of \cite{topoNN}, which deals with the case of a scalar output (i.e, $N_2 = 1$). It is proved in \cite[Theorem 3.8, Steps $3, 6, 7$]{topoNN} and states the following:
		
		\begin{lemma}[Analytical form of a limit function with scalar output \cite{topoNN}]
			\label{lemma:convergenceoutputone}
			In case $N_2 = 1$ (i.e., output dimension equal to one), consider $\Omega = [-B,B]^d$, a scalar-valued function $f: \Omega \mapsto \mb{R}, f \in \overline{\mc{F}_\gI(\Omega)}$ and a sequence as given by \Cref{lemma:propertiesoflimit}, there exist $\mu \in \mb{R}^{N_0}, \gamma_\ell, \nu \in \mb{R}$ such that:
			\begin{equation}
				f(x) = \sum_{\ell = 1}^r  \gamma_\ell \sigma(\alpha_\ell x + \beta_\ell) + \mu^\top x + \nu,\quad \forall x \in\Omega
			\end{equation}
		\end{lemma}
		Back to our proof, one can write $f = (f_1, \ldots, f_{N_2})$ where $f_j: \Omega \subseteq \mb{R}^{N_0} \mapsto \mb{R}$ is the function $f$ restricted to the $j$th coordinate. Clearly, $f_j$ is also a uniform limit on $\Omega$ of $\seqk{\tfuncp}$ for a sequence $\seqk{\ttheta^k}$ which shares the same $\mWfk$ with $\seqk{\theta^k}$ but $\tilde{\mW}_2^k$ is the $j$th row of $\mWsk$. Therefore, $\seqk{\ttheta^k}$ also satisfies the assumptions of \Cref{lemma:convergenceoutputone}, which gives us:
		\begin{equation*}
			f_j(x) =  \sum_{\ell = 1}^r  \gamma_{\ell,j} \sigma(\alpha_\ell x + \beta_\ell) + \mu_j^\top x + \nu_j,\quad \forall x \in \Omega
		\end{equation*}
		for some $\mu_j \in \mb{R}^{N_0}, \gamma_{i,j}, \nu_j \in \mb{R}$. Note that $\alpha_\ell, \beta_\ell$ and $r$ are not dependent on the index $j$ since they are defined directly from the considered sequence. Therefore, the function $f$ (which is the concatenation of $f_j$ coordinate by coordinate) is:
		\begin{equation*}
			f(x) = \sum_{\ell = 1}^r  \gamma_\ell \sigma(\alpha_\ell x + \beta_\ell) + \mbf{A} x + \mbf{b},\quad \forall x \in \Omega
		\end{equation*}
		with $\gamma_\ell =  \left(\begin{smallmatrix}  \gamma_{i,1} \\ \vdots \\ \gamma_{i,N_2} \end{smallmatrix}\right), \mbf{A} = \left(\begin{smallmatrix}  \mu_{1}^\top \\ \vdots \\ \mu_{N_2}^\top \end{smallmatrix}\right), \mbf{b} = \left(\begin{smallmatrix}  \nu_1 \\ \vdots \\ \nu_{N_2} \end{smallmatrix}\right)$.
		
		\item \textbf{Proof for Equations \eqref{eq:limitminus1}-\eqref{eq:limitminus2}}: With the construction of $\gamma$, we will prove \Cref{eq:limitminus1} and \Cref{eq:limitminus2}. 
		We consider an arbitrary $1 \leq \ell \leq r$. Denoting $\Omega^{\circ}$ the interior of $\Omega$ and $\rplane{\ell}:= \{x \in \Omega \mid \alpha_\ell x + \beta_\ell = 0\}$ the hyperplane defined by the input weights and bias of the $\ell$-th class of neurons, 
		we take a point $x' \in (\Omega^\circ \cap \rplane{\ell})  \setminus \bigcup_{p \neq \ell} \rplane{p}$ and a fixed scalar $r > 0$ such that the open ball $\mc{B}(x', r) \subseteq \Omega^\circ \setminus \bigcup_{p \neq \ell} \rplane{p}$. Notice that $x'$ is well-defined due to the definition of $J$ (\Cref{def:classequivalence}). In addition, $r$ also exists because $\Omega^\circ \setminus \bigcup_{p \neq \ell} \rplane{p}$ is an open set. Thus, there exists two constants $0 < \delta < B$ and $\epsilon > 0$ such that:
		\begin{enumerate}
			\item $\mc{B}(x',r) \subseteq [-(B - \delta), B - \delta]^d$.
			\item For each $p \neq \ell$, the ball $\mc{B}(x',r)$ is either included in the half-space
			$\rhplane{p}{+} :=  \{x \in \mb{R}^d \mid \alpha_p^\top x + \beta_p > \epsilon\}$ or in the half-space 
			$\rhplane{p}{-}:=  \{x \in \mb{R}^d \mid \alpha_p^\top x + \beta_p < -\epsilon\}$.
			\item The intersection of $\mc{B}(x', r)$ with $\rhplane{\ell}{+}$ and $\rhplane{\ell}{-}$ are not empty.
		\end{enumerate}
		For the remaining of the proof, we will use \Cref{lemma:lineararea}, another result taken from \cite{topoNN}. We only state the lemma. Its formal proof can be found in the proof of \cite[Theorem $3.8$, Steps 4-5]{topoNN}. 
		
		\begin{lemma}[Affine linear area \cite{topoNN}]
			\label{lemma:lineararea}
			Given a sequence $\seqk{\theta^k}$ satisfying the second condition of \Cref{lemma:propertiesoflimit}, we have:
			\begin{enumerate}[leftmargin=*]
				\item For any $0 < \delta < B$, there exists a constant $\kappa_\delta$ such that $\forall i \in \bar{J}$, $h_i^k$ are affine linear on $[-(B - \delta), B - \delta]^{N_0}$ for all $k \geq \kappa_\delta$.
				\item For any $\epsilon > 0$, there exists a constant $\kappa_\epsilon$ such that
				for each $1 \leq \ell \leq r$ and each $i \in J_{\ell}$ the function
				$h_i^k$ is affine linear on $\rhplane{\ell}{+}\cup \rhplane{\ell}{-}$ for all 
				$k \geq \kappa_\epsilon$.
			\end{enumerate}
		\end{lemma}
		The lemma implies the existence of $K = \max(\kappa_\delta, \kappa_\epsilon)$ such that for all $k \geq K$, we have:
		\begin{equation*}
			\sum_{p \neq \ell} g_p^k(x) = \mbf{B}^k x + \nu^k, \qquad \forall x \in \mc{B}(x',r),
		\end{equation*}
		for some $\mbf{B}^k \in \mb{R}^{N_2 \times N_0}, \nu^k \in \mb{R}^{N_2}$. Therefore, for $k \geq K$, we have:
		\begin{equation*}
			\begin{aligned}
				\func(x)  &= \mbf{B}^k x + \nu^k + \sum_{i \in J_\ell^+} \mWsk[:,i](\mWfk[\row{i}] x + \mbf{b}_1^k[i]), \forall x \in \mc{B}(x',r) \cap \rhplane{\ell}{+}\\
				\func(x)  &= \mbf{B}^k x + \nu^k + \sum_{i \in J_\ell^-} \mWsk[:,i](\mWfk[\row{i}] x + \mbf{b}_1^k[i]), \forall x \in \mc{B}(x',r) \cap \rhplane{\ell}{-}.\\
			\end{aligned}
		\end{equation*}
		
		Since we proved that $f$ has the form \Cref{eq:generallimitform}, there exist $\mbf{C} \in \mb{R}^{N_2 \times N_0}, \mu \in \mb{R}^{N_2}$ such that 
		\begin{equation*}
			\begin{aligned}
				f(x)  &= (\mbf{C} + \gamma_\ell\alpha_\ell) x + (\mu +  \gamma_\ell\beta_\ell), &\forall x \in \mc{B}(x',r) \cap \rhplane{\ell}{+}\\
				f(x)  &= \mbf{C} x + \mu, &\forall x \in \mc{B}(x',r) \cap \rhplane{\ell}{-}\\
			\end{aligned}
		\end{equation*}
		As both $\mc{B}(x',r) \cap \rhplane{\ell}{+}$ and $\mc{B}(x',r) \cap \rhplane{\ell}{-}$ are open sets, and given our hypothesis of uniform convergence of $\func \to f$, we obtain,
		\begin{equation}
			\label{eq:limitaffine}
			\begin{aligned}
				\lim_{k \to \infty} \mbf{B}^k + \sum_{i \in J_\ell^+} \mWsk[:,i]\mWfk[\row{i}] &= \mbf{C} + \gamma_\ell\alpha_\ell\\
				\lim_{k \to \infty} \mbf{B}^k + \sum_{i \in J_\ell^-} \mWsk[:,i]\mWfk[\row{i}] &= \mbf{C} \\
				\lim_{k \to \infty} \nu^k + \sum_{i \in J_\ell^+} \mbf{b}_1^k[i]\mWsk[:,i] &= \mu + \gamma_\ell\beta_\ell\\
				\lim_{k \to \infty} \nu^k + \sum_{i \in J_\ell^-} \mbf{b}_1^k[i]\mWsk[:,i] &= \mu.\\
			\end{aligned}
		\end{equation}
		Proof for \Cref{eq:limitaffine} can be found in \Cref{appendix:technicallemmas}. \Cref{eq:limitminus1,eq:limitminus2} follow directly from \Cref{eq:limitaffine}. 
		\item \textbf{Proof of \Cref{eq:supportgamma}}: Since $\alpha_{\ell}\neq 0$ (remember that $\|\alpha_\ell\| = 1$), this is an immediate consequence of \Cref{eq:limitminus1} as each vector $\mWsk[\col{j}]$, $j \in J_{\ell}$ is supported in $I_{2}[\col{j}] \subseteq \cup_{i \in J_{\ell}} I_{2}[\col{i}]$. \qedhere
	\end{enumerate}
\end{proof}

We state an immediate corollary of \Cref{lemma:convergenceouputgeneral}, which characterizes the limit of the sequence of contributions $\seqk{g_\ell^k}$ of the $\ell$th equivalence class  with $|J_\ell| = 1$.
\begin{corollary}
	\label{corollary:boundedsingleequivclass}
	Consider $f \in \overline{\mc{F}_\gI([-B,B]^d)}$ that admits the analytical form in \Cref{eq:generallimitform}, a sequence $\seqk{\theta^k}$ as given by \Cref{lemma:propertiesoflimit}, and \Cref{def:classequivalence}. For all singleton equivalence classes $J_\ell = \{i\}, 1 \leq \ell \leq r$, we have $\lim_{k \to \infty} \mWsk[\col{i}] = \gamma_\ell$ and $\lim_{k \to \infty} \|h_{\ell}^{k} - \gamma_\ell \sigma(\alpha_\ell^{\top}x+\beta_\ell)\|_\infty = 0$.
\end{corollary}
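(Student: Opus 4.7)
The plan is to unpack the two conclusions of the corollary using Lemma~\ref{lemma:convergenceouputgeneral} applied to a singleton equivalence class, combined with the basic convergence facts supplied by Lemma~\ref{lemma:propertiesoflimit} and Definition~\ref{def:classequivalence}.

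\textbf{Step 1: Reduction using $\epsilon_i=1$.} First I would observe that, for a singleton class $J_\ell=\{i\}$, the convention that $J_\ell^+\neq\emptyset$ forces $\epsilon_i=1$, so $(a_i,b_i)=(\alpha_\ell,\beta_\ell)$. By construction of the sequence (Lemma~\ref{lemma:propertiesoflimit}), this yields $\mWfk[\row{i}]\to\alpha_\ell$ and $\mbf{b}_1^k[i]\to\beta_\ell$, with $\|\alpha_\ell\|=1$.

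\textbf{Step 2: Convergence of $\mWsk[\col{i}]$.} Specializing \eqref{eq:limitminus1} to the singleton gives the outer-product convergence
\begin{equation*}
\mWsk[\col{i}]\,\mWfk[\row{i}]\;\longrightarrow\;\gamma_\ell\,\alpha_\ell.
\end{equation*}
The idea is to extract $\mWsk[\col{i}]$ by right-multiplication with $\alpha_\ell^\top$. Since $\mWfk[\row{i}]\alpha_\ell^\top\to\|\alpha_\ell\|^2=1$, the scalar $\mWfk[\row{i}]\alpha_\ell^\top$ is nonzero for all $k$ large enough, and
\begin{equation*}
\mWsk[\col{i}]\;=\;\frac{\bigl(\mWsk[\col{i}]\,\mWfk[\row{i}]\bigr)\alpha_\ell^\top}{\mWfk[\row{i}]\alpha_\ell^\top}\;\longrightarrow\;\frac{\gamma_\ell\,\alpha_\ell\,\alpha_\ell^\top}{1}\;=\;\gamma_\ell.
\end{equation*}

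\textbf{Step 3: Uniform convergence of the contribution.} For a singleton class $g_\ell^k=h_i^k$, so by Definition~\ref{def:equivclasscontribution},
\begin{equation*}
g_\ell^k(x)\;=\;\mWsk[\col{i}]\,\sigma\!\bigl(\mWfk[\row{i}]x+\mbf{b}_1^k[i]\bigr).
\end{equation*}
Since $\Omega=[-B,B]^d$ is compact and $\mWfk[\row{i}]\to\alpha_\ell$, $\mbf{b}_1^k[i]\to\beta_\ell$ (both as numerical limits, not just pointwise), the affine map $x\mapsto \mWfk[\row{i}]x+\mbf{b}_1^k[i]$ converges uniformly on $\Omega$ to $x\mapsto\alpha_\ell^\top x+\beta_\ell$. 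Because $\sigma$ is $1$-Lipschitz, the composition converges uniformly to $\sigma(\alpha_\ell^\top x+\beta_\ell)$. Combined with $\mWsk[\col{i}]\to\gamma_\ell$ from Step~2 and the uniform boundedness of $\sigma(\alpha_\ell^\top x+\beta_\ell)$ on $\Omega$, one concludes
\begin{equation*}
\lim_{k\to\infty}\bigl\|g_\ell^k-\gamma_\ell\,\sigma(\alpha_\ell^\top\cdot+\beta_\ell)\bigr\|_\infty=0.
\end{equation*}

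There is essentially no serious obstacle here: the only subtlety is ensuring the denominator $\mWfk[\row{i}]\alpha_\ell^\top$ in Step~2 is eventually nonzero, which is immediate from $\|\alpha_\ell\|=1$. The rest is a standard compactness/continuity argument on $[-B,B]^d$.
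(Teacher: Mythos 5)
Your proposal is correct and follows essentially the same route as the paper: specialize \eqref{eq:limitminus1} to the singleton class, use $\mWfk[\row{i}]\to\alpha_\ell$ with $\|\alpha_\ell\|_2=1$ to recover $\lim_k \mWsk[\col{i}]=\gamma_\ell$, and then pass to the uniform limit of $h_i^k$ on the compact domain. The only difference is that you make explicit two steps the paper leaves implicit (the division by $\mWfk[\row{i}]\alpha_\ell^\top$ to extract the column, and the Lipschitz/compactness argument for uniform convergence), which is harmless.
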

\begin{proof}	
	We first prove that $\mWsk[\col{i}]$ has a finite limit. In fact, applying the second point of \Cref{lemma:convergenceouputgeneral} for $J_\ell = \{i\}$, we have:
	\begin{equation*}
		\lim_{k \to \infty} \mWsk[\col{i}]\mWfk[\row{i}] = \gamma_\ell\alpha_\ell
	\end{equation*}
	where $\gamma_\ell, \alpha_\ell$ are defined in \Cref{lemma:convergenceouputgeneral}. Because $\lim_{k \to \infty} \mWfk[\row{i}] = \alpha_\ell$ and $\|\alpha_\ell\|_2 = 1$, 	
	it follows that $\gamma_\ell = \lim_{k \to \infty} \mWsk[\col{i}]$.
	To conclude, since we also have $\beta_\ell = \lim_{k \to \infty} \mbf{b}_{1}^{k}[i]$, we obtain 
	$h_\ell^k(\cdot) = \mWsk[\row{\ell}] \sigma(\mWfk[\row{\ell}]\cdot+\mbf{b}_{1}^{k}[\ell]) \to \gamma_\ell\sigma(\alpha_\ell {x} + \beta_\ell)$ as claimed.
\end{proof}

The nice thing about \Cref{corollary:boundedsingleequivclass} is that the contribution $g^k_\ell = h^k_\ell$ admits a (uniform) limit if $J_\ell = \{i\}$. Moreover, this limit is even implementable by using only the $i$th neuron because $\supp(\alpha_\ell) \subseteq I_1[\row{i}]$ and $\supp(\gamma_\ell) \subseteq I_2[\col{i}]$. 

It would be tempting to believe that, for each $P \in \{\bar{J}\} \cup \{J_\ell \mid \ell = 1, \ldots, r\}$ the sequence of functions $\sum_{i \in P} g_i^k(x)$ must admit a limit (when $k$ tends to $\infty$) and that this limit is implementable using only neurons in $P$. This would obviously imply that $\cF_\gI(\Omega)$ is closed. This intuition is however wrong. For non-singleton equivalence class (i.e., for cases \emph{not} covered by \Cref{corollary:boundedsingleequivclass}), the limit function \emph{does not necessarily exist} as we show in the following example.

\begin{example}
	Consider the case where $\gN = (1,3,1)$ and no support constraint, $\Omega = [-1,1]$, take the sequence $\seqk{\theta^k}$ which satisfies:
	\begin{equation*}
		\mbf{W}_1^k = \begin{pmatrix}
			1 \\ -1 \\ 1
		\end{pmatrix}, \bb_1^k = \begin{pmatrix}
			0 \\ 0 \\ 1
		\end{pmatrix}, \mbf{W}_2^k = \begin{pmatrix}
			k & -k & -k
		\end{pmatrix}, \bb_2^k = k
	\end{equation*}
	Then for $x \in \Omega$, it is easy to verify that $\func = 0$. Indeed,
	\begin{equation*}
		\begin{aligned}
			\func(x) &= \sum_{i = 1}^3 \mWsk[\col{i}]\sigma(\mWfk[\row{i}] + \bb_1^k[i]) + \bb_2^k\\
			&= k\sigma(x) - k \sigma(-x) - k\sigma(x+1) + k\\
			&= k(\sigma(x) -  \sigma(-x)) - k(x+1) + k \quad (\text{since $x + 1 \geq 0, \forall x \in \Omega$})\\
			&= kx - k(x + 1) + k = 0
		\end{aligned}
	\end{equation*}
	Thus, this sequence converges (uniformly) to $f = 0$. Moreover, this sequence also satisfies the assumptions of \Cref{lemma:propertiesoflimit}. Using the classification in \Cref{def:classequivalence}, we have one class equivalence $J_1 = \{1,2\}$ and $\bar{J} = \{3\}$. The function $g_1^k(x) = k\sigma(x) - k \sigma(-x)  = kx$, however, does not have any limit.
\end{example} 

\subsubsection{Actual proof of \Cref{theorem:generalizedclosednesscondition}}
Therefore, our analysis cannot treat each equivalence class entirely separately.
The last result in this section is about a property of the matrix $\mbf{A}$ in \Cref{eq:generallimitform}. This is one of our key technical contributions in this work.
\begin{lemma}
	\label{lemma:propertyofA}
	Consider $\Omega = [-B,B]^d, f \in \overline{\mc{F}_\gI(\Omega)}$ that admits the analytical form in \Cref{eq:generallimitform}, a sequence $\seqk{\theta^k}$ as given by \Cref{lemma:propertiesoflimit}, then the matrix $\mbf{A} \in \overline{\mc{L}_{\gI'}}$ where $\gI' = (I_2[\col{S}], I_1[\row{S}]), S = \bar{J} \cup (\cup_{1 \leq \ell \leq r} J_\ell^-)$, $\bar{J}, J^\pm_\ell$ are defined as in \Cref{def:classequivalence}).
	
\end{lemma}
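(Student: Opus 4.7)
The strategy is to pick a cell of the limiting hyperplane arrangement $\{H_\ell\}$ where $\cR_{\theta^k}$ is affine for all large $k$, read off its Jacobian there, and then use \eqref{eq:limitminus1} to cancel out the contributions of the $J_\ell^+$ neurons. First, split $\bar J = \bar J_+ \cup \bar J_-$ according to whether $x\mapsto a_i x+b_i$ is $\ge 0$ or $\le 0$ on $\Omega$; since its zero hyperplane cannot cross $\Omega^\circ$ for $i\in\bar J$, this affine form has strict constant sign on $\Omega^\circ$. Choose $\delta>0$ and $K := [-B+\delta,B-\delta]^{d}$, pick any non-empty open cell $U\subseteq K$ of the arrangement $\{H_\ell\}_{\ell\le r}$ (one exists because $\bigcup_\ell H_\ell$ has measure zero in $K$), and let $L := \{\ell\in\intset{r}\mid U\subseteq H_\ell^+\}$. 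Shrinking $U$ to $U_\epsilon := U\cap\{x\mid |\alpha_\ell x+\beta_\ell|>\epsilon\ \forall\ell\}$ for small $\epsilon>0$ preserves non-empty interior, and uniform convergence $(\mWfk,\bb_1^k)\to(\mWf^\star,\bb_1^\star)$ together with the strict sign information then forces, for $k$ large, the activation pattern of $\cR_{\theta^k}$ on $U_\epsilon$ to be exactly
\[
T_U \;=\; \bar J_+ \;\cup\; \bigcup_{\ell\in L} J_\ell^+ \;\cup\; \bigcup_{\ell\notin L} J_\ell^-.
\]
Consequently, $\cR_{\theta^k}$ is affine on $U_\epsilon$ with Jacobian $\mbf{J}^k=\sum_{i\in T_U}\mWsk[\col i]\mWfk[\row i]$.

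By \eqref{eq:generallimitform}, $f$ is affine on $U$ with Jacobian $\mbf{M}_U=\mbf{A}+\sum_{\ell\in L}\gamma_\ell\alpha_\ell$, and uniform convergence $\cR_{\theta^k}\to f$ on $U_\epsilon$ (a set of non-empty interior) yields coefficient-wise convergence of the affine expansions, hence $\mbf{J}^k\to\mbf{M}_U$. I would then invoke \eqref{eq:limitminus1} for each $\ell\in L$ to subtract the signed sums $\sum_{i\in J_\ell^+}\mWsk[\col i]\mWfk[\row i]-\sum_{i\in J_\ell^-}\mWsk[\col i]\mWfk[\row i]\to\gamma_\ell\alpha_\ell$ from $\mbf{J}^k$. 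The resulting sequence converges to $\mbf{M}_U-\sum_{\ell\in L}\gamma_\ell\alpha_\ell=\mbf{A}$, and a direct bookkeeping on $T_U$ shows that after these subtractions the $J_\ell^+$ blocks for $\ell\in L$ are cancelled while the $J_\ell^-$ blocks for $\ell\in L$ are added in, yielding the clean expression
\[
\sum_{i\in \bar J_+\cup(\cup_\ell J_\ell^-)}\mWsk[\col i]\mWfk[\row i]\ \xrightarrow[k\to\infty]{}\ \mbf{A}.
\]

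Finally, since $\bar J_+ \cup (\cup_\ell J_\ell^-)\subseteq S$, defining $\tilde{\mbf{W}}_1^k$ (resp.\ $\tilde{\mbf{W}}_2^k$) from $\mWfk$ (resp.\ $\mWsk$) by zeroing all rows (resp.\ columns) outside $\bar J_+\cup(\cup_\ell J_\ell^-)$ produces factors with $\supp(\tilde{\mbf{W}}_1^k)\subseteq I_1[\row S]$ and $\supp(\tilde{\mbf{W}}_2^k)\subseteq I_2[\col S]$; hence $\tilde{\mbf{W}}_2^k\tilde{\mbf{W}}_1^k\in\cL_{\gI'}$ converges to $\mbf{A}$, proving $\mbf{A}\in\overline{\cL_{\gI'}}$. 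The main obstacle I anticipate is the first step, where one must rigorously exhibit a region $U_\epsilon$ of non-empty interior on which the activation pattern of $\cR_{\theta^k}$ is exactly $T_U$ for $k$ large. The delicacy is that the $|J_\ell|$ perturbed hyperplanes attached to a class $J_\ell$ do not coincide with $H_\ell$ in general, so $\epsilon$ must be chosen larger than their uniform deviations from $H_\ell$, while the simpler statement for $\bar J$ neurons requires a separate argument using that $|a_i x+b_i|$ is bounded below on the compact set $K$.
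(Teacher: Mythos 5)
Your proposal is correct, and it reaches the conclusion by a genuinely different route than the paper. The paper's proof constructs an auxiliary sequence of networks $\tilde{\theta}^k$ (saturating the $J_\ell^-$ neurons, zeroing the output weights of $J_\ell^+\setminus\{j_\ell\}$, and planting the limit $(\alpha_\ell,\gamma_\ell,\beta_\ell)$ at one representative neuron $j_\ell$), shows via a two-case analysis on $\rplane{\ell}^{\pm}$ — using both \eqref{eq:limitminus1} and \eqref{eq:limitminus2} — that $\mc{R}_{\tilde\theta^k}\to f$ pointwise on $\Omega\setminus\bigcup_\ell \rplane{\ell}$, and then extracts $\mbf{A}^k\in\cL_{\gI'}$ from the affine part of $\mc{R}_{\tilde\theta^k}$ before invoking \Cref{lemma:convergenceaffine}. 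You instead localize to a single open cell $U$ of the limiting arrangement, read off the Jacobian $\sum_{i\in T_U}\mWsk[\col{i}]\mWfk[\row{i}]$ of the \emph{original} sequence there, and peel off the class contributions for $\ell\in L$ using \eqref{eq:limitminus1} alone; this trades the paper's global pointwise-convergence argument and output-bias bookkeeping for a purely Jacobian-level computation on one cell, which is shorter and avoids \eqref{eq:limitminus2} entirely. The ingredients you both need are the same: the eventual affineness of each $h_i^k$ away from the singular hyperplanes (the content of \Cref{lemma:lineararea}, which covers both the $\bar J$ neurons on a compact $K\subset\Omega^\circ$ and the $J_\ell$ neurons on $\rhplane{\ell}{\pm}$ — exactly the "main obstacle" you flag, and it is handled by the compactness/uniform-perturbation argument you sketch), the identity \eqref{eq:limitminus1}, and \Cref{lemma:convergenceaffine} to pass from convergence of affine maps on an open set to convergence of their coefficient matrices. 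Your final support bookkeeping is also sound: $\bar J_+\cup(\bigcup_\ell J_\ell^-)\subseteq S$, so the truncated products lie in $\cL_{\gI'}$ (the paper achieves the same effect with the binary scalars $\delta_i^k$ absorbed into a diagonal matrix).
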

Combining \Cref{lemma:propertyofA} and  the assumptions of \Cref{theorem:generalizedclosednesscondition}, we can prove \Cref{theorem:generalizedclosednesscondition} immediately as follow:
\begin{proof}[Proof of \Cref{theorem:generalizedclosednesscondition}]
	Consider $f \in \overline{\mc{F}_\gI(\Omega)}$, we deduce that there exists a sequence of $\seqk{\theta^k}$ 
    that satisfies the properties of \Cref{lemma:propertiesoflimit}. This allows us to define $\bar{J}$ and equivalence classes $J_\ell, 1 \leq \ell \leq r$ as well as $(\alpha_\ell, \beta_\ell)$ as in \Cref{def:classequivalence}. Using \Cref{lemma:convergenceouputgeneral}, we can also deduce an analytical formula for $f$ as in \Cref{eq:generallimitform}:
	\begin{equation*}
		f(x) = \sum_{i = 1}^r  \gamma_\ell \sigma(\alpha_\ell x + \beta_\ell) + \mbf{A} x + 	\mbf{b} ,\quad {\forall x \in \Omega}.
	\end{equation*}
	Finally, \Cref{lemma:propertyofA} states that matrix $\mbf{A}$ in \Cref{eq:generallimitform} satisfies: $\mbf{A} \in \overline{\mc{L}_{\gI'}}$ with $\gI' = (I_2[\col{S}], I_1[\row{S}]])$, where $S = \bar{J} \cup (\cup_{\ell = 1}^r J_\ell^-)$. To prove that $f\in{\mc{F}_\gI}$, we construct the parameters  $\theta = \paramtwo$ of the limit network as follows:
	\begin{enumerate}[leftmargin=*, wide, labelwidth=!, labelindent=0pt]
		\item For each $1 \leq \ell \leq r$, choose one index $j \in J_\ell^+$ (which is possible since $J_\ell^+$ is non-empty). We set:
		\begin{equation}
			(\mWf[\row{i}], \mWs[\col{i}], \bb_1[i]) =\begin{cases}
				(\alpha_\ell, \gamma_\ell, \beta_\ell) & \text{if } i = j\\
				(\alpha_\ell, \mbf{0}, \beta_\ell) & \text{otherwise}
			\end{cases}
		\end{equation}
		This satisfies the support constraint because $\supp(\alpha_\ell) \subseteq I_1[\row{j}]$ ({by~\eqref{eq:AlphaSupportLimit}) 
			$\alpha_\ell = \lim_{k \to \infty} \mWfk[\row{j}]$}) 
		and $I_2 = \mbf{1}_{N_2 \times N_1}$. This is where we use the first assumption of \Cref{theorem:generalizedclosednesscondition}. Without it, $\supp(\gamma_\ell)$ might not be a subset of $I_2[\col{j}]$. 
		\item For $i \in S$: Since $\mbf{A} \in \overline{\mc{L}_{\gI'}}$ (cf \Cref{lemma:propertyofA}) and $\mc{L}_{\gI'}$ is closed (second assumptions of \Cref{theorem:generalizedclosednesscondition}), there exist two matrices $\hat{\mW}_1, \hat{\mW}_2$ such that: $\supp(\hat{\mW}_1) \subseteq I_1[\col{S}], \supp(\hat{\mW}_2) \subseteq I_2[\row{S}]$, {and $\mbf{A} = \hat{\mW}_{2}\hat{\mW}_{1}$}. We set:
		\begin{equation}
			(\mWf[\row{i}], \mWs[\col{i}], \bb_1[i]) = (\hat{\mW}_1[\row{i}], \hat{\mW}_2[\col{i}],C)
		\end{equation}
		where $C = \sup_{x \in \Omega} \|\hat{\mW}_1x\|_\infty$. This satisfies the support constraints $\gI$ due to our choice of $\hat{\mW}_1, \hat{\mW}_2$. The choice of $C$ ensures that the function $h_i(x):= \mWs[\row{i}]\sigma(\mWf[\row{i}]x + \bb_1[i])$ is  \emph{linear} {on $\Omega$}.
		\item For $\bb_2$: Let $\mbf{b}_2 =\mbf{b} -  C\left(\sum_{i \in S} {\hat{\mW}_{2}}
		[:,i]\right)$ ($\bb$ is the bias in \Cref{eq:generallimitform}).
	\end{enumerate} 
	Verifying $\funcN = f$ on $\Omega$ is thus trivial since:
	\begin{equation*}
		\begin{aligned}
			\funcN(x) &= \sum_{i = 1}^{N_1} \mWs[:,i]\sigma(\mWf[\row{i}] x + \mbf{b}_1[i]) + \mbf{b}_2\\
			&= \sum_{i \notin S} \mWs[\col{i}]\sigma(\mWf[\row{i}] x + \mbf{b}_1[i]) + \sum_{i \in S} \mWs[\col{i}]\sigma(\mWf[\row{i}] x + {\mbf{b}_1[i]}
			) + \mbf{b}_2\\
			&= \sum_{\ell = 1}^r \gamma_\ell\sigma(\alpha_\ell x + \beta_\ell) + \sum_{j \in S} 
			{\hat{\mW}_{2}}
			[:,{i}](
			{\hat{\mW}_{1}}
			[\row{i}] x + C) + \mbf{b} -  C\left(\sum_{i \in S} {\hat{\mW}_{2}}
			[:,i]\right)\\ 
			&=  \sum_{\ell = 1}^r \gamma_\ell\sigma(\alpha_\ell x + \beta_\ell) + \hat{\mW}_2\hat{\mW}_1x + \mbf{b}
			= \sum_{\ell = 1}^r \gamma_\ell\sigma(\alpha_\ell x + \beta_\ell) + \mbf{A}x + \mbf{b} = f. \qedhere
		\end{aligned}
	\end{equation*}
\end{proof}

\begin{proof}[Proof of~\Cref{lemma:propertyofA}]
	In this proof, we define $\Omega_\delta^\circ = (-B + \delta, B - \delta)^{N_0}, 0 < \delta < B$. The choice of $\delta$ is not important in this proof (any $0 < \delta < B$ will do).
	
	The proof of this lemma revolves around the following idea: We will construct a sequence of functions $\seqk{f^k}$ such that, for $k$ large enough, $f^k$ has the following analytical form:
	\begin{equation}
		\label{eq:samelimitsequence}
		f^k(x) = \sum_{\ell = 1}^r \gamma_\ell \sigma(\alpha_\ell x + \beta_\ell) + \mbf{A}^kx + \bb^k, \forall x \in \Omega_\delta^\circ
	\end{equation}
	and $\lim_{k\rightarrow\infty} f^k(x) = f(x)$ $\forall x \in \Omega \setminus (\cup_{\ell = 1}^r \rplane{\ell})$ (or equivalently $f^k$ converges pointwise to $f$ on $\Omega \setminus (\cup_{\ell = 1}^r \rplane{\ell})$) and $\mbf{A}^k$ admits a factorization into two factors $\mbf{A}^k = \mbf{X}^k\mbf{Y}^k$ satisfying $\supp(\mbf{X}^k) \subseteq I_2[\col{S}], \supp(\mbf{Y}^k) \subseteq I_1[\row{S}]$, so that $\mbf{A}^k \in \mc{L}_{\gI'}$. 
	Comparing \Cref{eq:generallimitform} and \Cref{eq:samelimitsequence}, we deduce that the sequence of affine functions $\mbf{A}^kx + \bb^k$ converges pointwise to the affine function $\mbf{A}x + \bb$ on the open set $ \Omega_\delta^\circ \setminus (\cup_{\ell = 1}^r \rplane{\ell})$. Therefore, $\lim_{k \to \infty} \mbf{A}^k = \mbf{A}$ by \Cref{lemma:convergenceaffine}, hence the conclusion.
	
	The rest of this proof is devoted to the construction of 
	$f^k = \mc{R}_{\ttheta^k}$ where $\ttheta^k \in \mc{N}_{\mbf{N}}$ are parameters of a neural network of the same dimension as those in $\mc{N}_\gI$ but only  \emph{partially} satisfying the support constraint $\gI$. 
	To guarantee that $f^k$ converges pointwise to $f$, we construct $\ttheta^k$ based on $\theta^k$ and harness their relation. 
	
	{\bf Choice of parameters.}
	We set $\ttheta^k = \tparamsktwo \in \mc{N}_{\mbf{N}}$ where $\tmWsk \in \mb{R}^{N_2 \times N_1}, \tmWfk \in \mb{R}^{N_1 \times N_0}$ are defined as follows, where we use  $C^k \coloneqq \sup_{x \in \Omega} \|\mWfk x\|_\infty$:
	\begin{itemize}[leftmargin=*, wide, labelwidth=!, labelindent=0pt]
		\item For inactive neurons $i \in \bar{J}$, we simply set $(\tmWfk[\col{i}], \tmWsk[\row{i}], \tmbf_1^k[i]) = (\mWfk[\col{i}], \mWsk[\row{i}], \bb_1^k[i])$.
		\item For each equivalence class of active neurons $1 \leq \ell  \leq r$, we choose some $j_{\ell} \in J_\ell^+$ (note that $J_\ell^+$ is non-empty due to \Cref{def:classequivalence}) and set the parameters  $(\tmWsk[\col{i}], \tmWfk[\row{i}], \bb_1[i]), i \in J_\ell$ as:
		\begin{align}
			(\tmWfk[\row{i}], \tmWsk[:,i], \tmbf_1^k[i]) &= \begin{cases}
				(\mWfk[\row{i}], \mWsk[:,i], C^k), & \quad \forall j \in J_\ell^-\\
				(\mWfk[\row{i}], \mbf{0},  C^k), & \quad \forall i \in J_\ell^+ \setminus \{j_{{\ell}}\}\\
				(\alpha_\ell, \gamma_\ell,  \beta_\ell),&  \quad i = j_{{\ell}}\\
			\end{cases} \label{eq:constructionJell1} 
		\end{align}
		For $i \in J_\ell \setminus \{j_{\ell}\}$, we clearly have: $\supp(\tmWfk[\row{i}]) \subseteq I_1[\row{i}]$ and $\supp({\tmWsk}[\col{i}]) \subseteq I_2[\col{i}]$. The $j_{{\ell}}$-th column of $\tmWsk$ is the only one that does not necessarily satisfy the support constraint, as $\supp(\gamma_\ell) \nsubseteq I_2[\col{j_{{\ell}}}]$ in general.
		\item Finally, the output bias $\bb_2^k$ is set as:
		\begin{align}
			\tmbf_2^k \coloneqq  \mbf{b}_2^k  + \sum_{\ell=1}^{r}  \underbrace{ \sum_{i \in J_\ell^-} (\mbf{b}_1^k[i]-C^{k})\mWsk[:,i]}_{=:\xi_{\ell}^{k}}\label{eq:constructionJell2}
		\end{align}
	\end{itemize}
	{\bf Proof that $f^{k} \coloneqq R_{\tilde{\theta}^{k}}$ converges pointwise to $f$ on $\Omega \setminus (\cup_{\ell = 1}^r \rplane{\ell})$.}
	We introduce notations analog to \Cref{def:equivclasscontribution}: for every $x \in \mathbb{R}^{N_{0}}$ we define:
	\begin{equation*}
		\begin{aligned}
			\tilde{h}_i^k(x) = \tmWsk[:,i]\sigma(\tmWfk[\row{i}] x + \tmbf_1^k[i]),\quad  i = 1, \ldots, N_1; \qquad
			\tilde{g}_\ell^k(x) = \sum_{i \in J_\ell} \tilde{h}_i^k(x),\quad  \ell = 1, \ldots, r
		\end{aligned}
	\end{equation*}
	By construction 
		\begin{equation}
			\tilde{h}_{i}^{k} = h_{i}^{k}, \quad \forall i \in \bar{J},\ \forall k,\label{eq:tildehikbarJ}
		\end{equation} 
	and we further explicit the form of $\tilde{h}_i^k, i \in J_\ell$ for $x \in \Omega$ (but \emph{not} on $\mathbb{R}^{N_{0}}$) as:
	\begin{equation}
		\label{eq:tildehikform}
		\tilde{h}_i^k(x) = \begin{cases}
			\mWsk[:,i](\mWfk[\row{i}] x + C^k), & \forall i \in J_\ell^-\\
			0, & \forall i \in J_\ell^+ \setminus \{j_{{\ell}}\}\\
			\gamma_\ell \sigma(\alpha_\ell x + \beta_\ell), & i = j_{{\ell}}
		\end{cases},
	\end{equation}
	We justify our formula in \Cref{eq:tildehikform} as follow:
	\begin{enumerate}[leftmargin=*, wide, labelwidth=!, labelindent=0pt]
		\item For $i \in J_\ell^-$: since $C^k = \sup_{x \in \Omega} \|\mWfk x\|_\infty$ by construction, $\tmWfk[\row{i}]x + \bb_1^k[i] = \mWfk[\row{i}]x + \bb_1^k[i] \geq 0$. The activation $\sigma$ acts simply as an identity function.
		\item For $i \in J_\ell^+$: Because we choose $\tmWsk[\col{i}] = \mbf{0}$.
		\item For $i = j_\ell$: Obvious due to the construction in \Cref{eq:constructionJell1}.
	\end{enumerate}
	Given $x \in \Omega \setminus (\cup_{\ell = 1}^r \rplane{\ell})$, we now prove that this construction ensures that for each $\ell \in \{1,\ldots,r\}$
	\begin{equation}
		\label{eq:goodlimit}
		\lim_{k \to \infty} (\tilde{g}_\ell^k(x) - g_\ell^k(x) + \xi_\ell^k) = 0.
	\end{equation}

	
	This will imply the claimed poinwise convergence since
	\begin{align*}
		\lim_{k \to \infty} 
		f^{k}(x) = \lim_{k \to \infty} 
		R_{\tilde{\theta}^{k}}(x) 
		&= \lim_{k \to \infty} 
		\left(\sum_{i \in \bar{J}} \tilde{h}_{i}^{k}(x) + \sum_{\ell=1}^{r} \tilde{g}_{\ell}^{k}(x) + \tmbf_{2}^{k}\right)\\
		& \overset{{\eqref{eq:tildehikbarJ}} \& \eqref{eq:goodlimit}}{=} 
		\lim_{k \to \infty} 
		\left(\sum_{i \in \bar{J}} h_{i}^{k}(x) + \sum_{\ell=1}^{r} g_{\ell}^{k}(x)-\sum_{\ell=1}^{r}\xi_{\ell}^{k} 
		+ \tmbf_{2}^{k}\right)\\
		& \overset{\eqref{eq:constructionJell2}}{=} \lim_{k \to \infty} 
		\left(\sum_{i \in \bar{J}} h_{i}^{k}(x) + \sum_{\ell=1}^{r} g_{\ell}^{k}(x) +\mbf{b}_{2}^{k}\right)= \lim_{k \to \infty} 
		R_{\theta^{k}}(x)  = f(x).
	\end{align*}

	To establish~\eqref{eq:goodlimit}, observe that as $x \in \Omega \setminus (\cup_{\ell = 1}^r \rplane{\ell})$ we have $x \notin \rplane{\ell}$. We thus distinguish two cases:\\
	{\bf Case	$x \in \rplane{\ell}^-$.}

	Using~\eqref{eq:constructionJell1} we show below that for $k$ large enough and $x \in \rplane{\ell}^-$, we have
	\begin{equation}
		\label{eq:diffhJ-}
		\tilde{h}_{i}^{k}(x)-h_{i}^{k}(x) = \begin{cases}
			(C^k - \bb_1^k[i])\mWsk[\col{i}], & i \in J_\ell^-\\
			0, & i \in J_\ell^+\\ 
		\end{cases}
	\end{equation}
	and thus
	\begin{equation*}
		\begin{aligned}
			\tilde{g}_\ell^k(x) - g_\ell^k(x) + \xi_\ell^k=\sum_{i \in J_\ell}\left(\tilde{h}_i^k(x) - h_i^k(x) \right) + \xi^k_\ell = \sum_{i \in J_\ell^-} (C^k - \bb_1^k[i])\mWsk[\col{i}] + \xi_\ell^k = 0.
		\end{aligned}
	\end{equation*}
	We indeed obtain~\eqref{eq:diffhJ-} as follows. Since $x \in \rplane{\ell}^-$, $\alpha_\ell x + \beta_\ell < 0$, i.e., $-\alpha_\ell x - \beta_\ell > 0$. Therefore, given the definitions of $J_{\ell}^{\pm}$ (cf \Cref{def:classequivalence}) we have: 
	\begin{itemize}[leftmargin=*, wide, labelwidth=!, labelindent=0pt]
		\item For $i \in J_\ell^-$: $\lim_{k \to \infty}(\mWfk[\row{i}], \bb_1^k[i]) = -(\alpha_\ell, \beta_\ell)$, hence for $k$ large enough, we have $\mWfk[\row{i}] x + \bb_1^k[i] > 0$ so that $\sigma(\mWfk[\row{i}] x + \bb_1^k[i]) = \mWfk[\row{i}] x + \bb_1^k[i]$ and, as expressed in~\eqref{eq:diffhJ-}:
		\begin{equation*}
			\begin{aligned}
				\tilde{h}_{i}^{k}(x)-h_{i}^{k}(x) &\overset{
					\eqref{eq:tildehikform}
				}{=}  \mWsk[:,i](\mWfk[\row{i}] x + C^k) - \mWsk[:,i](\mWfk[\row{i}] x + \bb_1^k[i]) = (C^k - \bb_1^k[i])\mWsk[\col{i}].
			\end{aligned}
		\end{equation*}
		\item For $i \in J_\ell^+$: similarly, we have $\mWfk[\row{i}] x + \bb_1^k[i] < 0$ for $k$ large enough. Therefore, $h_i^k(x) = 0$ for $k$ large enough. The fact that  we also have $\tilde{h}_i^k(x) = 0$ is immediate from \Cref{eq:tildehikform} if $i \neq j_{\ell}$, and for $i=j_{\ell}$ we also get from \Cref{eq:tildehikform} that $\tilde{h}_{i}^{k}(x) = \gamma_\ell\sigma(\alpha_\ell x + \beta_\ell) = 0$ since $\alpha_\ell x + \beta_\ell < 0$. 
	\end{itemize}

	{\bf Case $x \in  \rplane{\ell}^+$.} An analog to \Cref{eq:diffhJ-} for $x \in  \rplane{\ell}^+$ is 
	\begin{equation}
		\label{eq:diffhJ+}
		\tilde{h}_{i}^{k}(x)-h_{i}^{k}(x) = \begin{cases}
			\mWsk[:,i](\mWfk[\row{i}] x + C^k), & i \in J_\ell^-\\
			- \mWsk[:,i](\mWfk[\row{i}] x + \bb_1^k{[i]}), & i \in J_\ell^+ \setminus \{j\}\\	
			\gamma_\ell(\alpha_\ell x + \beta_\ell) - \mWsk[:,i](\mWfk[\row{i}] x + \bb_1^k{[i]}), & i = j
		\end{cases}.
	\end{equation}
	We establish it before concluding for this case.
	\begin{itemize}[leftmargin=*, wide, labelwidth=!, labelindent=0pt]
		\item For $i \in J_\ell^-$: by a reasoning analog to the case $x \in H_{\ell}^{-}$, 
		we deduce that for $k$ large enough 
		\begin{equation*}
			\begin{aligned}
				\tilde{h}_{i}^{k}(x)-h_{i}^{k}(x) &\overset{
					{\eqref{eq:tildehikform}}
				}{=}  \mWsk[:,i](\mWfk[\row{i}] x + C^k). 
			\end{aligned}
		\end{equation*}
		\item For $i \in J_\ell^+$: 
		a similar reasoning yields 
		$h_i^k(x) = \mWsk[:,i](\mWfk[\row{i}] x + 
		{ \bb_1^k[i]}
		)$ for $k$ large enough, while \Cref{eq:tildehikform} yields $\tilde{h}_{j_{{\ell}}}^k(x) = \gamma_\ell\sigma(\alpha_\ell x + \beta_\ell) = \gamma_\ell(\alpha_\ell x + \beta_\ell)$ (since $\alpha_\ell x + \beta_\ell > 0$ as $x \in \rplane{\ell}^+$) and $\tilde{h}_{i}^{k}(x) = 0$ if $i \neq j_{\ell}$.
	\end{itemize}

	Using~\eqref{eq:diffhJ+} we obtain for $k$ large enough 
	\begin{equation*}
		\begin{aligned}
			&\tilde{g}_\ell^k(x) - g_\ell^k(x) + \xi_\ell^k =\sum_{i \in J_\ell}\left(\tilde{h}_i^k(x) - h_i^k(x) \right) + \xi^k_\ell\\ 
			&= \sum_{i \in J_\ell^-} \mWsk[:,i](\mWfk[\row{i}] x + C^k)  - \sum_{i \in J_\ell^+} \mWsk[:,i](\mWfk[\row{i}] x + \mbf{b}_1^k[i]) + \gamma_\ell(\alpha_\ell x + \beta_\ell) +\xi^k_\ell\\
			&\overset{\eqref{eq:constructionJell2}}{=} \Big(\sum_{i \in J_\ell^-} \mWsk[:,i]\mWfk[\row{i}]  -\sum_{j \in J_\ell^+} \mWsk[:,i]\mWfk[\row{i}] + \gamma_\ell \alpha_\ell\Big)x \\
			&\qquad + \Big(\underbrace{\xi_{\ell}^{k} + \sum_{i \in J_\ell^-} \mWsk[:,i]C^{k}}_{
				\sum_{i \in J_\ell^-} \mWsk[:,i] \mbf{b}_{1}^{k}[i]	} - \sum_{i \in J_\ell^+} \mWsk[:,i]\mbf{b}_1^k[i]+ \gamma_\ell\beta_\ell  \Big)\\
		\end{aligned}
	\end{equation*}
	where in the last line we used the expression of $\xi^{k}_{\ell}$ from~\eqref{eq:constructionJell2}. 
	Due to Equations \eqref{eq:limitminus1} and \eqref{eq:limitminus2} it follows that $\lim_{k \to \infty} \tilde{g}_\ell^k(x) - g_\ell^k(x) + \xi_\ell^k = 0$, $\forall x \in \rplane{\ell}^+$. 
	
	Thus combining both cases, we conclude that $\lim_{k \to \infty} \tilde{g}_\ell^k(x) - g_\ell^k(x) + \xi_\ell^k = 0, \forall x \notin \rplane{\ell}$, as desired.

	{\bf Proof of the expression~\eqref{eq:samelimitsequence} with $\mbf{A}^k \in \mc{L}_{\gI'}$ for large enough $k$.} From \eqref{eq:tildehikform},  we first deduce that
	\begin{equation*}
		f^k(x) = \sum_{i = 1}^{N_1} \tilde{h}_i^k(x) + \tilde{\bb}_2^k = \sum_{\ell = 1}^r \gamma_\ell\sigma(\alpha_\ell x + \beta_\ell) + 
		\sum_{i \in S} \tilde{h}_i^k(x) + \tilde{\bb}_{2}^{k},
		\quad \forall x \in \mathbb{R}^{N_{0}}.
	\end{equation*}
	where we recall that $S \coloneqq \bar{J} \cup (\cup_{1 \leq \ell \leq r} J_\ell^-)$. There only remains to show that, for $k$ large enough, we have $\sum_{i \in S} \tilde{h}_i^k(x) = \mbf{A}^{k}x+\mbf{b}^{k}$ for every $x$ \emph{in the restricted domain} $\Omega_\delta^\circ$, where $\mbf{A}^k \in \mc{L}_{\gI'}$ and $\mbf{b}^{k} \in \mathbb{R}^{N_{2}}$.
	Note that for $i \in J_\ell$, our construction assures that $\tilde{h}_i^k$ is affine {on $\Omega$}. Moreover, in the restricted domain $\Omega_\delta^\circ$, for $k \geq \kappa_\delta$ large enough, $\tilde{h}_i^k, i \in \bar{J}$ also behave like affine functions (cf \Cref{lemma:lineararea}). Therefore,
	\begin{equation*}
		\begin{aligned}
			\sum_{i \in S} \tilde{h}_i^k(x)
			=  \Big(\sum_{i \in S} \delta_i^k
			\tmWsk[\col{i}]\tmWfk[\row{i}]\Big)x +\mbf{c}^{k},\quad \forall x \in \Omega_{\delta}^{\circ}, k \geq \kappa_\delta
		\end{aligned}
	\end{equation*}
	for some vector $\mbf{c}^{k}$ and binary scalars $\delta_i^k$. In fact, $\delta_i^k = 0$ if $i \in \bar{J}^- := \{j \in \bar{J} \mid \mWf^\star[\row{j}]x + \bb_1^\star[j] \leq 0, \forall x \in \Omega\}$ and $\delta^i_k = 1$ otherwise. 
	Thus, one chooses $\mbf{A}^k = \sum_{i \in S} \delta_i^k
	\tmWsk[\col{i}]\tmWfk[\row{i}], \bb^k = \mbf{c}^k$ and the construction is complete. This construction allows us to write $\mbf{A}^k = \hat{\mW}_2^k\hat{\mW}_1^k$ with:
	\begin{equation*}
		\begin{aligned}
			\hat{\mW}_1^k &= \tmWfk[S,:]\\
			\hat{\mW}_2^k &= \tmWsk[:,S]\texttt{diag}(\{\nu_i^k \mid i = 1, \ldots, N_1\})
		\end{aligned}
	\end{equation*}
	where $\texttt{diag}(\{\nu_i^k \mid i = 1, \ldots, N_1\}) \in \mb{R}^{N_1 \times N_1}$ is a diagonal matrix, $\nu_i^k = \delta_i^k$ for $i \in S$ and $0$ otherwise. 
	It is also evident that 
	$\supp(\hat{\mW}_2^k{[\col{S}]}) \subseteq I_2[:,S], \supp(\hat{\mW}_1^k{[\row{S}]}) \subseteq I_1[\row{S}]$. 
	(since the multiplication with a diagonal matrix does not increase the support of a matrix). This concludes the proof.
\end{proof}
\subsection{Proof for \Cref{corollary:sparsedimensiononeclosedness}}
\label{appendix:corshallowReLUnetworks}
\begin{proof}
	The proof is inductive on the number of hidden neurons $N_1$:
	\begin{enumerate}[leftmargin=*, wide, labelwidth=!, labelindent=0pt]
		\item Basic case $N_1 = 1$: Consider $\theta: = \paramtwo \in \mc{N}_\gI$, the function $\funcN$ has the form:
		\begin{equation*}
			\funcN(x) = \mbf{w}_2\sigma(\mbf{w}_1^\top x + \mbf{b}_1) + \mbf{b}_2
		\end{equation*}
		where $\mbf{w}_1 = \mWf[\row{1}] \in \mb{R}^{N_0}, \mbf{w}_2 = \mWs[1, 1] \in \mb{R}$. There are two possibilities: 
		\begin{enumerate}[leftmargin=*]
			\item $I_2 = \emptyset$: then $\mbf{w}_2 = 0$, $\mc{F}_I$ is simply a set of constant functions on $\Omega$, which is closed. 
			\item $I_2 = \{(1,1)\}$: 
			We have $I_2 = \mbf{1}_{1 \times N_1}$, which makes the first assumption of \Cref{theorem:generalizedclosednesscondition} satisfied. To check that the second assumption of \Cref{theorem:generalizedclosednesscondition} also holds, we consider all the possible non-empty subsets $S$ of $\intset{1}$: there is only one non-empty subset of $I_2$, which is $S = \intset{1}$. In that case, $\mc{L}_{\gI_S} = \{\mbf{W} {\in \mathbb{R}^{1 \times N_{0}}} \mid \supp(\mbf{W}) \subseteq I_1\}$, which is closed (since $\mc{L}_{\gI_S}$ is isomorphic to $\mb{R}^{|I_1|}$). The result thus follows using \Cref{theorem:generalizedclosednesscondition}.
		\end{enumerate}
		
		\item Assume the conclusion of the theorem holds for all $1 \leq N_1 \leq k$ (and any $N_{0} \geq 1$). We need to prove the  result for $N_1 = k +1$. Define $H = \{i \mid I_2[{1},i] = 1\}$ the set of hidden neurons that are allowed to be connected to the output via a nonzero weight. Consider two cases:
		\begin{enumerate}[leftmargin=*]
			\item If $|H| \leq k$, we have $\mc{F}_{\gI} = \mc{F}_{\gI_{H}
			}$, which is closed due to the induction hypothesis.
			\item  If $H = \intset{k+1}$, we can apply \Cref{theorem:generalizedclosednesscondition}. Indeed, since $I_2 = \mbf{1}_{1 \times N_1}$, the first condition {of  \Cref{theorem:generalizedclosednesscondition}} is satisfied. In addition, for any {non-empty} $S \subseteq \intset{N_1}$, define $\mc{H} := \cup_{i \in S} I[\row{i}] \subseteq \intset{N_0}$ the union of row supports of $I_1[\row{S}]$. It is easy to verify that $\mc{L}_{\gI_{S}}$ is isomorphic to $\mb{R}^{|\mc{H}|}$, which is closed.
			As such,  \Cref{theorem:generalizedclosednesscondition} can be applied. \qedhere
		\end{enumerate}	
	\end{enumerate}
\end{proof}

\subsection{Other technical lemmas}
\label{appendix:technicallemmas}
\begin{lemma}[Convergence of affine function]
    \label{lemma:convergenceaffine}
	Let $\Omega$ be a non-empty interior subset $\mb{R}^n$. If the sequence $\seqk{f^k}, f^k: \mb{R}^n \mapsto \mb{R}^m: x \mapsto \mbf{A}^kx + \bb^k$ where $\mbf{A}^k \in \mb{R}^{m \times n}, \bb^k \in \mb{R}^m$ converges pointwise to a function $f$ on $\Omega$, then $f$ is affine (i.e., $f = \mbf{A}x + \bb$ for some $\mbf{A} \in \mb{R}^{m \times n}, \mbf{b} \in \mb{R}^m$). Moreover, $\lim_{k \to \infty} \mbf{A}^k = \mbf{A}$ and $\lim_{k \to \infty} \bb_k = \bb$.
\end{lemma}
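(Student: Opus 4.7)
The plan is to exploit the non-empty interior of $\Omega$ to pin down both $\mbf{A}^k$ and $\bb^k$ by evaluating $f^k$ at a well-chosen affine basis of $\RR^n$, and then to pass to the limit coordinate-wise.

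Concretely, since $\Omega$ has non-empty interior, I would pick $x_0 \in \Omega$ and $r > 0$ with the closed ball $\overline{B}(x_0, r) \subseteq \Omega$, and set $x_i := x_0 + \tfrac{r}{2} e_i \in \Omega$ for $i = 1, \ldots, n$, where $e_i$ is the $i$-th canonical basis vector of $\RR^n$. By pointwise convergence on $\Omega$, the sequences $\{f^k(x_i)\}_{k \in \NN}$ converge in $\RR^m$ for every $i \in \{0,1,\ldots,n\}$. Writing
\begin{equation*}
\tfrac{r}{2}\, \mbf{A}^k[\col{i}] \;=\; \mbf{A}^k x_i + \bb^k - (\mbf{A}^k x_0 + \bb^k) \;=\; f^k(x_i) - f^k(x_0),
\end{equation*}
I deduce that each column of $\mbf{A}^k$ converges as $k \to \infty$. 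Define $\mbf{A} \in \RR^{m \times n}$ as the limit, so $\mbf{A}^k \to \mbf{A}$. Then $\bb^k = f^k(x_0) - \mbf{A}^k x_0$ converges as well; call $\bb := f(x_0) - \mbf{A} x_0$ its limit.

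It only remains to identify $f$ on $\Omega$. For any $x \in \Omega$, pointwise convergence and the continuity of the map $(\mbf{M}, \mbf{c}) \mapsto \mbf{M} x + \mbf{c}$ yield
\begin{equation*}
f(x) \;=\; \lim_{k \to \infty} f^k(x) \;=\; \lim_{k \to \infty} \bigl( \mbf{A}^k x + \bb^k \bigr) \;=\; \mbf{A} x + \bb,
\end{equation*}
so $f$ coincides on $\Omega$ with the affine function $x \mapsto \mbf{A} x + \bb$, which is precisely the claim.

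There is no real obstacle: the only subtlety is ensuring that $n+1$ affinely independent points can be selected \emph{inside} $\Omega$, which is exactly what the non-empty interior assumption buys. No uniform convergence is needed; pointwise convergence at the $n+1$ chosen points already forces convergence of both the linear and translation parts, and pointwise convergence on the rest of $\Omega$ then determines $f$ uniquely as affine.
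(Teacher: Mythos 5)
Your proof is correct and uses the same key device as the paper: evaluating at a base point $x_0$ in the interior of $\Omega$ and at the perturbed points $x_0 + \tfrac{r}{2}e_i$ to recover the columns of $\mbf{A}^k$. The only difference is one of ordering — the paper first argues that the pointwise limit $g = f - f(x_0)$ must be linear on $\Omega$ (and hence of the form $\mbf{A}x$) before extracting $\lim_k \mbf{A}^k$, whereas you establish convergence of $\mbf{A}^k$ and $\bb^k$ directly from the $n+1$ evaluation points and then read off the affine form of $f$; your ordering is in fact slightly tighter, since it sidesteps the paper's step of promoting linearity of $g$ on combinations that happen to stay in $\Omega$ to a global matrix representation.
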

\begin{proof}
	Consider $x_0 \in \Omega'$, an open subset of $\Omega$ ($\Omega'$ exists since $\Omega$ is a non-empty interior subset of $\RR^n$). Define $g^k(x) = f^k(x) - f^k(x_0)$ and $g(x) = f(x) - g(x_0)$. The function $g^k$ is linear and $g^k$ converges pointwise to $g$ on $\Omega$ (and thus, on $\Omega'$). We first prove that $g$ is linear. Indeed, for any $x, y \in \Omega, \alpha, \beta \in \mb{R}$ such that $\alpha x + \beta y \in \Omega$, we have:
	\begin{equation*}
		\begin{aligned}
			g(\alpha x + \beta y) &= \lim_{k \to \infty} g^k(\alpha x + \beta y)\\
			&= \lim_{k \to \infty} \alpha g^k(x) + \beta g^k(y)\\
			&= \alpha \lim_{k \to \infty} g^k(x) + \beta \lim_{k \to \infty} g^k(y)\\
			&= \alpha g(x) + \beta g(y)
		\end{aligned}
	\end{equation*}
	Therefore, there must exist $\mbf{A} \in \mb{R}^{m \times n}$ such that $g(x) = \mbf{A}x$. Choosing $\bb := g(x_0)$, we have $f(x) = g(x) + g(x_0) = \mbf{A}x + \bb$.
		
	Moreover, since $\Omega'$ is open, there exists a positive $r$ such that the ball $\mc{B}(x,r) \subseteq \Omega'$. Choosing $x_i = x_0 + (r/2)  \mbf{e}_i$ with $\mbf{e}_i$ the $i$th canonical vector, we have:
	\begin{equation*}
		\begin{aligned}
			\lim_{k \to \infty} g^k(x_i) =  \lim_{k \to \infty} (r/2) \mbf{A}^k \mbf{e}_i = (r/2)\mbf{A}e_i,
		\end{aligned}
	\end{equation*} 
	or, equivalently, the $i$th column of $\mbf{A}$ is the limit of the sequence generated by the $i$th column of $\mbf{A}^k$. Repeating this argument for all $1 \leq i \leq n$, we have $\lim_{k \to \infty} \mbf{A}^k = \mbf{A}$. This also implies $\lim_{k \to \infty} \bb^k = \bb$ immediately.
\end{proof}

\section{Closedness does not imply the best approximation property}
\label{app:BAPvsClosedness}
Since we couldn't find any source discussing the fact that closedness does not imply the BAP, we provide an example to show this fact.

Consider $C^0([-1,1])$ the set of continuous functions on the interval $[-1,1]$, equipped with the norm sup $\|f\|_\infty = \max_{x \in [-1,1]} |f(x)|$, and define $S$, the subset of all functions $f \in C^0([-1,1])$ such that:
\begin{equation*}
    \int_0^1 f \,dx - \int_{-1}^0 f \,dx = 1
\end{equation*}
It is easy to verify that $S$ is closed. We show that the constant function $f = 0$ does not have a projection in $S$ (i.e., a function $g \in S$ such that $\|f - g\|_\infty = \inf_{h \in S} \|f - h\|_\infty$). 

First we observe that since $f = 0$, we have $\|f - h\|_\infty = \|h\|_\infty$ for each $h \in S$, and we show that $\inf_{h \in S} \|f - h\|_\infty \geq 1/2$. Indeed, for $h \in S$ we have:
\begin{equation}
    \label{eq:inequalityintegral}
    1 = \int_0^1 h \,dx - \int_{-1}^0 h \,dx \leq \left|\int_0^1 h \,dx\right| + \left|\int_{-1}^0 h \,dx\right| \leq 2\|h\|_\infty = 2 \|f-h\|_\infty.
\end{equation}
Secondly, we show a sequence of $\seqk{h_n}$ such that $h_n \in S$ and $\lim_{n \to \infty}\|h_n\|_\infty = 1/2$. Consider the odd function $h_n$ (i.e. $h_n(x) = - h_n(-x)$) such that:
\begin{equation*}
    h_n(x) = \begin{cases}
        c_n, &x \in [1/n, 1]\\
        nc_nx & x \in [0, 1/n)
    \end{cases}
\end{equation*}
where $c_n = n/(2n-1)$. It is evident that $h_n \in S$ because:
\begin{equation*}
    \begin{aligned}
        \int_0^1 h_n \,dx - \int_{-1}^0 h_n \,dx = 2\int_0^1 h_n\,dx
        &= 2\left(\int_0^{1/n} h_n\,dx + \int_{1/n}^1 h_n\,dx\right)\\
        &= 2\left(\frac{c_n}{2n} + \frac{c_n(n-1)}{n}\right) = \frac{c_n(2n-1)}{n} = 1\\
    \end{aligned}
\end{equation*}
Moreover, we also have $\lim_{n \to \infty} \|h_n\|_\infty = \lim_{n \to \infty} c_n = 1/2$.

Finally, we show that $1/2$ cannot be attained. By contradiction, assume that there exists $g \in S$ such that $\|f-g\|_\infty = 1/2$, i.e., as we have seen, $\|g\|_\infty = 1/2$. Using \Cref{eq:inequalityintegral}, the equality will only hold if $g(x) = 1/2$ in $[0,1]$ and $g(x) = -1/2$ in  $[-1,0]$. However, $g$ is not continuous, a contradiction.
\end{document}